\newcommand{\RomanNumeralCaps}[1]{\MakeUppercase{\romannumeral #1}} % roman numerals in caps
\newcommand{\T}[2][]{#1\mathscr{\MakeUppercase{#2}}}
\newcommand{\Expect}{\operatorname{\mathbb{E}}}
\newcommand{\Prob}[1]{\mathbb{P}\left\{ #1 \right\}}
\newcommand{\RR}{\mathbb{R}}
\newcommand{\norm}[1]{\Vert #1 \Vert}
\newcommand{\fnorm}[1]{\norm{#1}_{\mathrm{F}}}
\newcommand{\snorm}[1]{\norm{#1}_\square}
\newcommand{\nucnorm}[1]{\norm{#1}_\star}
\newcommand{\maxnorm}[1]{\norm{#1}_{\infty}}
\newcommand{\obs}[1]{#1_{\mathrm{obs}}}
\newcommand{\rank}[1]{\mathrm{rank}({#1})}
\def\Ah{\smash{\widehat A}}
\def\Uh{\smash{\widehat U}}
\def\uh{\smash{\widehat u}}
\def\Vh{\smash{\widehat V}}
\def\Xh{\widehat X}
\def\Th{\widehat T}
\def\Ph{\widehat P}
\def\vh{\smash{\widehat v}}
\theoremstyle{plain}
\newtheorem{theorem}{Theorem}
\newtheorem{lemma}[theorem]{Lemma}
\definecolor{medblue}{rgb}{0,0,.75}
\algrenewcommand\alglinenumber[1]{\sf\tiny\color{medblue}{#1}\quad}
\algrenewcommand\algorithmicrequire{\textbf{Input:}}
\algrenewcommand\algorithmicensure{\textbf{Output:}}
\def \figurepath {figures/}
\title{TenIPS: Inverse Propensity Sampling for Tensor Completion}
\date{}
\author{Chengrun Yang\footnote{School of Electrical and Computer Engineering, Cornell University, Ithaca, NY, USA; \texttt{cy438@cornell.edu}}, Lijun Ding\footnote{School of Operations Research and Information Engineering, Cornell University,
		Ithaca, NY, USA; \texttt{ld446@cornell.edu}}, Ziyang Wu\footnote{Department of Computer Science, Cornell University,
		Ithaca, NY, USA;
		\texttt{zw287@cornell.edu}}, Madeleine Udell\footnote{School of Operations Research and Information Engineering, Cornell University,
		Ithaca, NY, USA;
		\texttt{udell@cornell.edu}}}
\begin{document}

\maketitle

\begin{abstract}
Tensors are widely used to represent multiway arrays of data.
The recovery of missing entries in a tensor has been extensively studied,
generally under the assumption that entries are missing completely at random (MCAR).
However, in most practical settings, observations are missing not at random (MNAR):
the probability that a given entry is observed (also called the propensity)
may depend on other entries in the tensor
or even on the value of the missing entry.
In this paper, we study the problem of completing a partially observed tensor with MNAR observations,
without prior information about the propensities.
To complete the tensor,
we assume that both the original tensor and the tensor of propensities have low multilinear rank.
The algorithm first estimates the propensities using a convex relaxation
and then predicts missing values using a higher-order SVD approach, reweighting the observed tensor by the inverse propensities.
We provide finite-sample error bounds on the resulting complete tensor.
Numerical experiments demonstrate the effectiveness of our approach.
\end{abstract}

\section{Introduction}
Tensor completion is gaining increasing popularity and is one of the major tensor-related research topics.
The literature we survey here is by no means exhaustive.
A straightforward method is to flatten a tensor along one of its dimensions to a matrix and then pick one of the extensively studied matrix completion algorithms \cite{candes2010power, negahban2012restricted, cai2013max}.
However, this method neglects the multiway structure along all other dimensions and does not make full use of the combinatorial relationships.
Instead, it is common to assume that the tensor is low rank along each mode.
Tensors differ from matrices in having many incompatible notions of
rank and low rank decompositions,
including CANDECOMP/PARAFAC (CP) \cite{carroll1970analysis, harshman1970foundations},
Tucker \cite{tucker1966some}
and tensor-train \cite{oseledets2011tensor}.
Each of the decompositions exploits a different definition of tensor rank,
and can be used to recover tensors that are low rank in that sense, including
CP \cite{krishnamurthy2013low, jain2014provable, barak2016noisy, ashraphijuo2017fundamental, ghadermarzy2018learning, liu2020tensor}, Tucker \cite{gandy2011tensor, mu2014square, xia2017statistically, yokota2018missing, zhang2019cross, huang2020hosvd} and tensor-train \cite{wang2016tensor, yuan2018high}.
In this paper, we assume the tensor has approximately low multilinear rank, corresponding to a tensor that can be approximated by a low rank Tucker decomposition.

Existing techniques used for tensor completion include subspace projection onto unfoldings \cite{krishnamurthy2013low}, alternating minimization \cite{jain2014provable, liu2020tensor, wang2016tensor}, gradient descent \cite{yuan2018high} and expectation-maximization \cite{yokota2018missing}; 
different surrogates for the rank penalty have been used, including convex surrogates like nuclear norm on unfoldings \cite{tomioka2010estimation, gandy2011tensor, aidini20181} or specific flattenings \cite{mu2014square} and the maximum norm on factors \cite{ghadermarzy2018learning}, and nonconvex surrogates such as the minimum number of rank-1 sign tensor components \cite{ghadermarzy2018learning}.
We use a higher-order SVD (HOSVD) approach that does not require rank surrogates. 
The two methods closest to ours are \textsc{HOSVD\_w} \cite{huang2020hosvd}, which computes a weighted HOSVD by reweighting the tensor twice (before and after the HOSVD) by the inverse square root propensity, and the method in \cite{xia2017statistically}, which computes a HOSVD on a second-order estimator of a missing completely at random tensor, and we call it \textsc{SO-HOSVD}. 
We compare our method with \textsc{HOSVD\_w} and \textsc{SO-HOSVD} theoretically in Section~\ref{sec:error-in-tensor-completion-informal} and numerically in Section~\ref{sec:experiments-synthetic}.

Most previous works on tensor completion have used the assumption that entries are missing completely at random (MCAR).
The missing not at random (MNAR) setting is less studied,
especially for tensors.
A missingness pattern is MNAR when the observation probabilities (also called propensities) of different entries are not equal and may depend on the entry values themselves.
In the matrix MNAR setting, a popular observation model is 1-bit observations \cite{davenport20141}:
each entry is observed with a probability that comes from applying a differentiable function $\sigma: \RR \rightarrow [0, 1]$ to the corresponding entry in a parameter matrix, which is assumed to be low rank.
Two popular convex surrogates for the rank have been used to
estimate the parameter matrix from an entrywise binary observation pattern using a regularized likelihood approach: nuclear norm \cite{davenport20141, ma2019missing, aidini20181}
and max-norm \cite{cai2013max, ghadermarzy2018learning}.
We show that we can achieve a small propensity estimation error by solving for a parameter tensor with low multilinear rank using a (roughly) square flattening.
This approach outperforms simple slicing or flattening methods.

In this paper, we study the problem of provably completing a MNAR tensor with (approximately) low multilinear rank.
We use a two-step procedure to first estimate the propensities in this tensor and then predict missing values by HOSVD on the inverse propensity reweighted tensor. 
We give the error bound on final estimation as Theorem~\ref{thm:main-theorem-informal}.

This paper is organized as follows.
Section~\ref{sec:notations} sets up our notations.
Section~\ref{sec:problem_setting} formally describes the problem we tackle in this paper.
Section~\ref{sec:methodology} gives an overview of our algorithms.
Section~\ref{sec:error_analysis} gives further clarification and finite sample error bounds for the algorithms.
Section~\ref{sec:experiments} shows numerical experiments.

\section{Notations}
\label{sec:notations}
\paragraph{Basics}
We define $[N] = \{1, \ldots, N\}$ for a positive integer $N$.
Given a set $S$, we denote its cardinality by $|S|$.
$\subset$ denotes strict subset. 
We denote $f(n) = O(g(n))$ if there exists $C>0$ and $N$ such that $|f(n)| \leq C g(n)$ for all $n \geq N$. 
The indicator function $\mathbf{1}(X)$ has value 1 if the condition $X$ is true, and 0 otherwise.

\paragraph{Matrices and tensors}
We denote \textit{vector}, \textit{matrix}, and \textit{tensor} variables
respectively by lowercase letters ($x$), capital letters ($X$) and Euler script letters ($\T{X}$).
For a matrix $X \in \RR^{m \times n}$, $\sigma_1 (X) \geq \sigma_2 (X) \geq \cdots \geq \sigma_{\min \{m, n\}} (X)$ denote its singular values, $\norm{X}$ denotes its 2-norm, $\nucnorm{X}$ denotes its nuclear norm, $\mathrm{tr}(X)$ denotes its trace, and with another matrix $Y \in \RR^{m \times n}$, $\langle X, Y \rangle := \mathrm{tr}(X^\top Y)$ denotes the matrix inner product.
For a tensor $\T{X} \in \mathbb{R}^{I_1 \times I_2 \times \cdots \times I_N}$, $\maxnorm{\T{X}}$ denotes its entrywise maximum absolute value.
The \emph{order} of a tensor is the number of dimensions; matrices are order-two tensors.
Each dimension is called a \emph{mode}.
To denote a part of matrix or tensor, we use a colon to denote the mode that is not fixed:
given a matrix $A \in \mathbb{R}^{I \times J}$, $A_{i, :}$ and $A_{:, j}$ denote the $i$th row and $j$th column of $A$, respectively.
A \emph{fiber} is a one-dimensional section of a tensor $\T{X}$, defined by fixing every index but one; for example, a fiber of the order-3 tensor $\T{X}$ is $X_{:, j, k}$.
A \emph{slice} is an $(N-1)$-dimensional section of an order-$N$ tensor $\T{X}$: a slice of the order-3 tensor $\T{X}$ is $X_{:, :, k}$.
The \emph{size} of a mode is the number of slices along that mode: the $n$-th mode of $\T{X}$ has size $I_n$. 
A tensor is \emph{cubical} if every mode is the same size: $\T{X} \in \RR^{I \times I \times \cdots \times I}$.
The \emph{mode-$n$ unfolding} of $\T{X}$, denoted as $\T{X}^{(n)}$, is a matrix whose columns are the mode-$n$ fibers of $\T{X}$.
For example, given an order-3 tensor $\T{X} \in \mathbb{R}^{I \times J \times K}$, $\T{X}^{(1)} \in \mathbb{R}^{I \times (J \times K)}$.

\paragraph{Products}
We denote the \emph{$n$-mode product} of a tensor $\T{X} \in \mathbb{R}^{I_1 \times I_2 \times \cdots \times I_N}$ with a matrix $U \in \mathbb{R}^{J \times I_n}$ by $\T{X} \times_n U \in \mathbb{R}^{I_1 \times \cdots \times I_{n-1} \times J \times I_{n+1} \times \cdots \times I_N}$; the $(i_1, i_2, \dots, i_{n-1}, j, i_{n+1}, \dots, i_N)$-th entry is $\Sigma_{i_n = 1}^{I_n} x_{i_1 i_2 \cdots i_{n-1} i_n i_{n+1} \cdots i_N} u_{j i_n}$.
$\otimes$ denotes the Kronecker product. 
Given two tensors with the same shape, we use $\odot$ to denote their entrywise product.

\paragraph{Missingness}
Given a partially observed order-$N$ tensor $\T{X} \in \mathbb{R}^{I_1 \times \cdots \times I_N}$, we denote its observation pattern by $\Upomega \in \{0, 1\}^{I_1 \times \cdots \times I_N}$:  the \emph{mask tensor} of $\T{X}$. It is a binary tensor that denotes whether each entry of $\T{X}$ is observed or not.
$\Upomega$ has the same shape as $\T{X}$, with entry value 1 if the corresponding entry of $\T{X}$ is observed, and 0 otherwise.
With an abuse of notation, we call $\Omega := \{(i_1, i_2, \dots, i_N) | \Upomega_{i_1, i_2, \dots, i_N} = 1\}$ the \emph{mask set} of $\T{X}$.
Given a tensor $\T{X} \in \mathbb{R}^{I_1 \times I_2 \times \cdots \times I_N}$, we use $\T{E}(i_1, i_2, \dots, i_N)$ to denote a binary tensor with the same shape as $\T{X}$, with value 1 at the $(i_1, i_2, \dots, i_N)$-th entry and 0 elsewhere.

\paragraph{Square unfoldings}
Extending the notation of \cite{mu2014square}, with a matrix $A \in \mathbb{R}^{I \times J}$ and integers $I', J'$ satisfying $IJ = I' J'$, $\texttt{reshape} (A, I', J')$ gives an $I' \times J'$ matrix $A'$ with entries taken columnwise from $A$.
Given a tensor $\T{X} \in \mathbb{R}^{I_1 \times I_2 \times \cdots \times I_N}$, we can partition the indices of its $N$ modes into two sets, $S$ and $S^C = [N] \setminus S$, 
and permute the order of the $N$ modes by permutation $\pi_S = (S_1, \ldots, S_{|S|}, S^C_1, \ldots, S^C_{N - |S|})$, 
so that the modes in set $S$
appear first, followed by modes in $S^C$:
\[
\T{X}_{i_1 \cdots i_N} = \pi_S(\T{X})_{i_{S_1}} \cdots i_{S_{|S|}} i_{S^C_1} \cdots i_{S^C_{N - |S|}}.
\]
Denote the \emph{$S$-unfolding} of $\T{X}$ as 
\[
\T{X}_S := \texttt{reshape} (\pi_S(\T{X})^{(1)}, \prod_{n \in S} I_n, \prod_{n \in S^C} I_n).
\]
Our methods for tensor completion rely on methods for matrix completion that work best for square matrices. To make $\T{X}_S$ as square as possible, $\Bigl |\prod_{n \in S} I_n -  \prod_{n \in S^C} I_n\Bigr |$ should be as small as possible. Hence we define the \emph{square set} of $\T{X}$ as
\begin{equation}
	S_\square = \underset{S \subset [N]}{\text{arg min}} \Bigl |\prod_{n \in S} I_n -  \prod_{n \in S^C} I_n\Bigr |,
	\nonumber
\end{equation}
the \emph{square unfolding} of $\T{X}$ as
\begin{equation}
	\T{X}_{\square} := \texttt{reshape} (\pi_{S_\square} (\T{X})^{(1)}, \prod_{n \in S_\square} I_n, \prod_{n \in S^C_\square} I_n),
	\nonumber
\end{equation}
and the \emph{square norm} $\snorm{\T{X}} := \nucnorm{\T{X}_\square}$  of $\T{X}$.

\paragraph{Dimensions of unfoldings}
\label{sec:dims-of-unfoldings}
For brevity, we denote $I_S := \prod_{n \in S} I_n$, $I_{S^C} := \prod_{n \in S^C} I_n$, $I_\square := \prod_{n \in S_\square} I_n$, $I_{\square^C} := \prod_{n \in S_\square^C} I_n$, $ I_{(-n)}:= \prod_{m \in [N], m \neq n} I_m$.
Thus $I_{[N]} = \prod_{n \in [N]} I_n = I_S \cdot I_{S^C} = I_\square \cdot I_{\square^C} = I_n \cdot I_{(-n)}$.

\section{Problem setting}
\label{sec:problem_setting}
In this paper, we study the following problem:
given a partially observed tensor $\obs{\T{B}} \in \mathbb{R}^{I_1 \times \cdots \times I_N}$
with MNAR entries, how can we recover its missing values?

Throughout the paper, we denote the true order-$N$ tensor we want to complete as $\T{B} \in \mathbb{R}^{I_1 \times \cdots \times I_N}$.
For each $n \in [N]$, we suppose $I_n \leq I_{(-n)}:= \prod_{m \in [N], m \neq n} I_m$ for cleanliness. 
We assume there exists a propensity tensor $\T{P}  \in \mathbb{R}^{I_1 \times I_2 \times \cdots I_N}$, such that $\T{B}_{i_1 i_2 \cdots i_N}$ is observed with probability $\T{P}_{i_1 i_2 \cdots i_N}$.
We observe the entries without noise: with the observation pattern $\Upomega$, $\T{B}_\mathrm{obs} = \T{B} \odot \Upomega$.

A tensor $\T{B}$ has \emph{multilinear rank} $(r_1^\mathrm{true}, r_2^\mathrm{true}, \ldots, r_N^\mathrm{true})$ if $r_n^\mathrm{true}$ is the rank of $\T{B}^{(n)}$.
For any $n \in [N]$, $r_n^\mathrm{true} \leq I_n $.
We can write the \emph{Tucker decomposition} of the tensor $\T{B}$ as $\T{B} = \T{G}^\mathrm{true} \times_1 U_1^\mathrm{true} \times \cdots \times_N U_N^\mathrm{true}$, with \emph{core tensor} $\T{G}^\mathrm{true} \in \mathbb{R}^{r_1^\mathrm{true} \times \cdots \times r_N^\mathrm{true}}$ and column orthonormal \emph{factor matrices} $U_n^\mathrm{true} \in \mathbb{R}^{I_n \times r_n^\mathrm{true}}$ for $n \in [N]$.

We seek a \emph{fixed-rank approximation} of $\T{B}$ by a tensor with multilinear rank $(r_1, r_2, \cdots, r_N)$: we want to find a core tensor $\T{W} \in \mathbb{R}^{r_1 \times r_2 \times \cdots \times r_N}$ and $N$ factor matrices $Q_n \in \mathbb{R}^{I_n \times r_n}$, $n \in [N]$ with orthonormal columns, such that $\T{B} \approx \T{W} \times_1 Q_1 \times_2 \cdots \times_N Q_N$. We generally seek a low multilinear rank decomposition with $r_n < I_n $.

\section{Methodology}
\label{sec:methodology}
\begin{algorithm}[t]
\caption{\textsc{ConvexPE}: Convex Propensity Estimation}
\label{alg:propensity_estimation_provable}
\begin{algorithmic}[1]
	\Require{mask tensor $\Upomega \in \mathbb{R}^{I_1 \times \cdots \times I_N}$,
		link function $\sigma$,
		 thresholds $\tau$, $\gamma$}
	\Ensure{estimated propensity tensor $\widehat{\T{P}}$}
	\State Compute $S_\square$, the square set of $\Upomega$.
	\State Compute best completion
	$\widehat{\T{A}}_\square = \underset{\Gamma\in\mathcal{S}_{\tau,\gamma}}{\text{argmax}}\sum_{i=1}^{I_\square} \sum_{j=1}^{I_{\square^C}} [(\Upomega_\square)_{i j}\log \sigma(\Gamma_{i j})+ (1-(\Upomega_\square)_{i j})\log(1-\sigma(\Gamma_{i j}))]$,
	
	where $\mathcal{S}_{\tau,\gamma} = \big\{ \Gamma\in\mathbb{R}^{I_\square \times I_{\square^C}}: \nucnorm{\Gamma} \leq \tau\sqrt{I_{[N]}}, \maxnorm{\Gamma} \leq \gamma \big\}$.
	\State Estimate propensities $\widehat{\T{P}} = \sigma(\widehat{\T{A}})$.
	\State \Return{$\widehat{\T{P}}$}
\end{algorithmic}
\end{algorithm}

\begin{algorithm}[t]
	\caption{\textsc{NonconvexPE}: Nonconvex Propensity Estimation}
\label{alg:propensity_estimation_alt}
\begin{algorithmic}[1]
	\Require{mask tensor $\Upomega \in \mathbb{R}^{I_1 \times \cdots \times I_N}$, 
		link function $\sigma$, 
		step size $t$, 
		initialization $\{\bar{\T{G}}^{\T{A}}, \bar{U}_1^{\T{A}}, \ldots, \bar{U}_N^{\T{A}}\}$ (or target rank $(r_1, \ldots, r_N)$ with a certain initialization rule)}
	\Ensure{estimated propensity tensor $\widehat{\T{P}}$}
	\State Initialize core tensor and factor matrices $\T{G}^{\T{A}}, U_1^{\T{A}}, \ldots, U_N^{\T{A}} \gets \bar{\T{G}}^{\T{A}}, \bar{U}_1^{\T{A}}, \ldots, \bar{U}_N^{\T{A}}$.
	\State Define objective
	\begin{equation*}
		\begin{aligned}
			\hspace{1em} & f (\T{G}^{\T{A}}, \{U_n^{\T{A}}\}_{n \in [N]}) := \sum_{i_1 \cdots i_N} -\Upomega_{i_1 \cdots i_N} \log \sigma(\widehat{\T{A}}_{i_1 \cdots i_N}) -(1-\Upomega_{i_1 \cdots i_N})\log(1-\sigma(\widehat{\T{A}}_{i_1 \cdots i_N})),\\
			& \text{in which } \widehat{\T{A}} = \T{G}^{\T{A}} \times_1 U_1^{\T{A}} \times_2 \cdots \times_N U_N^{\T{A}}.
		\end{aligned}
	\end{equation*}
	\Do
	\State Compute gradients with respect to core tensor and factor matrices
	$\left(\frac{\partial f}{\partial \T{G}^{\T{A}}}, 
	\frac{\partial f}{\partial U_1^{\T{A}}},\ldots,\frac{\partial f}{\partial U_N^{\T{A}}}\right)$.
	\State Perform gradient descent update: $\T{G}^{\T{A}}, U_1^{\T{A}}, \ldots, U_N^{\T{A}} \gets \left(\T{G}^{\T{A}} - t \frac{\partial f}{\partial \T{G}^{\T{A}}}, U_1^{\T{A}} - t \frac{\partial f}{\partial U_1^{\T{A}}}, \ldots, U_N^{\T{A}}- t \frac{\partial f}{\partial U_N^{\T{A}}}\right)$.
	\doWhile{not converged}
	\State Estimate propensities $\widehat{\T{P}} = \sigma(\widehat{\T{A}})$.
	\State \Return{$\widehat{\T{P}}$}
	%		\EndFunction
\end{algorithmic}
\end{algorithm}

\begin{algorithm}[t]
	\caption{\textsc{TenIPS}: Tensor completion by Inverse Propensity Sampling}
	\label{alg:tensor_completion}
	\begin{algorithmic}[1]
		\Require{mask set $\Omega$, partially observed tensor $\T{B}_\text{obs} \in \mathbb{R}^{I_1 \times \cdots \times I_N}$, propensity tensor $\T{P} \in \mathbb{R}^{I_1 \times \cdots \times I_N}$, target rank $(r_1, r_2, \cdots, r_N)$}
		\Ensure{estimated tensor $\widehat{\T{X}}(\T{P})$}
%		\State $\Omega \gets$ mask set of $\T{B}_\text{obs}$
		\State
		$ \bar{\T{X}}(\T{P}) \gets \sum_{(i_1, i_2, \ldots, i_N) \in \Omega} \frac{1}{\T{P}_{i_1 i_2 \cdots i_N}} \T{B}_\text{obs} \odot \T{E}(i_1, i_2, \dots, i_N)$
		\For{$n=1, 2, \ldots, N$}
		\Comment{Recover factors}
		\State $Q_n(\T{P}) \gets \text{left $r_n$ singular vectors of } \bar{\T{X}}(\T{P})^{(n)}$
		\EndFor
		\State $\T{W} \leftarrow \bar{\T{X}}(\T{P}) \times_1 Q_1(\T{P})^\top \times_2 \cdots \times_N Q_N(\T{P})^\top$
		\Comment{Recover core}
		\State $\widehat{\T{X}}(\T{P}) \gets \T{W} (\T{P}) \times_1 Q_1 (\T{P}) \times_2 \cdots \times_N Q_N(\T{P})$
		\State \Return{$\widehat{\T{X}}(\T{P})$}
		%		\EndFunction
	\end{algorithmic}
\end{algorithm}

\begin{table}[t]
	\centering
	\caption{Propensity estimation algorithms.}	
	\begin{tabular}{lll}
		\toprule
		~ & base algorithm & hyperparameters \\
		\midrule
		\textsc{ConvexPE} (Algorithm~\ref{alg:propensity_estimation_provable}) & proximal-proximal-gradient & $\tau$ and $\gamma$ \\
		\textsc{NonconvexPE} (Algorithm~\ref{alg:propensity_estimation_alt}) & gradient descent & step size $t$ and target rank\\
		\bottomrule
	\end{tabular}
	\label{table:PE_algorithms}
\end{table}

Our algorithm proceeds in two steps.
First, we estimate the propensities by \textsc{ConvexPE} (Algorithm~\ref{alg:propensity_estimation_provable}) or \textsc{NonconvexPE} (Algorithm~\ref{alg:propensity_estimation_alt}), with an overview in Table~\ref{table:PE_algorithms}.
Both of these algorithms use a Bernoulli maximum likelihood estimator for 1-bit matrix completion \cite{davenport20141} to estimate the propensities from the mask tensor $\Upomega$, aiming to recover propensities that come from the low rank parameters.
\textsc{ConvexPE} explicitly requires the propensities to be neither too large or too small. 
\textsc{NonconvexPE} does not require the associated tuning parameters, but empirically returns a good solution if the true propensity tensor $\T{P}$ has this property.
With the estimated propensity tensor $\widehat{\T{P}}$, we estimate the data tensor $\T{B}$ by \textsc{TenIPS} (Algorithm~\ref{alg:tensor_completion}), a procedure that only requires a Tucker decomposition on the propensity-reweighted observations.

Our propensity estimation uses the observation model of 1-bit matrix completion. 
Each entry of $\T{P}$ comes from applying a differentiable link function $\sigma: \RR \rightarrow [0, 1]$ to the corresponding entry of a parameter tensor $\T{A}$, which we are trying to solve.
An instance is the logistic function $\sigma(x)=1/(1+e^{-x})$.
We assume $\T{A}$ has low multilinear rank.
In \textsc{ConvexPE} (Algorithm~\ref{alg:propensity_estimation_provable}), $\T{A}_\square$ is low-rank from Lemma~\ref{lem:tucker_unfolding}.
We also assume an upper bound on the nuclear norm of $\T{A}_\square$, a convex surrogate for its low-rank property.
\textsc{ConvexPE} can be implemented by the proximal-proximal-gradient method (PPG) \cite{ryu2017proximal} or the proximal alternating gradient descent.
In Section~\ref{sec:error_analysis}, we will show that on a square tensor, the square unfolding achieves the smallest upper bound for propensity estimation among all possible unfoldings.

In practice, the square unfolding of a tensor is often a large matrix: $I^{N/2}$-by-$I^{N/2}$ for a cubical tensor with order $N$. 
Since each iteration of the PPG subroutine in \textsc{ConvexPE} requires the computation of a truncated SVD, this algorithm becomes too expensive in such case.
Also, it does not make full use of the low multilinear rank property of $\T{A}$. 
As a substitute, we propose \textsc{NonconvexPE} (Algorithm~\ref{alg:propensity_estimation_alt}), which uses gradient descent (GD) on the core tensor $\T{G}^{\T{A}}$ and factor matrices $\{U_n^{\T{A}}\}_{n \in [N]}$ to minimize the objective function $f (\T{G}^{\T{A}}, \{U_n^{\T{A}}\}_{n \in [N]})$ defined in Line~4. 
It achieves a feasible solution with similar quality as \textsc{ConvexPE}, and does not require the tuning of thresholds $\tau$ and $\gamma$. 
This can be attributed to the fact that the objective function $f$ is multi-convex with respect $(\T{G}^{\T{A}}, U_1^{\T{A}}, \ldots, U_N^{\T{A}})$. 
The gradient computation can be found in Appendix~\ref{sec:gradients}. 

\textsc{TenIPS} (Algorithm~\ref{alg:tensor_completion}) completes the observed data tensor $\T{B}_\mathrm{obs}$ by HOSVD on its entrywise inverse propensity reweighting $\bar{\T{X}}(\T{P})$, as defined in Line~2. 
For each $(i_1, i_2, \dots, i_N) \in \Omega$, the corresponding term in $\bar{\T{X}}(\T{P})$ is an unbiased estimate for $ \T{B} \odot \T{E}(i_1, i_2, \dots, i_N)$: 
\begin{equation}
	\begin{aligned}
		\Expect [\frac{1}{\T{P}_{i_1 i_2 \cdots i_N}} \T{B}_\text{obs} \odot \T{E}(i_1, i_2, \dots, i_N)] & = \T{P}_{i_1 i_2 \cdots i_N} \cdot \frac{1}{\T{P}_{i_1 i_2 \cdots i_N}} \T{B}_\text{obs} \odot \T{E}(i_1, i_2, \dots, i_N)\\
		& = \T{B} \odot \T{E}(i_1, i_2, \dots, i_N),
	\end{aligned}
	\nonumber
\end{equation}
in which the second equality comes from noiseless observations.
Thus $\bar{\T{X}}(\T{P})$ is an unbiased estimator for $\T{B}$: 
\begin{equation}
	\begin{aligned}
		\Expect \bar{\T{X}}(\T{P}) & = \Expect \Big[\sum_{(i_1, i_2, \dots, i_N) \in \Omega} \frac{1}{\T{P}_{i_1 i_2 \cdots i_N}} \T{B}_\text{obs} \odot \T{E}(i_1, i_2, \dots, i_N)\Big]\\
		& = \sum_{i_1=1}^{I_1} \sum_{i_2=1}^{I_2} \cdots \sum_{i_N=1}^{I_N} \Expect \Big[\frac{1}{\T{P}_{i_1 i_2 \cdots i_N}} \T{B}_\text{obs} \odot \T{E}(i_1, i_2, \dots, i_N) \Big]\\
		& = \sum_{i_1=1}^{I_1} \sum_{i_2=1}^{I_2} \cdots \sum_{i_N=1}^{I_N} \T{B} \odot \T{E}(i_1, i_2, \dots, i_N) = \T{B}.
	\end{aligned}
	\nonumber
\end{equation}
The input propensity tensor can be either true ($\T{P}$) or estimated ($\widehat{\T{P}}$). 
With the estimated propensity $\widehat{\T{P}}$, we get $\bar{\T{X}}(\widehat{\T{P}})$ instead of $\bar{\T{X}}(\T{P})$, $\widehat{\T{X}}(\widehat{\T{P}})$ instead of $\widehat{\T{X}}(\T{P})$;
for brevity, we denote $\bar{\T{X}}(\T{P})$ and $\T{\Xh}(\T{P})$ by $\bar{\T{X}}$ and $\T{\Xh}$, respectively.
We show the estimation error for $\T{B}$ in Theorem~\ref{thm:main-theorem-informal}.

\section{Error analysis}
\label{sec:error_analysis}

To bound the relative estimation error $\fnorm{\widehat{\T{X}}(\widehat{\T{P}}) - \T{B}} / \fnorm{\T{B}}$,
we first bound the error in the propensity estimates in \textsc{ConvexPE},
and then consider how this error propagates into the error of our final tensor estimate in \textsc{TenIPS}.
Theorem~\ref{thm:square_unfolding_for_general_matrices} shows the optimality of the square unfolding for propensity estimation; Theorem~\ref{thm:main-theorem-informal} presents a special case of our bound on the tensor completion error with estimated propensities, with the full version as Appendix~\ref{sec:error-in-tensor-completion-formal}, Theorem~\ref{thm:main-theorem-formal}.
We defer their proofs to Appendix~\ref{sec:proof-for-main-theorem}.

\subsection{Error in propensity estimates}
\label{sec:propensity-estimation}
We first show a corollary of \cite[Lemma 6 (2) and Lemma 7]{mu2014square} that bounds the rank of an unfolding.
\begin{lemma}
	\label{lem:tucker_unfolding}
	Suppose $\T{X}$ has Tucker decomposition
	$\T{X}=\T{C}\times_1 U_1 \times_2 U_2 \times_3 \cdots \times_N U_N$,
	where $\T{C} \in \RR^{r_1^\mathrm{true} \times r_2^\mathrm{true} \times \cdots \times r_N^\mathrm{true}}$
	and $U_n \in \RR^{I_n \times r_n^\mathrm{true}}$ for $n \in [N]$.
	Given $S \subset [N]$, $\T{X}_S = \underset{j \in S}{\otimes} U_j \cdot \T{C}_S \cdot \Big(\underset{j \in [N] \backslash S}{\otimes} U_j\Big)^\top$, and thus $\mathrm{rank}( \T{X}_S) \leq \min \Bigl\{\prod_{n \in S} r_n^\mathrm{true}, \, \prod_{n \in [N] \backslash S} r_n^\mathrm{true} \Big\}$.
\end{lemma}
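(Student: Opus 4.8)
The plan is to compute the $S$-unfolding $\T{X}_S$ explicitly as a matrix product and then read the rank bound off that factorization. The argument has three parts: (i) reduce to the case where $S$ is an initial segment of $[N]$; (ii) prove the identity $\T{X}_S = \big(\bigotimes_{j\in S} U_j\big)\,\T{C}_S\,\big(\bigotimes_{j\in [N]\setminus S} U_j\big)^{\!\top}$; and (iii) bound $\mathrm{rank}(\T{X}_S)$ using the size of $\T{C}_S$ together with monotonicity of rank under matrix multiplication.

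For (i), I would observe that $n$-mode products along distinct modes commute, so applying the mode permutation $\pi_S$ to $\T{X}$ merely relabels which factor matrix acts in which mode: $\pi_S(\T{X}) = \pi_S(\T{C}) \times_1 U_{S_1} \times_2 \cdots \times_{|S|} U_{S_{|S|}} \times_{|S|+1} U_{S^C_1} \times_{|S|+2} \cdots \times_N U_{S^C_{N-|S|}}$, which is immediate from the definition of the $n$-mode product in Section~\ref{sec:notations}. Since $\T{X}_S$ and $\T{C}_S$ are defined as $\texttt{reshape}$s of $\pi_S(\T{X})^{(1)}$ and $\pi_S(\T{C})^{(1)}$, it suffices to prove (ii) when $S=\{1,\dots,k\}$ and $\pi_S$ is the identity; the general case follows by relabeling the modes of $\T{X}$ and $\T{C}$ through $\pi_S$.

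For (ii), with $S=\{1,\dots,k\}$ I would compare both sides entrywise. Expanding the Tucker decomposition gives $\T{X}_{i_1\cdots i_N} = \sum_{j_1,\dots,j_N} \T{C}_{j_1\cdots j_N}\prod_{n=1}^N (U_n)_{i_n j_n}$. Flattening the multi-indices $(i_1,\dots,i_k)$ and $(i_{k+1},\dots,i_N)$ into single row and column indices by the column-major convention used by $\texttt{reshape}$, the $\big((i_1\cdots i_k),(i_{k+1}\cdots i_N)\big)$ entry of the right-hand side is, by the mixed-product rule for Kronecker products, $\sum_{j_1,\dots,j_N}\big(\prod_{n=1}^k (U_n)_{i_n j_n}\big)\,(\T{C}_S)_{(j_1\cdots j_k),(j_{k+1}\cdots j_N)}\,\big(\prod_{n=k+1}^N (U_n)_{i_n j_n}\big)$; since $(\T{C}_S)_{(j_1\cdots j_k),(j_{k+1}\cdots j_N)} = \T{C}_{j_1\cdots j_N}$ by the definition of the $S$-unfolding, the two expressions agree. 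The step requiring care — and the one I expect to be the main obstacle — is purely bookkeeping: one must check that the linear-index orderings used by $\texttt{reshape}$ in defining $\T{X}_S$ and $\T{C}_S$, and the orderings implicit in $\bigotimes_{j\in S} U_j$ and $\bigotimes_{j\in [N]\setminus S} U_j$, are the \emph{same} column-major ordering, so that the flattened indices on the two sides genuinely coincide. This is exactly the compatibility between reshaping and Kronecker products isolated in \cite{mu2014square}, which is why the statement is phrased as a corollary of the lemmas there.

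For (iii), $\T{C}_S$ is a matrix of size $\big(\prod_{n\in S} r_n^\mathrm{true}\big) \times \big(\prod_{n\in [N]\setminus S} r_n^\mathrm{true}\big)$, so $\mathrm{rank}(\T{C}_S) \le \min\{\prod_{n\in S} r_n^\mathrm{true},\, \prod_{n\in [N]\setminus S} r_n^\mathrm{true}\}$. Writing $\T{X}_S = M_L\,\T{C}_S\,M_R$ with $M_L = \bigotimes_{j\in S} U_j$ and $M_R = \big(\bigotimes_{j\in [N]\setminus S} U_j\big)^{\!\top}$ (both of compatible dimensions) and using $\mathrm{rank}(M_L\,\T{C}_S\,M_R) \le \mathrm{rank}(\T{C}_S)$, the claimed bound follows; no structural property of $M_L$ or $M_R$ is needed in this last step.
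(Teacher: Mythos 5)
Your proposal is correct and follows essentially the same route as the paper: reduce to the case where $S$ is an initial segment by permuting (relabeling) the modes, use the Kronecker-product factorization of the unfolding, and read the rank bound off the dimensions of $\T{C}_S$. The only difference is that the paper simply cites the factorization identity from \cite{mu2014square} (Lemma 6(2)) applied to the entry-reordered tensor, whereas you verify it entrywise; your flagged bookkeeping point about matching the column-major reshape with the (reversed) ordering of the Kronecker factors is exactly the content of that cited lemma and does not affect the rank conclusion.
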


\begin{proof}
	\cite[Lemma 6 (2)]{mu2014square} states that
	\begin{equation}
		\label{eq:tucker_unfolding_original}
		\T{X}_{[n]} = (U_n \otimes U_{n-1} \otimes \cdots \otimes U_1) \,\T{C}_{[n]}\, (U_N \otimes U_{N-1} \otimes \cdots \otimes U_{n+1})^\top
	\end{equation}
	for $n \in [N]$.
	Thus Lemma 1 holds for $\T{X}$ by applying
	Equation~\ref{eq:tucker_unfolding_original} to an entry-reordered tensor
	$\widetilde{\T{X}} \in \RR^{I_{n_1} \times I_{n_2} \times \cdots \times I_{n_N}}$,
	such that $S = \{n_j\}_{j \in [S]}$ and $\widetilde{\T{X}}_{i_{n_1} i_{n_2} \cdots i_{n_N}} = \T{X}_{i_1 i_2 \cdots i_N}$.
	The upper bound for $\mathrm{rank}( \T{X}_S)$ follows.
\end{proof}

As a corollary of \cite[Lemma 1]{davenport20141} and \cite[Theorem 2]{ma2019missing}, we have Lemma~\ref{lem:propensity_error} for the Frobenius norm error of the propensity tensor estimate.
\begin{lemma}
	\label{lem:propensity_error}
	Assume that $\T{P} = \sigma (\T{A})$.
	Given a set $S \subset [N]$, together with the following assumptions:
	\begin{itemize}[leftmargin=2em,topsep=0pt,partopsep=1ex,parsep=0ex]
		\item[\textbf{A1.}] $\T{A}_S$ has bounded nuclear norm: there exists a constant
		$\theta > 0 $ such that $\nucnorm{\T{A}_S} \leq \theta \sqrt{I_{[N]}}$.
		\item[\textbf{A2.}] Entries of $\T{A}$ have bounded absolute value: there exists a constant $\alpha>0$ such that $\maxnorm{\T{A}} \leq \alpha$.
	\end{itemize}
	Suppose we run \textsc{ConvexPE} (Algorithm~\ref{alg:propensity_estimation_provable}) with thresholds satisfying $\tau \geq \theta$ and $\gamma \geq \alpha$ to obtain an estimate $\T{\Ph}$ of $\T{P}$.
	With $L_\gamma := \sup_{x\in[-\gamma,\gamma]} \frac{|\sigma'(x)|}{\sigma(x)(1-\sigma(x))}$, there exists a universal constant $C > 0$ such that if $I_S + I_{S^C} \geq C$, with probability at least $1 - \frac{C}{I_S + I_{S^C}}$, the estimation error
	\begin{equation}
		\label{eq:propensity_error_bound}
		\frac{1}{I_{[N]}} \fnorm{\widehat{\T{P}} - \T{P}}^2
		\leq 4 e L_\gamma \tau \Big(\frac{1}{\sqrt{I_S}}+\frac{1}{\sqrt{I_{S^C}}}\Big).
	\end{equation}
\end{lemma}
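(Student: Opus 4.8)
The plan is to reduce the statement to the matrix one-bit (MNAR) completion bounds of \cite{davenport20141, ma2019missing} by passing to the $S$-unfolding, and then to simplify the resulting matrix bound.

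First I would note that the estimator produced by \textsc{ConvexPE} on the $S$-unfolding is exactly the constrained one-bit matrix maximum-likelihood estimator applied to $\Upomega_S \in \{0,1\}^{I_S \times I_{S^C}}$: under the missingness model the entries of $\Upomega_S$ are mutually independent with $(\Upomega_S)_{ij} \sim \mathrm{Bernoulli}\big(\sigma((\T{A}_S)_{ij})\big)$; the feasible set $\mathcal{S}_{\tau,\gamma}$ is the nuclear-norm/$\ell_\infty$ ball of radius $\tau\sqrt{I_{[N]}} = \tau\sqrt{I_S I_{S^C}}$ and $\gamma$; and the output is $\widehat{\T{P}}_S = \sigma(\widehat{\T{A}}_S)$, where $\widehat{\T{A}}_S$ is that estimator. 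The hypotheses transfer without loss: unfolding is an isometry for the nuclear norm, the entrywise maximum norm, and the Frobenius norm, so A1 together with $\tau \geq \theta$ gives $\nucnorm{\T{A}_S} \leq \tau\sqrt{I_S I_{S^C}}$ (hence $\T{A}_S$ is feasible), and A2 together with $\gamma \geq \alpha$ gives $\maxnorm{\T{A}_S} \leq \gamma$; the quantity $L_\gamma$ is the same scalar in both problems.

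Next I would invoke the matrix error bound of \cite[Lemma 1]{davenport20141} and \cite[Theorem 2]{ma2019missing} with $d_1 = I_S$ and $d_2 = I_{S^C}$. This gives: provided $I_S + I_{S^C} \geq C$, on an event of probability at least $1 - C/(I_S + I_{S^C})$,
\[
\frac{1}{I_S I_{S^C}}\,\fnorm{\widehat{\T{P}}_S - \T{P}_S}^2 \;\leq\; 4 e L_\gamma \tau \sqrt{\frac{I_S + I_{S^C}}{I_S I_{S^C}}}.
\]
Since the unfolding preserves the Frobenius norm and $I_S I_{S^C} = I_{[N]}$, the left-hand side equals $\frac{1}{I_{[N]}}\fnorm{\widehat{\T{P}} - \T{P}}^2$. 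For the right-hand side I would use subadditivity of the square root, $\sqrt{\frac{I_S + I_{S^C}}{I_S I_{S^C}}} = \sqrt{\frac{1}{I_S} + \frac{1}{I_{S^C}}} \leq \frac{1}{\sqrt{I_S}} + \frac{1}{\sqrt{I_{S^C}}}$, which yields exactly \eqref{eq:propensity_error_bound}.

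The conceptual content is entirely in the reduction; the remaining work is careful bookkeeping, and a few points need attention. One must state the cited matrix results in the fully-observed regime (every entry of the mask $\Upomega_S$ is revealed, i.e.\ sampling rate one) and in the form that bounds the error of the propensities $\sigma(\cdot)$ rather than that of the parameter matrix $\T{A}_S$ — this is what produces a single power of $L_\gamma$ (instead of the $L_\gamma^2$ one would get from a crude Lipschitz bound on $\sigma$) and the precise constant ($4e$). One must also check that the matrix bound scales linearly in the nuclear-norm radius, so that running \textsc{ConvexPE} with any $\tau \geq \theta$ (rather than the tight value $\theta$) merely enlarges the feasible set while keeping $\T{A}_S$ feasible, and yields a bound proportional to $\tau$. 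Matching these constants and the stated failure probability against the exact form of the cited theorems is the only real obstacle.
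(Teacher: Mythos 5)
Your proposal is correct and follows essentially the same route as the paper, which itself gives no proof beyond stating the lemma as a corollary of \cite[Lemma 1]{davenport20141} and \cite[Theorem 2]{ma2019missing} applied to the $S$-unfolding of $\T{A}$ with $d_1 = I_S$, $d_2 = I_{S^C}$ and fully observed Bernoulli entries. Your bookkeeping (entrywise norms and the nuclear-norm constraint transfer directly since $I_S I_{S^C} = I_{[N]}$, feasibility of $\T{A}_S$ under $\tau \geq \theta$, $\gamma \geq \alpha$, and $\sqrt{1/I_S + 1/I_{S^C}} \leq 1/\sqrt{I_S} + 1/\sqrt{I_{S^C}}$ if the cited bound is stated with $\sqrt{(I_S+I_{S^C})/(I_S I_{S^C})}$) is exactly the intended elaboration.
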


In the simplest case, when $N$ is an even integer and $I_1 = \ldots = I_N = I$, the right-hand side (RHS) of the estimation error in Equation~\ref{eq:propensity_error_bound} is in $O(I^{N/4})$.

Theorem~\ref{thm:square_unfolding_for_general_matrices} then shows that the square unfolding achieves the smallest upper bound for propensity estimation among all possible unfoldings sets $S$.
\begin{theorem}
	\label{thm:square_unfolding_for_general_matrices}
	Instate the same conditions as Lemma~\ref{lem:propensity_error},
	and further assume that there exists a constant $c>0$ such that $r_n^\mathrm{true} \leq c I_n$ for every $n \in [N]$.
	Then $S = S_\square$ gives the smallest upper bound (RHS of the result in Lemma~\ref{lem:propensity_error}) on the propensity estimation error $\fnorm{\widehat{\T{P}} - \T{P}}^2$ among all unfolding sets $S \subset [N]$.
\end{theorem}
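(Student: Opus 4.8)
The plan is to reduce the statement to an elementary fact about a one-dimensional function. Write $C:=I_{[N]}=I_S I_{S^C}$, which does not depend on the unfolding set $S$. The upper bound on $\fnorm{\widehat{\T{P}}-\T{P}}^2$ supplied by Lemma~\ref{lem:propensity_error} for unfolding $S$ is $4eL_\gamma\,\tau\,C\,\bigl(I_S^{-1/2}+I_{S^C}^{-1/2}\bigr)$. Among the factors here, $4eL_\gamma$ and $C$ are $S$-independent, and only $\tau$ can a priori vary with $S$ (via the hypothesis $\tau\ge\theta$ with $\nucnorm{\T{A}_S}\le\theta\sqrt{C}$). So the first task is to exhibit a threshold $\tau$ valid for every unfolding at once, and the second is to minimize $g(S):=I_S^{-1/2}+I_{S^C}^{-1/2}$ over $S\subset[N]$ and show $S_\square$ is a minimizer.

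For the threshold, I would apply Lemma~\ref{lem:tucker_unfolding} to the low-multilinear-rank tensor $\T{A}$ to get $\rank{\T{A}_S}\le\min\bigl\{\prod_{n\in S}r_n^\mathrm{true},\prod_{n\notin S}r_n^\mathrm{true}\bigr\}\le\sqrt{\prod_{n\in[N]}r_n^\mathrm{true}}$, using $\min\{a,b\}\le\sqrt{ab}$. Combining this with $\nucnorm{M}\le\sqrt{\rank{M}}\,\fnorm{M}$, the reshaping identity $\fnorm{\T{A}_S}=\fnorm{\T{A}}$, assumption~\textbf{A2} in the form $\fnorm{\T{A}}\le\alpha\sqrt{C}$, and the new hypothesis $r_n^\mathrm{true}\le cI_n$, one gets $\nucnorm{\T{A}_S}\le c^{N/4}\alpha\,C^{1/4}\sqrt{C}$ for every $S$. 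Hence the single choice $\tau:=c^{N/4}\alpha\,C^{1/4}$, which depends only on $c,N,\alpha$ and the side lengths, satisfies the hypotheses of Lemma~\ref{lem:propensity_error} for all unfoldings simultaneously, and the entire $S$-dependence of the bound is now carried by $g(S)$.

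For the core step, put $\phi(x):=x^{-1/2}+(C/x)^{-1/2}$ and $\psi(x):=\abs{x-C/x}$ for $x>0$, so that $g(S)=\phi(I_S)$, while $S_\square$ minimizes $\psi(I_S)$ over $S\subset[N]$ by definition (recall $I_{S^C}=C/I_S$). Both $\phi$ and $\psi$ are invariant under the reflection $x\mapsto C/x$, and on $[\sqrt{C},\infty)$ both are nondecreasing: $\psi(x)=x-C/x$ has $\psi'(x)=1+C/x^2>0$, and $\phi'(x)=\tfrac12 x^{-3/2}\bigl(x/\sqrt{C}-1\bigr)\ge0$, strictly positive for $x>\sqrt{C}$. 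Replacing each argument $x$ by $\max\{x,C/x\}\ge\sqrt{C}$, which changes neither $\phi$ nor $\psi$, and then using these monotonicities, I would conclude that $\phi$ and $\psi$ induce the same ordering on $(0,\infty)$: $\psi(x_1)\le\psi(x_2)\iff\phi(x_1)\le\phi(x_2)$. Specializing to the finite set $\{I_S:S\subset[N]\}$, every minimizer of $\psi$ is a minimizer of $\phi$; in particular $S_\square$ minimizes $g$, hence minimizes the bound of Lemma~\ref{lem:propensity_error}.

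I expect the core step to be the main obstacle. Because $\{I_S:S\subset[N]\}$ is a discrete set and $\sqrt{C}$ is in general not attained, it does not suffice to note that $\phi$ and $\psi$ are each uniquely minimized at $x=\sqrt{C}$; one genuinely needs the stronger fact that $\phi$ is a monotone transform of $\psi$ away from $\sqrt{C}$, which is what the reflection symmetry plus one-sided monotonicity provides. The threshold reduction in the second paragraph is routine, but it is precisely where the extra hypothesis $r_n^\mathrm{true}\le cI_n$ enters: it prevents the nuclear-norm surrogate $\tau$ from scaling differently for different unfoldings, which is what makes ``smallest upper bound among all unfoldings'' a meaningful comparison.
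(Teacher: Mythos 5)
Your proof is correct, but it takes a genuinely different route from the paper's at both of its key steps. The paper lets the threshold vary with the unfolding, taking $\tau=\alpha\sqrt{\rank{\T{A}_S}}$ (the smallest admissible value for each $S$), then bounds $\sqrt{\rank{\T{A}_S}}\bigl(I_S^{-1/2}+I_{S^C}^{-1/2}\bigr)\le\sqrt{r_S^\mathrm{true}/I_S}+\sqrt{r_{S^C}^\mathrm{true}/I_{S^C}}\le c^{|S|/2}+c^{(N-|S|)/2}$, reducing the comparison to a convex symmetric function of the cardinality $|S|$ alone, minimized at $|S|=N/2$; this identifies the minimizer with $S_\square$ only when the mode sizes are (roughly) equal, i.e., it is really a worst-case/cubical argument. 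You instead fix a single $S$-independent threshold $\tau=c^{N/4}\alpha\,I_{[N]}^{1/4}$ via $\min\{r_S^\mathrm{true},r_{S^C}^\mathrm{true}\}\le\sqrt{\prod_n r_n^\mathrm{true}}$, so that the entire $S$-dependence of the bound sits in $I_S^{-1/2}+I_{S^C}^{-1/2}$, and then prove the clean analytic fact that this quantity and $|I_S-I_{S^C}|$ induce the same ordering (via reflection symmetry about $\sqrt{I_{[N]}}$ plus one-sided monotonicity). What each approach buys: the paper's keeps the tightest admissible $\tau$ for each individual unfolding, so each compared bound is as sharp as possible, but its final minimization step is loose for non-cubical tensors; yours compares a slightly larger common bound, but makes the claim that $S=S_\square$ literally minimizes it exact for arbitrary mode sizes, and correctly isolates the point that a meaningful comparison requires the threshold not to scale differently across unfoldings. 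Both uses of Lemma~\ref{lem:tucker_unfolding} and of $\nucnorm{M}\le\sqrt{\rank{M}}\,\fnorm{M}$ are sound, and your ordering argument (replacing $x$ by $\max\{x,I_{[N]}/x\}$ before invoking monotonicity) is exactly the extra care the discrete set $\{I_S\}$ requires.
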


\begin{proof}
	Denote $r_S^\mathrm{true} := \prod_{n \in S} r_n^\mathrm{true}$ and $r_{S^C}^\mathrm{true} := \prod_{n \in [N] \backslash S} r_n^\mathrm{true}$. 
	We know from Lemma~\ref{lem:tucker_unfolding} of the main paper that for every unfolding of $\T{A}$, $\rank{\T{A}_S} \leq \min \Bigl\{r_S^\mathrm{true}, \, r_{S^C}^\mathrm{true} \Big\}$. 
	Since $\nucnorm{\T{A}_S} \leq \sqrt{\rank{\T{A}_S}} \cdot \fnorm{\T{A}} \leq \alpha \sqrt{\rank{\T{A}_S}} \cdot I^\frac{N}{2}$, we need $\tau \geq \theta \geq \alpha \sqrt{\rank{\T{A}_S}}$ for the conditions of Lemma~\ref{lem:propensity_error} to hold.
	For simplicity, suppose $\tau = \alpha \sqrt{\rank{\T{A}_S}}$, the smallest possible value for the exact recovery of $\T{A}$.
	
	Without loss of generality, suppose $|S| \leq \frac{N}{2}$.
	We have
	\begin{equation}
		\begin{aligned}
			\frac{1}{I_{[N]}} \fnorm{\widehat{\T{P}} - \T{P}}^2 & \leq 4 e L_\gamma \tau \Big(\frac{1}{\sqrt{I_S}}+\frac{1}{\sqrt{I_{S^C}}}\Big) \\
			& = 4 e L_\gamma \alpha \cdot \sqrt{\rank{\T{A}_S}} \Big(\frac{1}{\sqrt{I_S}}+\frac{1}{\sqrt{I_{S^C}}}\Big) \\
			& \leq 4 e L_\gamma \alpha \Big(\sqrt{\frac{r_S^\mathrm{true}}{I_S}}+\sqrt{\frac{r_{S^C}^\mathrm{true}}{I_{S^C}}}\Big) \\
			& \leq 4 e L_\gamma \alpha \Big(\sqrt{c^{|S|}}+ \sqrt{c^{N - |S|}}\Big).
		\end{aligned}
		\nonumber
	\end{equation}
	The final expression is the smallest when $S = S_\square$.
\end{proof}

\subsection{Error in tensor completion: special case}
\label{sec:error-in-tensor-completion-informal}
We present a special case of our bound on the recovery error for a cubical tensor with equal multilinear rank as Theorem~\ref{thm:main-theorem-informal}.
This bound is dominated by the error from the matrix Bernstein inequality \cite{tropp2015introduction} on each of the $N$ unfoldings, and asymptotically goes to 0 when the tensor size $I \rightarrow \infty$. 
Note that our full theorem applies to any tensor; we defer the formal statement to Appendix~\ref{sec:error-in-tensor-completion-formal}, Theorem~\ref{thm:main-theorem-formal}. 

\begin{theorem}
	\label{thm:main-theorem-informal}
	Consider an order-$N$ cubical tensor $\T{B}$ with size $I_1 = \cdots = I_N = I$ and multilinear rank $r_1^\mathrm{true} = \cdots = r_N^\mathrm{true} = r < I$, and two order-$N$ cubical tensors $\T{P}$ and $\T{A}$ with the same shape as $\T{B}$.
	Each entry of $\T{B}$ is observed with probability from the corresponding entry of $\T{P}$.
	Assume $I \geq r N \log I$, and there exist constants $\psi, \alpha \in (0,\infty)$ such that $\maxnorm{\T{A}} \leq \alpha$, $\maxnorm{\T{B}} = \psi$.
	Further assume that for each $n \in [N]$, the condition number $\frac{\sigma_{1} (\T{B}^{(n)})}{\sigma_r (\T{B}^{(n)})} \leq \kappa$ is a constant independent of tensor sizes and dimensions. Then under the conditions of Lemma~\ref{lem:propensity_error}, with probability at least $1 - I^{-1}$, the fixed multilinear rank $(r, r, \ldots, r)$ approximation $\widehat{\T{X}}(\widehat{\T{P}})$ computed from \textsc{ConvexPE} and \textsc{TenIPS} (Algorithms~\ref{alg:propensity_estimation_provable} and \ref{alg:tensor_completion}) with thresholds $\tau \geq \theta$ and $\gamma \geq \alpha$ satisfies
	\begin{equation}
		\label{eq:error-bound-informal}
		\frac{\fnorm{\widehat{\T{X}}(\widehat{\T{P}}) - \T{B}}}{\fnorm{\T{B}}} \leq C N \sqrt{\frac{r \log I}{I}},
	\end{equation}
	in which $C$ depends on $\kappa$.
\end{theorem}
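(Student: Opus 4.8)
The plan is to track how the Bernoulli sampling noise in the reweighted tensor and the propensity estimation error propagate through the HOSVD step of \textsc{TenIPS}. Set $\Pi_n := U_n^{\mathrm{true}}(U_n^{\mathrm{true}})^\top$, the projector onto the column space of $\T{B}^{(n)}$ (so $\T{B} = \T{B}\times_1\Pi_1\times_2\cdots\times_N\Pi_N$), and $P_n := Q_n(\widehat{\T{P}})\,Q_n(\widehat{\T{P}})^\top$, the rank-$r$ projector computed by \textsc{TenIPS} (so $\widehat{\T{X}}(\widehat{\T{P}}) = \bar{\T{X}}(\widehat{\T{P}})\times_1 P_1\times_2\cdots\times_N P_N$). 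Writing the error tensor $\T{Z} := \bar{\T{X}}(\widehat{\T{P}}) - \T{B}$, and inserting the $P_n$'s one mode at a time while using $\T{B}\times_n\Pi_n = \T{B}$, one gets the telescoping identity
\begin{equation*}
\widehat{\T{X}}(\widehat{\T{P}}) - \T{B} = \T{Z}\times_1 P_1\times_2\cdots\times_N P_N + \sum_{n=1}^{N}\T{B}\times_1 P_1\times_2\cdots\times_{n-1}P_{n-1}\times_n(P_n-\Pi_n).
\end{equation*}
The first term has multilinear rank at most $(r,\dots,r)$, so its Frobenius norm is at most $\sqrt{r}\max_n\norm{\T{Z}^{(n)}}$; the $n$-th summand in the sum, unfolded along mode $n$ and bounded via $\norm{AXB}\le\norm{A}\fnorm{X}\norm{B}$, has Frobenius norm at most $\norm{P_n-\Pi_n}\,\fnorm{\T{B}}$. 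So it remains to bound $\norm{\T{Z}^{(n)}}$ and $\norm{P_n-\Pi_n}$ for every mode $n$.

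To bound $\norm{\T{Z}^{(n)}}$ I would split $\T{Z} = \big(\bar{\T{X}}(\T{P})-\T{B}\big) + \big(\bar{\T{X}}(\widehat{\T{P}})-\bar{\T{X}}(\T{P})\big)$. The first summand is a sum of independent mean-zero random tensors (the unbiasedness is the computation already given in Section~\ref{sec:methodology}); its mode-$n$ unfolding is a sum of independent rank-one mean-zero matrices of spectral norm $O(\psi/p_{\min})$ --- with $p_{\min}:=\inf_{|x|\le\gamma}\sigma(x)>0$ a uniform lower bound on the propensities and on $\widehat{\T{P}}$ --- whose matrix variance proxy is controlled by the largest row and column norms of $\T{B}^{(n)}$, hence is $O(\psi^2 I^{N-1}/p_{\min})$. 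Matrix Bernstein then gives, with probability at least $1-I^{-1}$, $\norm{(\bar{\T{X}}(\T{P})-\T{B})^{(n)}} = O\!\left(\psi\,I^{(N-1)/2}\sqrt{N\log I/p_{\min}}\right)$, the additive linear-in-$\log I$ Bernstein term being negligible since $I\ge rN\log I$. The second summand is deterministic given the mask; since $\widehat{\T{P}}=\sigma(\widehat{\T{A}})$ with $\maxnorm{\widehat{\T{A}}}\le\gamma$ from the feasible set in \textsc{ConvexPE}, it is bounded below, so $\fnorm{\bar{\T{X}}(\widehat{\T{P}})-\bar{\T{X}}(\T{P})} \le \psi\,p_{\min}^{-2}\fnorm{\widehat{\T{P}}-\T{P}}$, and Lemma~\ref{lem:propensity_error} applied to the \emph{square} unfolding --- optimal by Theorem~\ref{thm:square_unfolding_for_general_matrices} --- bounds $\fnorm{\widehat{\T{P}}-\T{P}}$ so that, after dividing by $\fnorm{\T{B}}$ (which is of order $\psi I^{N/2}$), this contribution is of strictly smaller order than the Bernstein term.

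To bound $\norm{P_n-\Pi_n}$ I would invoke the Wedin / Davis--Kahan $\sin\Theta$ perturbation bound: $\T{B}^{(n)}$ has rank exactly $r$, and the hypotheses $I\ge rN\log I$ and constant $\kappa$ force $\norm{\T{Z}^{(n)}}\le\tfrac12\sigma_r(\T{B}^{(n)})$, so $\norm{P_n-\Pi_n} = O\!\left(\norm{\T{Z}^{(n)}}/\sigma_r(\T{B}^{(n)})\right)$. Moreover $\sqrt{r}\,\sigma_r(\T{B}^{(n)})\le\fnorm{\T{B}}\le\sqrt{r}\,\sigma_1(\T{B}^{(n)})\le\sqrt{r}\,\kappa\,\sigma_r(\T{B}^{(n)})$, so $\sigma_r(\T{B}^{(n)})$ is comparable to $\fnorm{\T{B}}/\sqrt{r}$ up to $\kappa$. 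Substituting into the telescoping identity and dividing by $\fnorm{\T{B}}$, each of the $N+1$ terms is $O\!\left(\kappa\sqrt{r}\,\norm{\T{Z}^{(n)}}/\fnorm{\T{B}}\right)$, which by the previous paragraph is $O\!\left(\kappa\sqrt{rN\log I/(p_{\min}I)}\right)$; summing the $N$ terms and absorbing $p_{\min}$ and the $\sqrt{N}$ coming from $\log(I^{N})$ into a $\kappa$-dependent constant yields $\fnorm{\widehat{\T{X}}(\widehat{\T{P}})-\T{B}}/\fnorm{\T{B}}\le CN\sqrt{r\log I/I}$.

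I expect the main obstacle to be carrying out the last two steps together cleanly: isolating a matrix-Bernstein variance proxy for each $\T{Z}^{(n)}$ that is controlled purely by $\maxnorm{\T{B}}$ and the unfolding dimensions, and then converting spectral-norm control of all $N$ unfoldings into a Frobenius-norm bound on the reconstructed Tucker tensor \emph{relative to} $\fnorm{\T{B}}$ without leaking extra powers of $r$ or $\kappa$. This is exactly where $I\ge rN\log I$ (keeping every perturbation below the spectral gap, so Wedin's theorem applies on every mode) and the bounded-condition-number assumption (making $\sigma_r(\T{B}^{(n)})$ comparable to $\fnorm{\T{B}}/\sqrt{r}$) do the work. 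A secondary technical point is verifying that the propensity-estimation error enters at strictly lower order, which genuinely uses the square-unfolding bound of Lemma~\ref{lem:propensity_error} and its optimality (Theorem~\ref{thm:square_unfolding_for_general_matrices}) rather than a generic unfolding.
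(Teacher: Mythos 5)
Your proposal follows essentially the same route as the paper: split $\T{Z}=\bar{\T{X}}(\T{\Ph})-\T{B}$ into the Bernoulli sampling fluctuation (controlled mode-by-mode via matrix Bernstein) plus the propensity-estimation error (controlled in Frobenius norm via Lemma~\ref{lem:propensity_error} on the square unfolding), then decompose $\T{\Xh}(\T{\Ph})-\T{B}$ into a low-multilinear-rank projection of $\T{Z}$ plus a telescoping sum of mode-wise projection errors of $\T{B}$, each handled by the Wedin/Davis--Kahan $\sin\Theta$ bound together with $\sigma_r(\T{B}^{(n)})\gtrsim \fnorm{\T{B}}/(\kappa\sqrt{r})$. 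Your telescoping identity is in fact the paper's decomposition in disguise, since $(P_n-\Pi_n)\T{B}^{(n)}=(P_n-I)\T{B}^{(n)}$.

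The one place where your accounting does not quite reach the stated bound is the final summation. You bound the telescoping sum by the triangle inequality, giving $N$ terms each of order $\kappa\sqrt{r}\cdot\sqrt{N\log I/I}$, i.e.\ a total of order $N^{3/2}\sqrt{r\log I/I}$; the extra $\sqrt{N}$ cannot be absorbed into a constant depending only on $\kappa$, since $N$ appears explicitly in the claimed bound. The paper avoids this by observing that the telescoping summands $\T{Y}_n-\T{Y}_{n-1}=\T{Y}_{n-1}\times_n(Q_nQ_n^\top-I)$ are mutually orthogonal (for $m<n$ the $m$-th summand carries $I-Q_mQ_m^\top$ in mode $m$ while the $n$-th carries $Q_mQ_m^\top$), so the squared norms add and one only pays $\sqrt{N}$ from the sum, recovering the advertised $N\sqrt{r\log I/I}$. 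Your decomposition enjoys exactly the same orthogonality, so this is a one-line repair rather than a structural flaw; similarly, replacing your $\max_n\norm{\T{Z}^{(n)}}$ by $\min_n\norm{\T{Z}^{(n)}}$ in the first term is free. With that fix the argument matches the paper's.
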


Note how this bound compares with the bounds for similar algorithms.
\textsc{HOSVD\_w} has a relative error of $O(r N^2 I^{-1/2} \log I)$ \cite[Theorem 3.3]{huang2020hosvd} for noiseless recovery with known propensities, and \textsc{SO-HOSVD} achieves a better bound of $O(\sqrt{\frac{r \log I}{I^{N-1}}})$ \cite[Theorem 3]{xia2017statistically} but assumes that the tensor is MCAR. 
In contrast, our bound holds for the tensor MNAR setting and does not require known propensities.
It is the first bound in this setting, to our knowledge.

We show a sketch of the proof for Theorem~\ref{thm:main-theorem-informal} to illustrate the main idea.
This is the special case of the full proof in Appendix~\ref{sec:proof-for-main-theorem}.

\begin{proof}(sketch)
	
	The propensity estimation error in Lemma~\ref{lem:propensity_error}, Equation~\ref{eq:propensity_error_bound} propagates to the error between $\bar{\T{X}}(\widehat{\T{P}})$ and $\bar{\T{X}}(\T{P})$: 
	\begin{equation}
		\label{eq:x_bar_error_from_propensity}
		\begin{aligned}
			& \fnorm{\bar{\T{X}}(\T{\Ph}) - \bar{\T{X}}}^2 \\
			& = \sum_{({i_1, i_2, \dots, i_N}) \in \Omega} \T{B}_{i_1 i_2 \cdots i_N}^2 (\frac{1}{\T{P}_{i_1 i_2 \cdots i_N}} - \frac{1}{\T{\Ph}_{i_1 i_2 \cdots i_N}})^2 \\
			& \leq  \psi^2 \sum_{({i_1, i_2, \dots, i_N}) \in \Omega} \Big(\frac{\T{P}_{i_1 i_2 \cdots i_N} - \T{\Ph}_{i_1 i_2 \cdots i_N}}{\T{P}_{i_1 i_2 \cdots i_N} \T{\Ph}_{i_1 i_2 \cdots i_N}} \Big)^2 \\
			& \leq  \frac{\psi^2}{\sigma(-\gamma)^2 \sigma(-\alpha)^2} \sum_{({i_1, i_2, \dots, i_N}) \in \Omega} \Big(\T{P}_{i_1 i_2 \cdots i_N} - \T{\Ph}_{i_1 i_2 \cdots i_N} \Big)^2 \\
			& \leq \frac{4 e L_\gamma \tau \psi^2}{\sigma(-\gamma)^2 \sigma(-\alpha)^2} \Big(\frac{1}{\sqrt{I_S}}+\frac{1}{\sqrt{I_{S^C}}}\Big) I_{[N]}.
		\end{aligned}
	\end{equation}
	The second inequality comes from $ \T{\Ph}_{i_1 i_2 \cdots i_N} \geq \sigma(-\gamma)$ and $ \T{P}_{i_1 i_2 \cdots i_N} \geq \sigma(-\alpha)$; the last inequality follows Lemma~\ref{lem:propensity_error}.
	
	Then on each of the $N$ unfoldings, the error of $\bar{\T{X}}^{(n)}(\widehat{\T{P}})$ from $\T{B}^{(n)}$
	\begin{equation}
		\label{eq:general_sampling_estimator_error}
		\begin{aligned}
			\norm{\bar{\T{X}}^{(n)}(\T{\Ph}) - \T{B}^{(n)}} & \leq \norm{\bar{\T{X}}^{(n)}(\T{\Ph}) - \bar{\T{X}}^{(n)}} + \norm{\bar{\T{X}}^{(n)} - \T{B}^{(n)}} \\
			& \leq \fnorm{\bar{\T{X}}^{(n)}(\T{\Ph}) - \bar{\T{X}}^{(n)}} + \norm{\bar{\T{X}}^{(n)} - \T{B}^{(n)}},\\
		\end{aligned}
	\end{equation}
	in which the first term can be bounded by Equation~\ref{eq:x_bar_error_from_propensity}, and the second term can be bounded by applying the matrix Bernstein inequality \cite{tropp2015introduction} to the sum of $[\frac{\mathbf{1}(\T{B}_{i_1 \cdots i_N} \text{ is observed})}{\T{P}_{i_1 \cdots i_N}} \T{B}_\text{obs} - \T{B}] \odot \T{E}(i_1, \dots, i_N)$ over all entries.
	
	The tensor $\bar{\T{X}}(\T{\Ph}) - \T{B}$ is often full-rank; if we directly use $\sqrt{I} \cdot \norm{\bar{\T{X}}^{(n)}(\T{\Ph}) - \T{B}^{(n)}}$ to bound $\fnorm{\bar{\T{X}}(\T{\Ph}) - \T{B}}$, the final error bound we get would be $O(N \sqrt{\log I})$, which increases with the increase of tensor size $I$. 
	Instead, we use the information of low multilinear rank to form the estimator $\T{\Xh} (\T{\Ph})$.
	In \textsc{TenIPS} (Algorithm~\ref{alg:tensor_completion}),
	\begin{equation}
		\begin{aligned}
			\T{\Xh} (\T{\Ph})& = [\bar{\T{X}}(\T{\Ph}) \times_1 Q_1^\top \times_2 \cdots \times_N Q_N^\top] \times_1 Q_1 \times_2 \cdots \times_N Q_N \\
			& = \bar{\T{X}}(\T{\Ph}) \times_1 Q_1Q_1^\top \times_2 \cdots \times_N Q_N Q_N^\top,
		\end{aligned}
		\nonumber
	\end{equation}
	in which each $Q_n$ is the column space of $\bar{\T{X}}^{(n)}(\T{\Ph})$.
	This projects each unfolding of $\bar{\T{X}} (\T{\Ph})$ onto its truncated column space. 
	By adding and subtracting $ \T{B} \times_1 Q_1Q_1^\top \times_2 \cdots \times_N Q_N Q_N^\top$, the projection of $\T{B}$ onto the column space of $\bar{\T{X}}(\T{\Ph})$ in each mode, we get the estimation error
	\begin{equation}
		\begin{aligned}
			& \fnorm{\T{\Xh}(\T{\Ph}) - \T{B}}^2 \\
			& =  \fnorm{\bar{\T{X}}(\T{\Ph}) \times_1 Q_1Q_1^\top \times_2 \cdots \times_N Q_N Q_N^\top - \T{B}}^2 \\
			& = \fnorm{(\bar{\T{X}}(\T{\Ph})-\T{B}) \times_1 Q_1Q_1^\top \times_2 \cdots \times_N Q_N Q_N^\top}^2\\
			& \quad +  \fnorm{\T{B} \times_1 Q_1Q_1^\top \times_2 \cdots \times_N Q_N Q_N^\top - \T{B}}^2\\
			& \quad +2 \langle (\bar{\T{X}}(\T{\Ph})-\T{B}) \times_1 Q_1Q_1^\top \times_2 \cdots \times_N Q_N Q_N^\top, \; \T{B} \times_1 Q_1Q_1^\top \times_2 \cdots \times_N Q_N Q_N^\top - \T{B} \rangle. \\
		\end{aligned}
		\nonumber
	\end{equation}
	On the RHS, the third term is an inner product of two mutually orthogonal tensors and is thus 0.
	The first term is low multilinear rank, and thus can be bounded by the spectral norm of the unfoldings as 
	\begin{equation}
		\begin{aligned}
			& \fnorm{(\bar{\T{X}}(\T{\Ph})-\T{B}) \times_1 Q_1Q_1^\top \times_2 \cdots \times_N Q_N Q_N^\top}^2 \\
			& \leq \min_{n \in [N]} \Big\{\fnorm{Q_n Q_n^\top (\bar{\T{X}}^{(n)}(\T{\Ph}) - \T{B}^{(n)})}^2 \Big\}\\
			& \leq \min_{n \in [N]} \Big\{r \cdot \norm{\bar{\T{X}}^{(n)}(\T{\Ph}) - \T{B}^{(n)}}^2 \Big\}.
		\end{aligned}
		\nonumber
	\end{equation}
	The second term can be bounded by the sum of squares of residuals $\sum_{n \in [N]} \fnorm{\T{B}\times_{n} (I - Q_{n} Q_{n}^\top)}^2$. 
	Each of the summand here is the perturbation error of $Q_n$ on the column space of $\T{B}^{(n)}$, and thus can be bounded by the Davis-Kahan sin($\Theta$) Theorem \cite{davis1970rotation, wedin1972perturbation, yu2015useful}. 
\end{proof}

\section{Experiments}
\label{sec:experiments}
All the code is in the GitHub repository at \url{https://github.com/udellgroup/TenIPS}.
We ran all experiments on a Linux machine with Intel\textsuperscript{\textregistered} Xeon\textsuperscript{\textregistered} E7-4850 v4 2.10GHz CPU and 1056GB memory, and used the logistic link function $\sigma(x)=1/(1+e^{-x})$ throughout the experiments.

We use both synthetic and semi-synthetic data for evaluation.
We first compare the propensity estimation performance under square and rectangular (along a specific mode) unfoldings, and then compare the tensor recovery error under different approaches.
For both propensity estimation and tensor completion, the relative error is defined as $\fnorm{\T{\Th} - \T{T}} / \fnorm{\T{T}}$, in which $\T{\Th}$ is the predicted tensor and $\T{T}$ is the true tensor.

There are four algorithms similar to \textsc{TenIPS} for tensor completion in our experiments: \textsc{SqUnfold}, which performs SVD to seek the low-rank approximation of the square unfolding of the propensity-reweighted $\T{B}_\mathrm{obs}$; \textsc{RectUnfold}, which applies SVD to the unfolding of the propensity-reweighted $\T{B}_\mathrm{obs}$ along a specific mode; \textsc{HOSVD\_w} \cite{huang2020hosvd} and \textsc{SO-HOSVD} \cite{xia2017statistically}. 
The popular nuclear-norm-regularized least squares \textsc{LstSq} that seeks $\hat{B} = \underset{X}{\arg \min} \sum_{(i, j) \in \Omega} (B_{ij}  - X_{ij})^2 + \lambda \nucnorm{X}$ on an unfolding of $\T{B}_\mathrm{obs}$ takes much longer to finish in our experiments and is thus prohibitive in tensor completion practice, so we omit it from most of our results. 

\subsection{Synthetic data}
\label{sec:experiments-synthetic}
We have the following observation models for synthetic tensors:
\begin{itemize}[leftmargin=5em,topsep=0pt,partopsep=1ex,parsep=0ex]
	\item[\textbf{Model A.}]
	\label{obs:syn-mcar}
	MCAR. The propensity tensor $\T{P}$ has all equal entries.
	\item[\textbf{Model B.}]
	\label{obs:syn-mnar-product}
	MNAR with an approximatly low multilinear rank parameter tensor $\T{A}$. 
	One special case is that $\T{A}$ is proportional to $\T{B}$: a larger entry is more likely to be observed.
\end{itemize}

In the first experiment, we evaluate the performance of propensity estimation on synthetic tensors with approximately low multilinear rank.
In each of the above observation models, we generate synthetic cubical tensors with equal size on each mode, and predict the propensity tensor by estimating the parameter tensor on either the square or rectangular unfolding.
\begin{figure}[t]
	\centering
	\begin{minipage}[b]{0.46\linewidth}
		\begin{subfigure}[t]{.45\linewidth}
		\includegraphics[width=\linewidth]{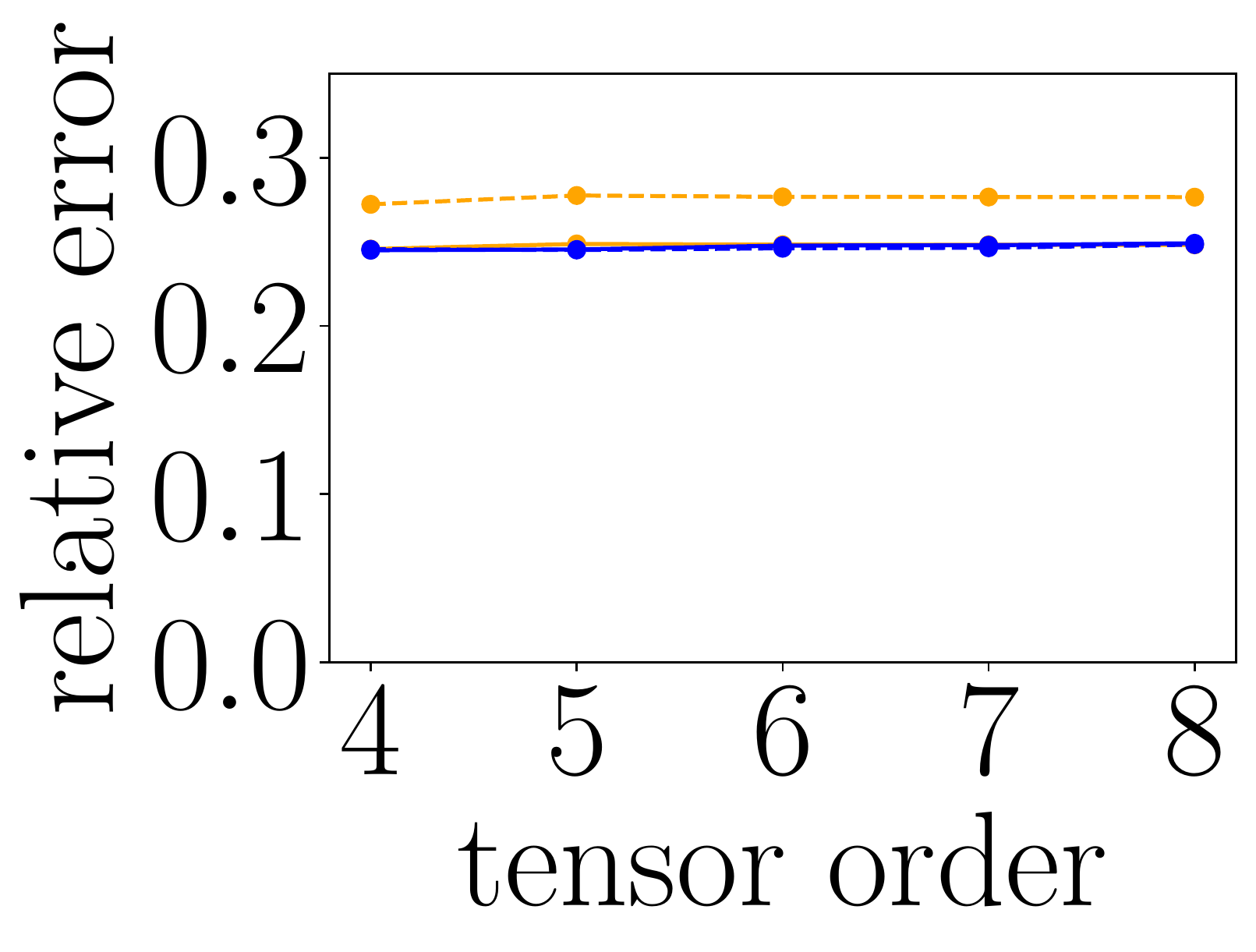}
		\caption{Model A}
		\label{fig:MCAR_relative_error_on_propensity}
	\end{subfigure}	
	\hspace{.05\linewidth}
			\begin{subfigure}[t]{.46\linewidth}
		\includegraphics[width=\linewidth]{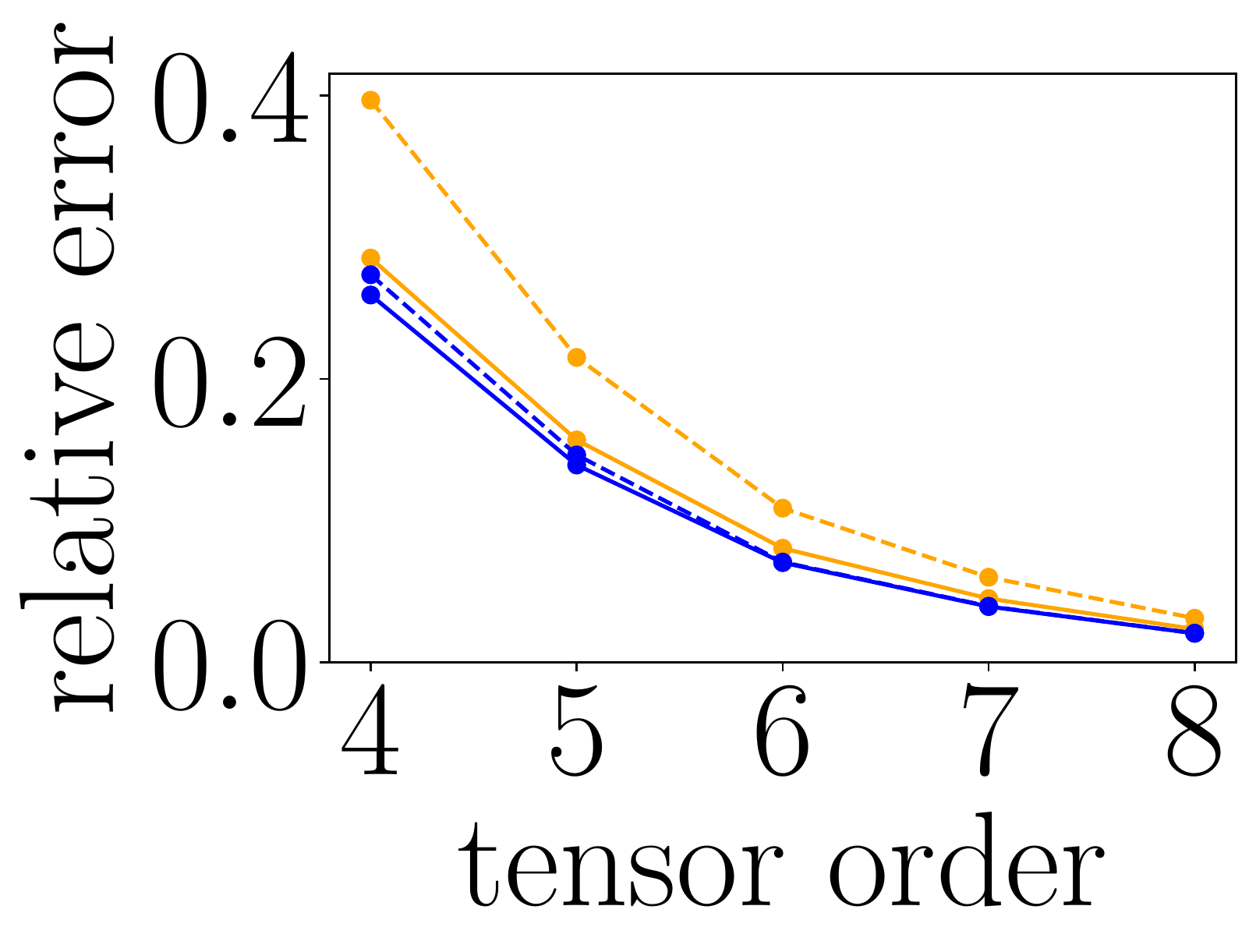}
		\caption{Model B}
	\label{fig:MNAR_relative_error_on_propensity}
	\end{subfigure}	
\end{minipage}
\begin{minipage}[b]{.42\linewidth}
	\stackunder[1pt]{\includegraphics[width=\linewidth]{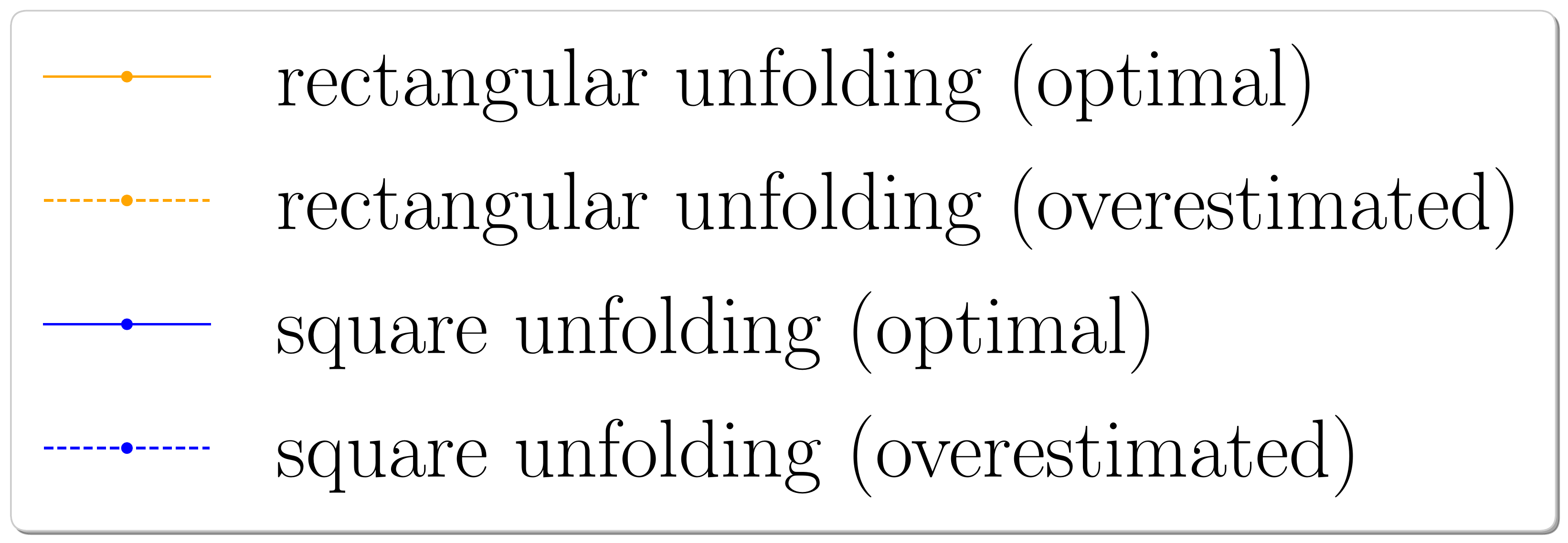}}{}
\end{minipage}
	\caption{Propensity estimation performance on different orders of tensors with size 8.
		The small size makes it possible to evaluate high-order tensors within a reasonable time.
		Figure~\ref{fig:MCAR_relative_error_on_propensity} follows Model A with a $40\%$ observation ratio.
		Figure~\ref{fig:MNAR_relative_error_on_propensity} follows Model B.
	}
	\label{fig:MNAR_propensity}
\end{figure}
Figure~\ref{fig:MNAR_propensity} compares the propensity estimation error on tensors with different orders.
For each $N$, with $I=8$ and $r=2$, we generate an order-$N$ parameter tensor $\T{A}$ in the following way: We first generate $\T{A}^\natural$ by Tucker decomposition $\T{G}^A \times_1 U^A_1 \times \cdots \times_N U^A_N \in \RR^{I \times \cdots \times I}$, in which $\T{G} \in \RR^{r \times \cdots \times r}$ has i.i.d. $\mathcal{N}(0, 10^2)$ entries, and each $U^A_n \in \RR^{I \times r}$ has random orthonormal columns.
Then we generate a noise-corrupted $\T{A}$ by $\T{A}^\natural + (\gamma \fnorm{\T{A}^\natural} / I^{N/2})\upepsilon$, where the noise level $\gamma = 0.1$ and the noise tensor $\upepsilon$ has i.i.d. $\mathcal{N}(0,1)$ entries.
The ``optimal'' hyperparameter setting uses $\tau=\theta$ and $\gamma=\alpha$ in 1-bit matrix completion, and the ``overestimated'' setting, $\tau=2 \theta$ and $\gamma=2 \alpha$.
We can see that:
\begin{enumerate}[label=\arabic*, wide, labelwidth=!, labelindent=0pt]
	\item The square unfolding is always better than the rectangular unfolding in achieving a smaller propensity estimation error.
	\item The propensity estimation error on the square unfolding increases less when the optimization hyperparameters $\tau$ and $\gamma$ increase from their optima, $\theta$ and $\alpha$.
	This suggests that the square unfolding makes the constrained optimization problem more robust to the selection of optimization hyperparameters.
	\item It is most important to use the square unfolding for higher-order tensors: the ratios of relative errors between overestimated and optimal increase from 1.3 to 1.4, while those on the square unfoldings stay at 1.1.
	This aligns with Theorem~\ref{thm:square_unfolding_for_general_matrices} that the square unfolding outperforms in the worst case.
\end{enumerate}

In the second experiment, we complete tensors with MCAR entries.
With $N=4$, $I=100$ and $r=5$, we generate an order-$N$ data tensor $\T{B} \in \RR^{I \times \cdots \times I}$ in the following way: We first generate $\T{B}^\natural$ by $\T{G}^\mathrm{true} \times_1 U_1^\mathrm{true} \times_2 \cdots \times_N U_N^\mathrm{true}$, in which $\T{G}^\mathrm{true} \in \RR^{r \times \cdots \times r}$ has i.i.d. $\mathcal{N}(0,100^2)$ entries, and each $U_n^\mathrm{true} \in \RR^{I \times r}$ has random orthonormal columns.
Then we generate a noise-corrupted $\T{B}$ by $\T{B}^\natural + (\gamma \fnorm{\T{B}^\natural} / I^{N/2})\upepsilon$, where the noise level $\gamma = 0.1$ and the noise tensor $\upepsilon$ has i.i.d. $\mathcal{N}(0,1)$ entries.

\textsc{TenIPS} is competitive for MCAR. 
We compare the relative error of \textsc{TenIPS} with \textsc{SqUnfold}, \textsc{RectUnfold} and \textsc{HOSVD\_w} at different observation ratios in Figure~\ref{fig:MCAR_relative_error_on_ratio}. 
We can see that \textsc{TenIPS} and \textsc{HOSVD\_w} achieve the lowest recovery error on average, and the results of these two methods are nearly identical. 
\begin{figure}
	\centering
	\begin{minipage}[h]{0.26\linewidth}
		\includegraphics[width=\linewidth]{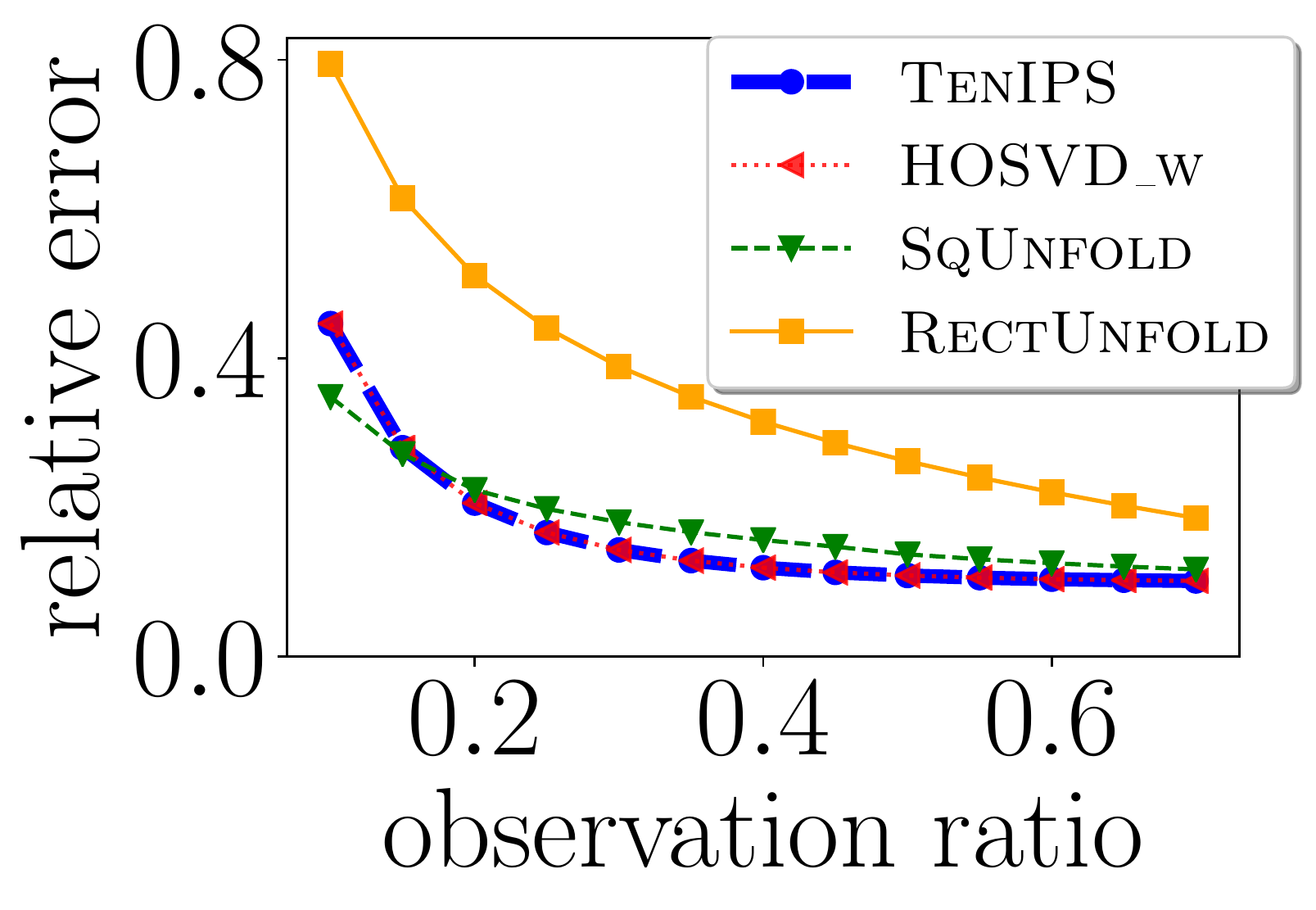}
		\captionof{figure}{Error on MCAR tensors.}
		\label{fig:MCAR_relative_error_on_ratio}
		
		\includegraphics[width=\linewidth]{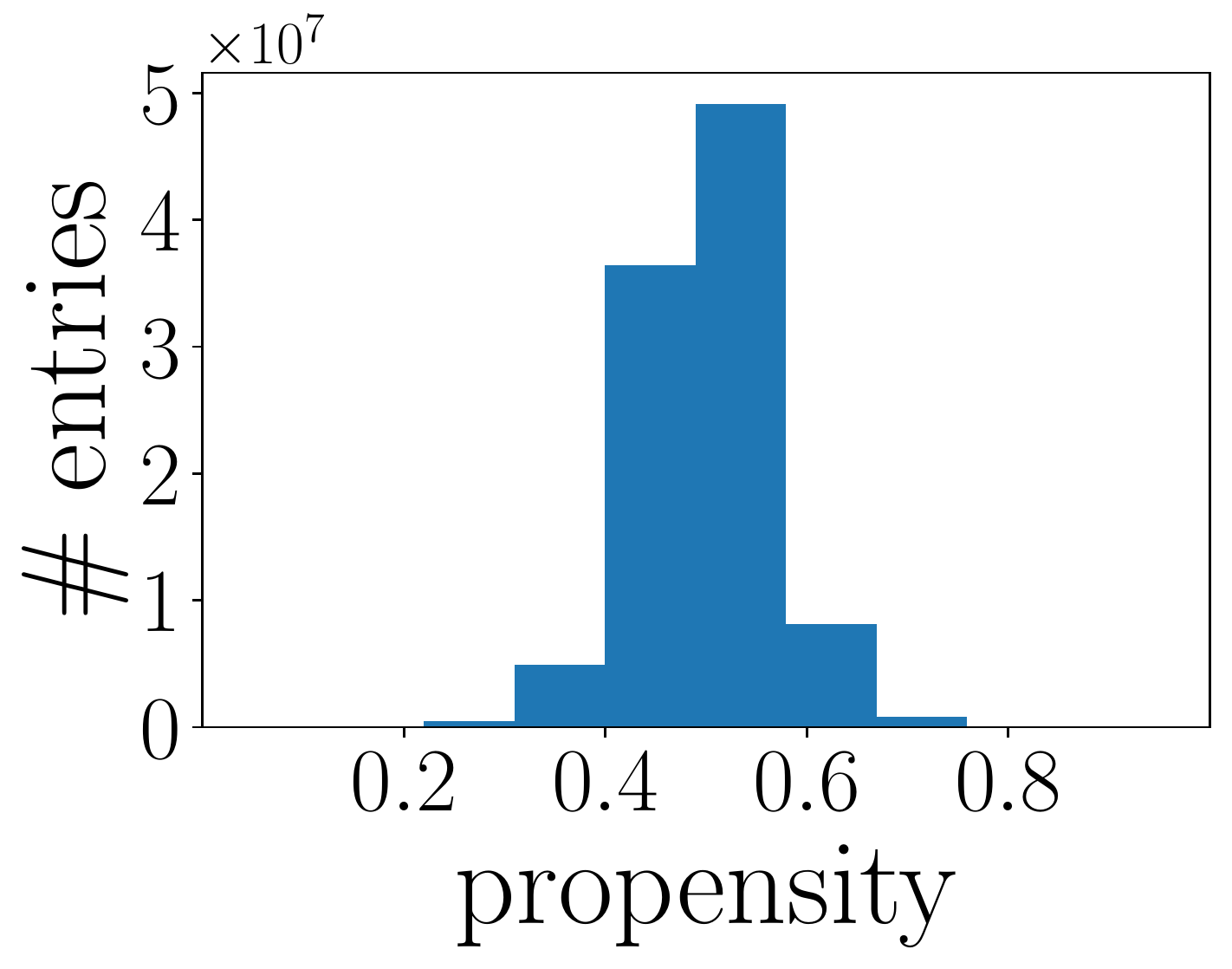}
		\captionof{figure}{Propensity histogram in synthetic MNAR experiments.}
		\label{fig:propensity_histogram}
	\end{minipage}
	\hspace{.02\linewidth}
	\begin{minipage}[h]{0.7\linewidth}
		\centering
		\captionof{table}{Completion performance on the order-4 MNAR synthetic cubical tensor with size 100.
			The ``time'' here is the time taken for the tensor completion step, with true or estimated propensities. $\widehat{\T{P}}_1$ is from running the provable \textsc{ConvexPE} (Algorithm~\ref{alg:propensity_estimation_provable}) at target rank 25 for 84 seconds, and has relative error 0.08 from the true tensor $\T{P}$; $\widehat{\T{P}}_2$ is from running the gradient-descent-based \textsc{NonconvexPE} (Algorithm~\ref{alg:propensity_estimation_alt}) with i.i.d. Uniform$[-1, 1]$ initialization and at step size $5 \times 10^{-6}$ for 81 seconds, and has relative error 0.13.
			The bold number in each column indicates the best.}	
			\begin{tabular}{llllll}
			\toprule
			\multirow{2}{*}{algorithm} & \multirow{2}{*}{time (s)} & \multicolumn{3}{p{3.4cm}}{relative error from $\T{B}$}  \\
			\cmidrule{3-5}
			~ & ~ & \multicolumn{1}{p{1.0cm}}{with $\T{P}$} &  \multicolumn{1}{p{1.2cm}}{with $\widehat{\T{P}}_1$} &  \multicolumn{1}{p{1.2cm}}{with $\widehat{\T{P}}_2$} \\
			\midrule
			\textsc{TenIPS} & 26 & \textbf{0.110} & \textbf{0.110} & \textbf{0.109} \\
			\textsc{HOSVD\_w} & 35 & 0.129 & 0.116 & 0.110 \\
			\textsc{SqUnfold} & 29 & 0.141 & 0.138 & 0.139 \\
			\textsc{RectUnfold} & \textbf{8} & 0.259 & 0.256 & 0.256 \\
			\textsc{LstSq} & >600 & - & - & -\\
			\textsc{SO-HOSVD} & >600 & - & - & -\\
			\bottomrule
		\end{tabular}
		\label{table:MNAR-synthetic-results}		
	\end{minipage}
\end{figure}

In the third experiment, we complete tensors with MNAR entries. 
We use the same $\T{B}$ as the second experiment, and further generate an order-4 parameter tensor $\T{A} \in \RR^{100 \times \cdots \times 100}$ in the same way as $\T{B}$.
$99.97\%$ of the propensities in $\T{P} = \sigma(\T{A})$ lie in the range of $[0.2, 0.8]$, as shown in Figure~\ref{fig:propensity_histogram}.
In Table~\ref{table:MNAR-synthetic-results}, we see:
\begin{enumerate}[label=\arabic*, wide, labelwidth=!, labelindent=0pt]
	\item \textsc{TenIPS} outperforms for MNAR. 
	It has the smallest error among methods that can finish within a reasonable time.
	\item Tensor completion errors using estimated propensities are roughly equal to, and sometimes even smaller than those using true propensities despite a propensity estimation error. 
	\item On the sensitivity to hyperparameters: it is mentioned in the title of Table~\ref{table:MNAR-synthetic-results} that \textsc{ConvexPE} achieves a smaller accuracy within a similar time as \textsc{NonconvexPE}. 
	However, this is because we set $\tau$ and $\gamma$ correctly: $\tau=\theta$ and $\gamma =\alpha$.
	In real cases, $\theta$ and $\alpha$ are unknown, and are hard to infer from surrogate metrics within the optimization process.
	The misestimates of $\theta$ and $\alpha$ may lead to large propensity estimation errors: As an example, \textsc{ConvexPE} with $\tau=100\theta$ and $\gamma =100\alpha$ never achieves a relative error smaller than $0.7$.
	Moreover, the relative error does not always decrease with more PPG iterations, despite the decrease of objective value.
	In every algorithm in Table~\ref{table:MNAR-synthetic-results}, using these estimated propensities yields at least a relative error of $0.7$ for the estimation of data tensor $\T{B}$.
	On the other hand, the initialization and step size in \textsc{NonconvexPE} can be tuned more easily by monitoring the value of function $f$ with the increase of number of iterations. 
	More discussion can be found in Appendix~\ref{sec:algorithm_sensitivity}.
\end{enumerate}

In the fourth experiment, we compare the above methods in both MCAR and MNAR settings when increasing target ranks.
\begin{figure}
	\centering
\centering
	\begin{subfigure}[t]{.26\linewidth}
	\includegraphics[width=\linewidth]{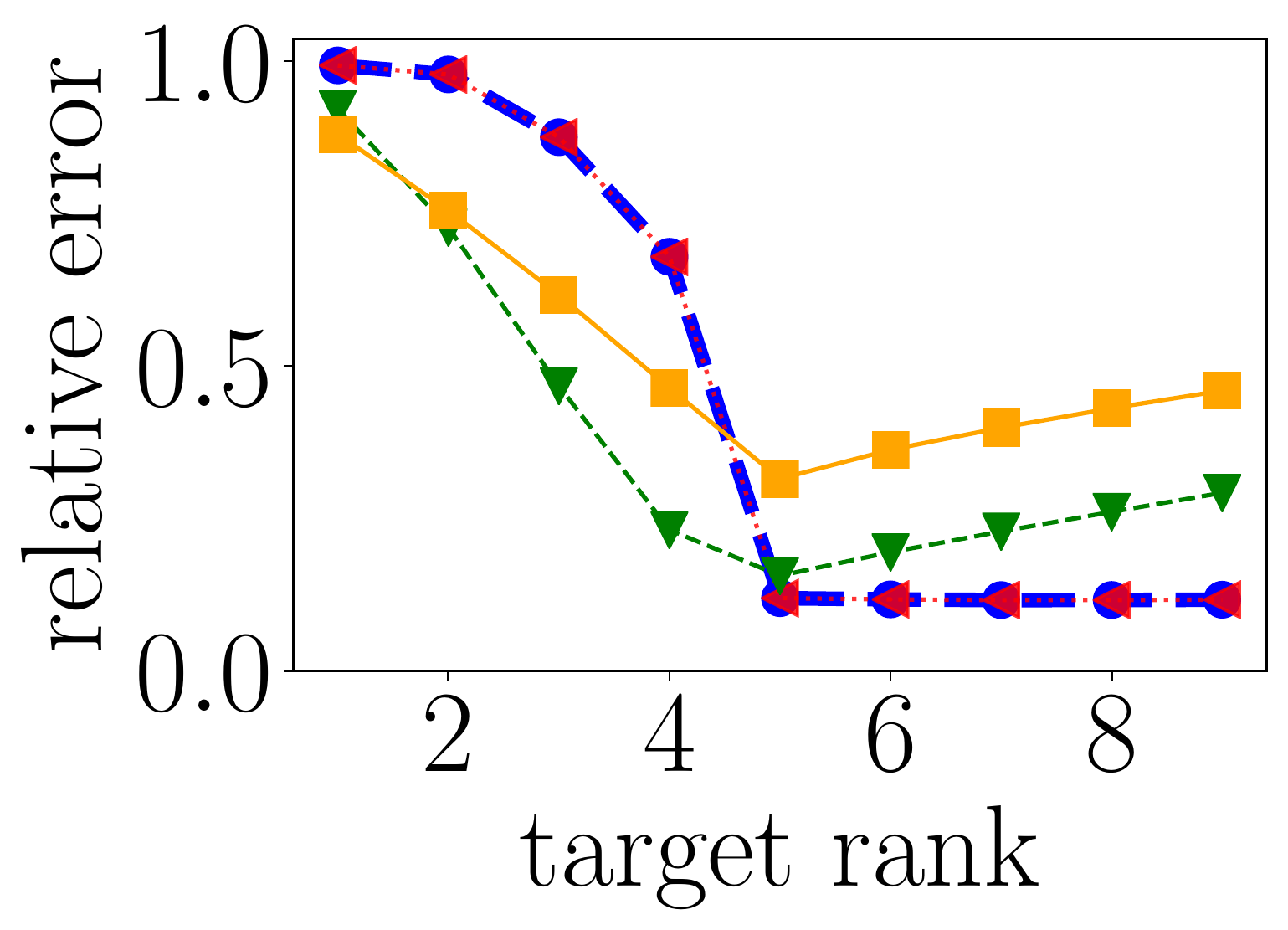}
	\caption{Model A}
\label{fig:MCAR_relative_error_on_target_rank}
\end{subfigure}	
\hspace{.2\linewidth}
\begin{subfigure}[t]{.26\linewidth}
	\includegraphics[width=\linewidth]{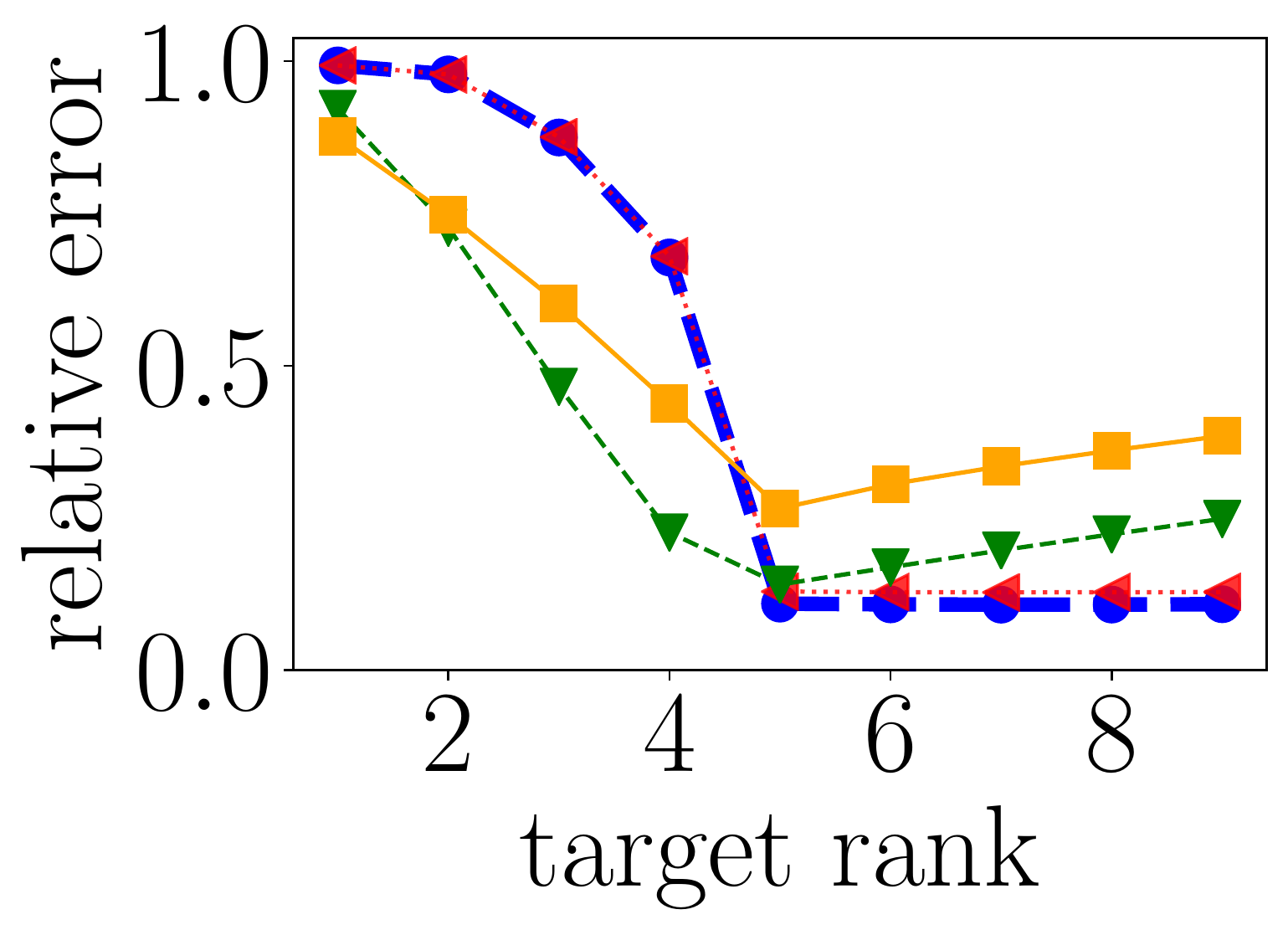}
	\caption{Model B}
\label{fig:MNAR_relative_error_on_target_rank}
\end{subfigure}

\begin{subfigure}[t]{.7\linewidth}
	\includegraphics[width=\linewidth]{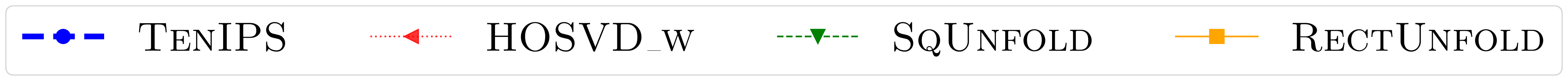}
\end{subfigure}

	\caption{Relative errors at different target ranks on the MNAR data from the third experiment. 
	The true multilinear rank of the data tensor is 5 for each mode.}
	\label{fig:exp_on_varying_ranks}
\end{figure}
In Figure~\ref{fig:exp_on_varying_ranks}, we can see that both \textsc{TenIPS} and \textsc{HOSVD\_w} are more stable at target ranks larger than the true rank, while \textsc{RectUnfold} and \textsc{SqUnfold} achieve smaller errors at smaller ranks. 
This shows that \textsc{TenIPS} and \textsc{HOSVD\_w} are robust to large target ranks, which is the case when $r_n \geq r_n^\mathrm{true}$, for all $n \in [N]$. 

\subsection{Semi-synthetic data}
\label{sec:experiments-semi-synthetic}
We use the video from \cite{malik2018low} and generate synthetic propensities. 
The video was taken by a camera mounted at a fixed position with a person walking by. 
We convert it to grayscale and discard the frames with severe vibration, which yields an order-3 data tensor $\T{B} \in \RR^{2200 \times 1080 \times 1920}$ that takes 102.0GB memory.
To get an MNAR tensor $\T{B}_\mathrm{obs}$, we generate the parameter tensor $\T{A}$ by entrywise transformation $\T{A} = (\T{B} - 128)/64$, which gives propensities in $[0.12, 0.88]$ in $\T{P} = \sigma (\T{A})$.
Finally we subsample $\T{B}$ using propensities $\T{P}$ to get $\T{B}_\mathrm{obs}$.
\begin{figure}[t]
\centering
	\begin{subfigure}[t]{.22\linewidth}
	\includegraphics[width=\linewidth]{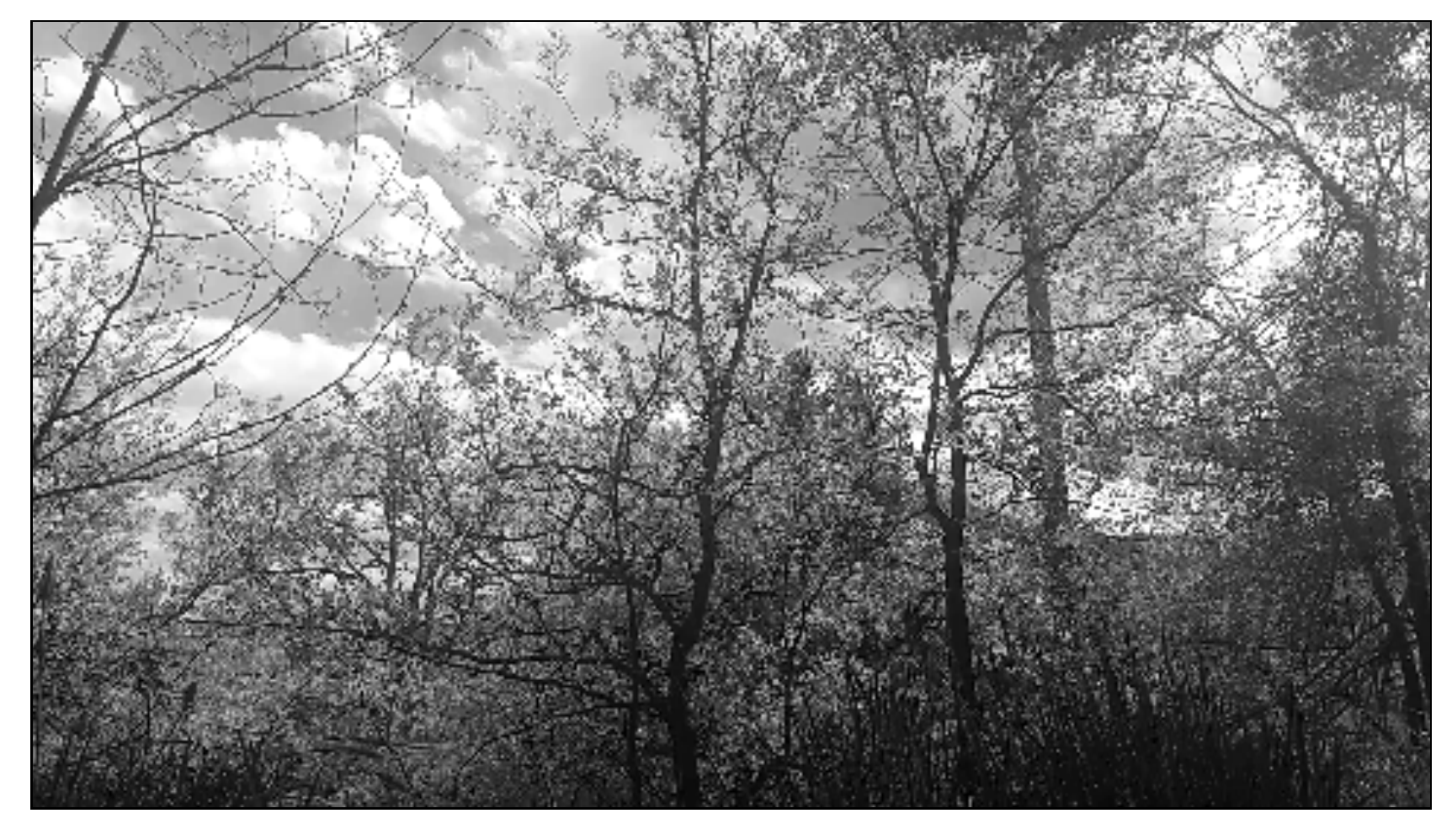}
	\caption{original}
	\label{fig:frame_original}
\end{subfigure}
\hspace{.01\linewidth}
	\begin{subfigure}[t]{.22\linewidth}
	\includegraphics[width=\linewidth]{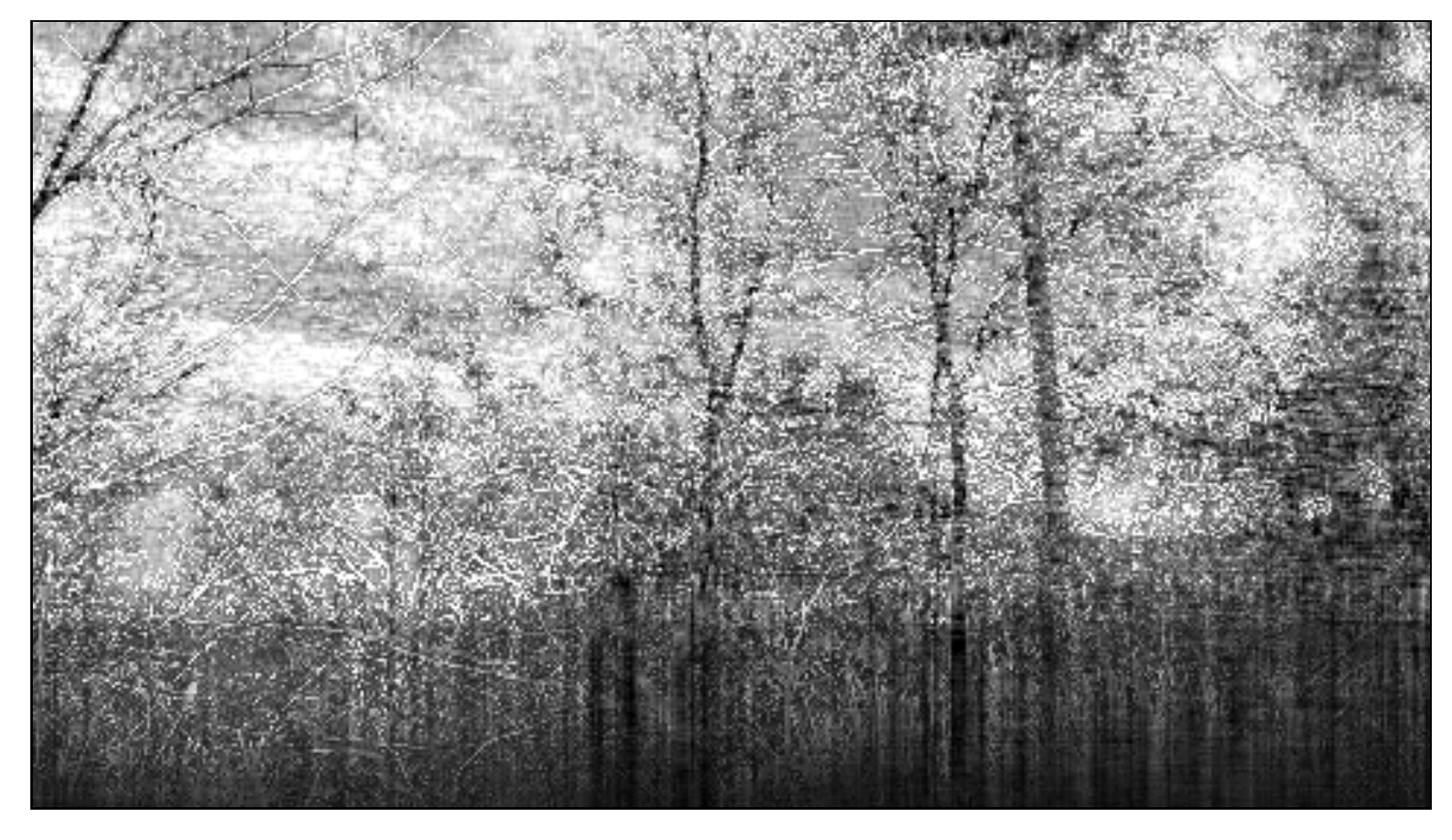}
	\caption{\textsc{TenIPS}, assuming MCAR}
	\label{fig:frame_from_MCAR}
\end{subfigure}
\hspace{.01\linewidth}
	\begin{subfigure}[t]{.22\linewidth}
	\includegraphics[width=\linewidth]{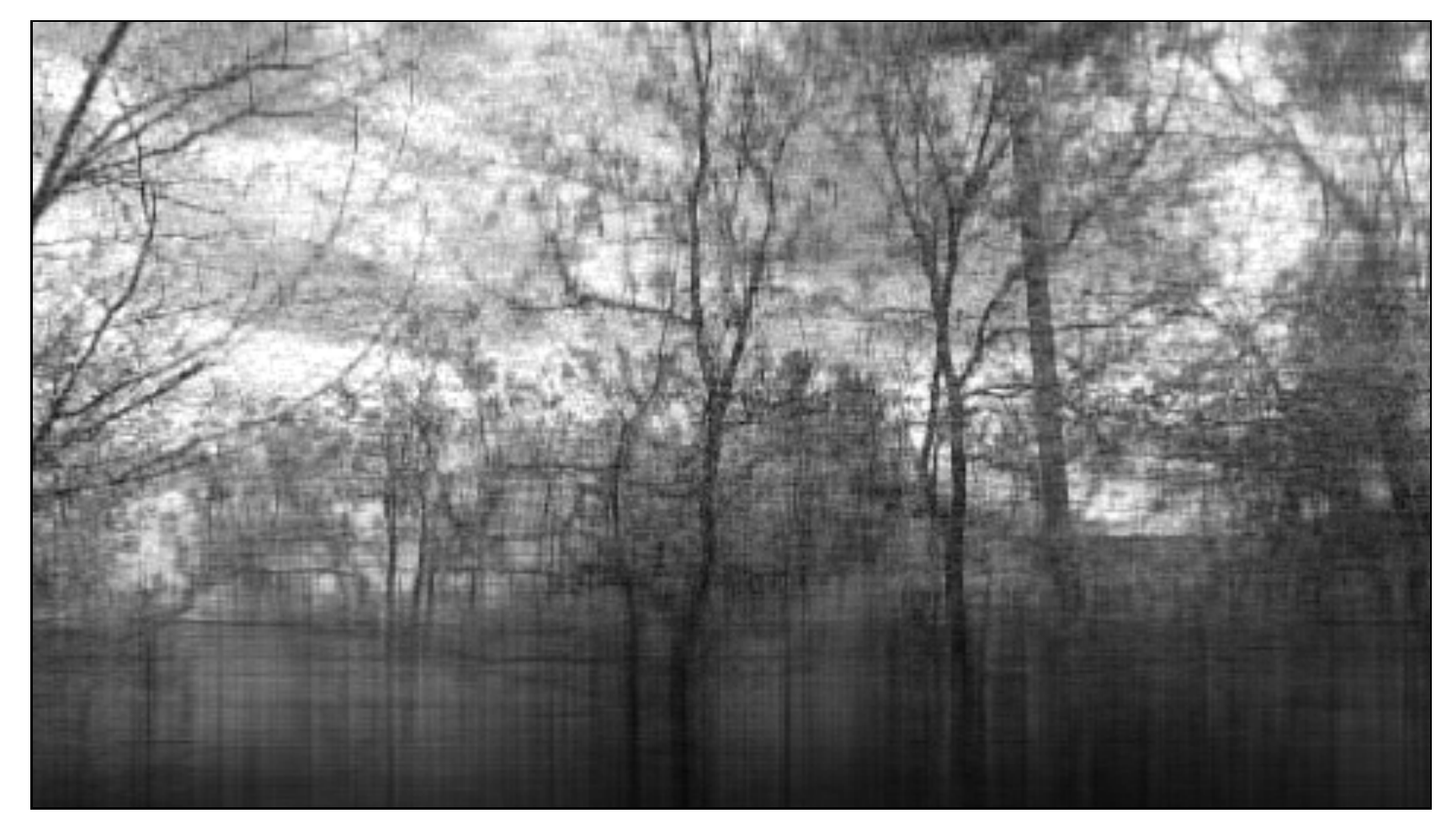}
	\caption{\textsc{TenIPS}, assuming MNAR with true $\T{P}$}
	\label{fig:frame_from_MNAR_true_prop}
\end{subfigure}
\hspace{.01\linewidth}
	\begin{subfigure}[t]{.22\linewidth}
	\includegraphics[width=\linewidth]{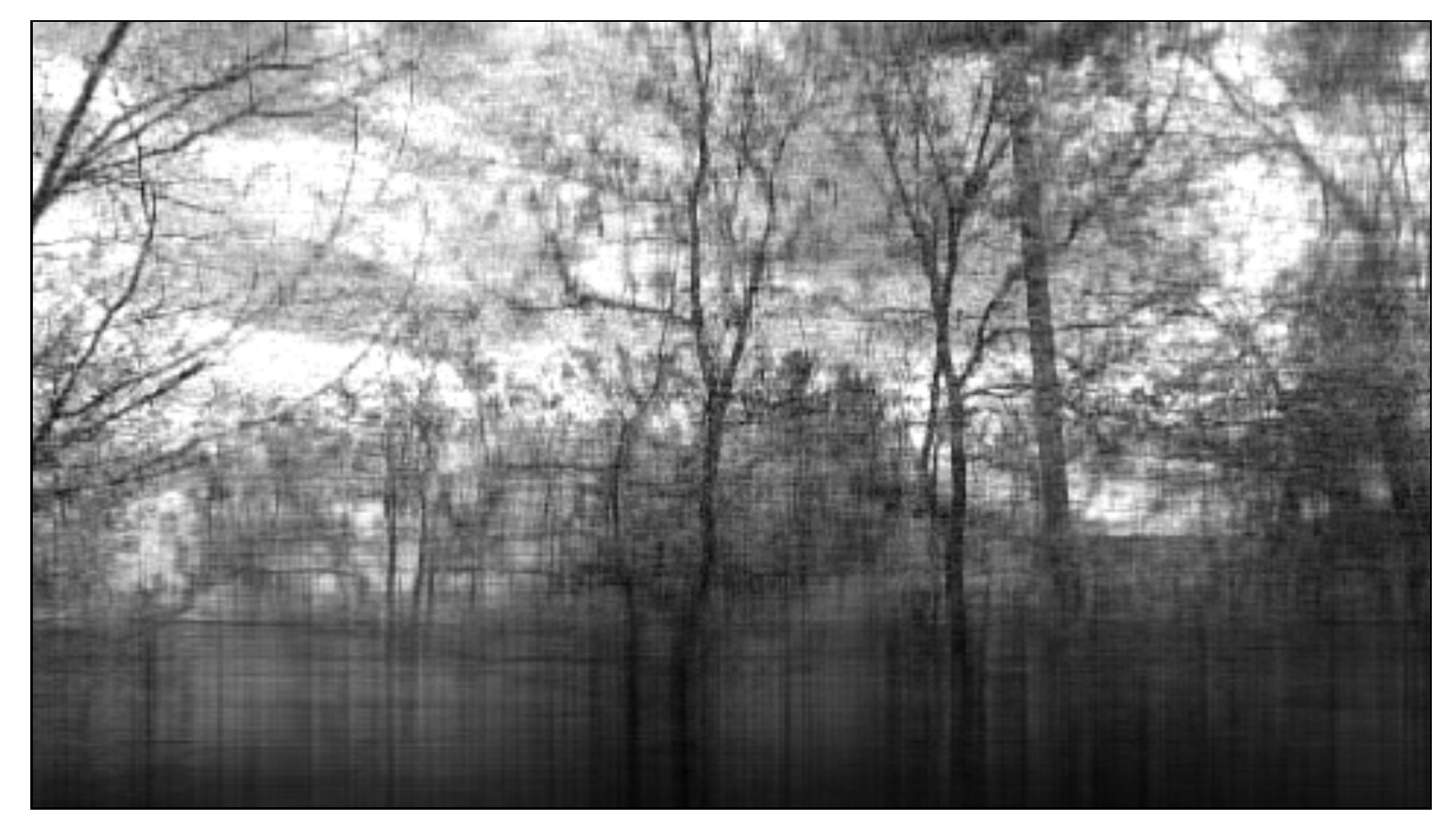}
	\caption{\textsc{TenIPS}, assuming MNAR with estimated $\widehat{\T{P}}$}
	\label{fig:frame_from_MNAR_estimated_prop}
	\end{subfigure}	
	
	\caption{Video recovery visualization on Frame 500 of the \cite{malik2018low} video data.
		The missingness patterns in~\ref{fig:frame_from_MCAR}, \ref{fig:frame_from_MNAR_true_prop} and \ref{fig:frame_from_MNAR_estimated_prop} only refer to our assumption in tensor recovery; the partially observed data tensors we start from are the same and are MNAR.}
	\label{fig:frame}
\end{figure}

We first compare the tensor completion performance of \textsc{TenIPS} with different sources of propensities.
In Figure~\ref{fig:frame}, we visualize the 500-th frame in three \textsc{TenIPS} experiments by fixed-rank approximation with target multilinear rank $(50, 50, 50)$: the original frame without missing pixels~\ref{fig:frame_original}, the frame recovered under MCAR assumption (tensor recovery error 0.42)~\ref{fig:frame_from_MCAR},  the frame recovered by propensities under the MNAR assumption with the true propensity tensor $\T{P}$ (tensor recovery error 0.28)~\ref{fig:frame_from_MNAR_true_prop}, and the frame recovered by propensities under the MNAR assumption with the estimated propensity tensor $\widehat{\T{P}}$ from \textsc{ConvexPE} (propensity estimation error 0.15, tensor recovery error 0.28)~\ref{fig:frame_from_MNAR_estimated_prop}.
We can see that:
\begin{enumerate}[label=\arabic*, wide, labelwidth=!, labelindent=0pt]
	\item With MNAR pixels, the image recovered from the naive MCAR assumption in Figure~\ref{fig:frame_from_MCAR} is more noisy than that from MNAR in Figure~\ref{fig:frame_from_MNAR_true_prop} and \ref{fig:frame_from_MNAR_estimated_prop}, and misses more details.
	\item There is no significant difference between the recovered video frames in \ref{fig:frame_from_MNAR_true_prop} and \ref{fig:frame_from_MNAR_estimated_prop}, in terms of both the frame image itself and the tensor recovery error. 
\end{enumerate}

\begin{table}[t]
	\centering
	\caption{Completion performance on the video task. 
		The memory usages are those in Python 3.
		The bold number in each row indicates the best.
	}	
	\begin{tabular}{llll}
		\toprule
		\multirow{2}{*}{Setting} & \multicolumn{3}{p{6cm}}{relative error (memory in GB)}  \\
		\cmidrule{2-4}
		~ & \multicolumn{1}{p{1.2cm}}{\textsc{TenIPS}} &  \multicolumn{1}{p{1.2cm}}{\textsc{HOSVD\_w}} &  \multicolumn{1}{p{2cm}}{\textsc{RectUnfold}} \\
		\midrule
		\RomanNumeralCaps{1} & 0.28 (0.008) & 0.44 (0.008) & \textbf{0.14} (2.3) \\
		\RomanNumeralCaps{2} & \textbf{0.20} (0.05) & 0.31 (0.05) & 0.25 (0.05) \\
		\bottomrule
	\end{tabular}
	\label{table:video_methods_comparison}
\end{table}

We then compare \textsc{TenIPS} with \textsc{HOSVD\_w} and \textsc{RectUnfold} on this video task;
we omit \textsc{SqUnfold} and \textsc{SO-HOSVD}
because \textsc{SqUnfold} and \textsc{RectUnfold} are equivalent on an order-3 tensor
and \textsc{SO-HOSVD} cannot finish within a reasonable time. 
In Table~\ref{table:video_methods_comparison} Setting~\RomanNumeralCaps{1}, \textsc{TenIPS} and \textsc{HOSVD\_w} have target rank $(50, 50, 50)$, and \textsc{RectUnfold} has target rank $50$.
We can see that \textsc{RectUnfold} has a smaller error in this setting but uses more than $250\times$ memory, because it does not seek low dimensional representations along the two dimensions of the video frame. 

Another advantage of the tensor methods \textsc{TenIPS} and \textsc{HOSVD\_w}
compared to \textsc{RectUnfold}
is that the target rank for different modes is not required to be the same.
For example, if we limit memory usage to 0.05GB (Table~\ref{table:video_methods_comparison}, Setting~\RomanNumeralCaps{2}),
\textsc{TenIPS} and \textsc{HOSVD\_w} can afford a target rank $(5, 500, 500)$
and achieve smaller errors, 
while \textsc{RectUnfold} can only afford a target rank of $1$.

Also, with similar memory consumption in both settings, \textsc{TenIPS} achieves smaller errors than \textsc{HOSVD\_w}. 

\section{Conclusion}
This paper develops a provable two-step approach for MNAR tensor completion with unknown propensities.
The square unfolding allows us to recover propensities with a smaller upper bound, and we then use HOSVD complete MNAR tensor with the estimated propensities. 
This method enjoys theoretical guarantee and fast running time in practice. 

This paper is the first provable method for completing a general MNAR tensor.
There are many avenues for improvement and extensions.
For example, one could explore whether nonconvex matrix completion methods can be generalized to MNAR tensors, explore other observation models, and design provable algorithms that estimate the propensities even faster. 

\subsubsection*{Acknowledgements}
MU, CY, and LD gratefully acknowledge support from
NSF Awards IIS-1943131 and CCF-1740822, 
the ONR Young Investigator Program, 
DARPA Award FA8750-17-2-0101,
the Simons Institute,
Canadian Institutes of Health Research, 
the Alfred P. Sloan Foundation,
and Capital One.
The authors thank Jicong Fan for helpful discussions, and thank several anonymous reviewers for useful comments.

\bibliographystyle{alpha}
\bibliography{scholar}

\appendix
\section{Error in tensor completion (\textsc{ConvexPE} and \textsc{TenIPS}): general case}
\label{sec:error-in-tensor-completion-formal}
We first state Theorem~\ref{thm:main-theorem-formal}, the tensor completion error in the most general case.
For brevity, we denote $\T{\Xh}(\T{P})$ and $\bar{\T{X}}(\T{P})$ by $\T{\Xh}$ and $\bar{\T{X}}$, respectively, in which $\T{P}$ is the true propensity tensor.

\begin{theorem}
	\label{thm:main-theorem-formal}
	Consider an order-$N$ tensor $\T{B} \in \mathbb{R}^{I_1 \times \cdots \times I_N}$, and two order-$N$ tensors $\T{P}$ and $\T{A}$ with the same shape as $\T{B}$.
	Each entry $\T{B}_{i_1, \ldots, i_N}$ of $\T{B}$ is observed with probability $\T{P}_{i_1, \ldots, i_N}$ from the corresponding entry of $\T{P}$.
	Assume there exist constants $\psi, \alpha \in (0,\infty)$ such that $\maxnorm{\T{A}} \leq \alpha$, $\maxnorm{\T{B}} = \psi$.
	Denote the spikiness parameter $\alpha_\mathrm{sp} := \psi \sqrt{I_{[N]}} / \fnorm{\T{B}}$.
	Then under the conditions of Lemma~\ref{lem:propensity_error}, with probability at least $\displaystyle 1 - \frac{C_1}{I_\square + I_{\square^C}} - \sum_{n=1}^N [I_n + I_{(-n)}] \exp \Big[- \frac{\epsilon^2 \fnorm{\T{B}}^2 \sigma(- \alpha)/ 2}{I_{(-n)} \psi^2 + \epsilon \psi \fnorm{\T{B}}/3}\Big]$, in which $C_1 > 0$ is a universal constant, the fixed multilinear rank $(r_1, r_2, \cdots, r_N)$ approximation $\widehat{\T{X}}(\widehat{\T{P}})$ computed from \textsc{ConvexPE} and \textsc{TenIPS} (Algorithms~\ref{alg:propensity_estimation_provable} and \ref{alg:tensor_completion}) with thresholds $\tau \geq \theta$ and $\gamma \geq \alpha$ satisfies
	\begin{equation}
		\begin{aligned}
			\frac{\fnorm{\widehat{\T{X}}(\widehat{\T{P}}) - \T{B}}^2}{\fnorm{\T{B}}^2} \leq & \min_{n \in [N]} \Bigg\{r_n \cdot \Big[\frac{\fnorm{\bar{\T{X}}(\T{\Ph}) - \bar{\T{X}}}}{\fnorm{\T{B}}} + \epsilon \Big]^2 \Bigg\} \\
			& + \sum_{n=1}^N \frac{12 r_n \sigma_1 (\T{B}^{(n)})^2}{\fnorm{\T{B}}^2}  \cdot \Bigg\{\frac{[2 \sigma_1 (\T{B}^{(n)}) + \fnorm{\bar{\T{X}}(\T{\Ph}) - \bar{\T{X}}} + \epsilon \fnorm{\T{B}}]^2}{[\sigma_{r_n} (\T{B}^{(n)}) + \sigma_{r_n+1} (\T{B}^{(n)})]^2} \cdot \frac{[\fnorm{\bar{\T{X}}(\T{\Ph}) - \bar{\T{X}}} + \epsilon \fnorm{\T{B}}]^2}{[\sigma_{r_n} (\T{B}^{(n)}) - \sigma_{r_n+1} (\T{B}^{(n)})]^2} \Bigg\}\\
			& + \frac{1}{\fnorm{\T{B}}^2} \sum_{n=1}^{N} (\tau_{r_n}^{(n)})^2, \\
		\end{aligned}
		\label{eq:main-thm-bound}
	\end{equation}
	in which:
	\begin{enumerate}
		\item $(\tau_{r_n}^{(n)})^2 := \sum_{i=r_n + 1}^{I_n} \sigma_i^2(\T{B}^{(n)})$ is the $r_n$-th tail energy for $\T{B}^{(n)}$,
		\item from Lemma~\ref{lem:propensity_error}, with $L_\gamma = \sup_{x\in[-\gamma,\gamma]} \frac{|\sigma'(x)|}{\sigma(x)(1-\sigma(x))}$, and with probability at least $1 - \frac{C_1}{I_\square + I_{\square^C}}$,
		\begin{equation}
			\label{eq:fnorm-error-on-exact-prop}
			\fnorm{\bar{\T{X}}(\T{\Ph}) - \bar{\T{X}}} \leq \frac{\alpha_\mathrm{sp} \fnorm{\T{B}}}{\sigma(-\gamma) \sigma(-\alpha)} \sqrt{4 e L_\gamma \tau \Big(\frac{1}{\sqrt{I_\square}}+\frac{1}{\sqrt{I_{\square^C}}}\Big)}.
		\end{equation}
	\end{enumerate}
\end{theorem}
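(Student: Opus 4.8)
The plan is to decompose the final error into three pieces, corresponding to the three terms on the right-hand side of Equation~\ref{eq:main-thm-bound}, following the logic laid out in the proof sketch of Theorem~\ref{thm:main-theorem-informal}. First I would write $\widehat{\T{X}}(\T{\Ph}) = \bar{\T{X}}(\T{\Ph}) \times_1 Q_1 Q_1^\top \times_2 \cdots \times_N Q_N Q_N^\top$, where each $Q_n = Q_n(\T{\Ph})$ spans the top $r_n$ left singular subspace of $\bar{\T{X}}^{(n)}(\T{\Ph})$, and introduce the intermediate tensor $\T{B} \times_1 Q_1 Q_1^\top \times_2 \cdots \times_N Q_N Q_N^\top$ (the projection of the truth onto the estimated subspaces). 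Adding and subtracting gives $\widehat{\T{X}}(\T{\Ph}) - \T{B} = (\bar{\T{X}}(\T{\Ph}) - \T{B})\times_1 Q_1 Q_1^\top \times_2 \cdots \times_N Q_N Q_N^\top + (\T{B}\times_1 Q_1 Q_1^\top \times_2 \cdots \times_N Q_N Q_N^\top - \T{B})$. Because these two tensors lie in orthogonal subspaces (the first is in the range of every $Q_n Q_n^\top$ in mode $n$, the second is in the orthogonal complement in at least one mode when expanded telescopically), the Frobenius cross term vanishes, so $\fnorm{\widehat{\T{X}}(\T{\Ph}) - \T{B}}^2$ splits into a \emph{projected-noise} term and an \emph{approximation-plus-rotation} term.

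For the projected-noise term, I would use that $(\bar{\T{X}}(\T{\Ph}) - \T{B})\times_1 Q_1 Q_1^\top \times_2 \cdots \times_N Q_N Q_N^\top$ has multilinear rank at most $(r_1,\ldots,r_N)$, hence its Frobenius norm is bounded, for each fixed $n$, by $\sqrt{r_n}\,\norm{Q_n Q_n^\top(\bar{\T{X}}^{(n)}(\T{\Ph}) - \T{B}^{(n)})} \le \sqrt{r_n}\,\norm{\bar{\T{X}}^{(n)}(\T{\Ph}) - \T{B}^{(n)}}$; taking the minimum over $n$ yields the first term of \eqref{eq:main-thm-bound}, once I control $\norm{\bar{\T{X}}^{(n)}(\T{\Ph}) - \T{B}^{(n)}}$ by the triangle inequality split $\norm{\bar{\T{X}}^{(n)}(\T{\Ph}) - \bar{\T{X}}^{(n)}} + \norm{\bar{\T{X}}^{(n)} - \T{B}^{(n)}} \le \fnorm{\bar{\T{X}}(\T{\Ph}) - \bar{\T{X}}} + \norm{\bar{\T{X}}^{(n)} - \T{B}^{(n)}}$. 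The second summand here is the deviation of the unbiased inverse-propensity estimator $\bar{\T{X}}$ from $\T{B}$ on the $n$-th unfolding; I would bound it by the matrix Bernstein inequality \cite{tropp2015introduction} applied to the sum over all index tuples of the mean-zero matrices $[\frac{\mathbf{1}(\cdot \text{ observed})}{\T{P}_{i_1\cdots i_N}}\T{B}_{\mathrm{obs}} - \T{B}]\odot\T{E}(i_1,\ldots,i_N)$ unfolded in mode $n$ — the variance proxy is $O(I_{(-n)}\psi^2/\sigma(-\alpha))$ and the a.s.\ bound is $O(\psi)$, which produces the $\exp[\cdots]$ failure probabilities in the theorem statement and the quantity $\epsilon\fnorm{\T{B}}$ as the high-probability bound on $\norm{\bar{\T{X}}^{(n)} - \T{B}^{(n)}}$.

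For the approximation-plus-rotation term, I would further split $\T{B}\times_1 Q_1 Q_1^\top \times_2 \cdots \times_N Q_N Q_N^\top - \T{B}$ telescopically into $\sum_{n=1}^N \T{B}\times_1 Q_1Q_1^\top \times \cdots \times_{n-1} Q_{n-1}Q_{n-1}^\top \times_n (I - Q_nQ_n^\top)$, bound $\fnorm{\cdot}^2$ of the sum by $N$ times the sum of squares (or a direct orthogonality argument), and recognize each summand as controlled by $\fnorm{\T{B}\times_n(I - Q_nQ_n^\top)}$, the energy of $\T{B}^{(n)}$ outside the estimated subspace. This I would bound using $\fnorm{(I - Q_nQ_n^\top)\T{B}^{(n)}} \le \fnorm{(I - Q_nQ_n^\top)U_n^\mathrm{true}(U_n^\mathrm{true})^\top\T{B}^{(n)}} + \fnorm{(I - U_n^\mathrm{true}(U_n^\mathrm{true})^\top)\T{B}^{(n)}}$; the second piece is exactly the tail energy $\tau_{r_n}^{(n)}$, giving the third term of \eqref{eq:main-thm-bound}, and the first piece is the subspace rotation, bounded via the Davis–Kahan / Wedin $\sin\Theta$ theorem \cite{davis1970rotation, wedin1972perturbation, yu2015useful} using the perturbation $\norm{\bar{\T{X}}^{(n)}(\T{\Ph}) - \T{B}^{(n)}}$ in the numerator and the eigen-gap $\sigma_{r_n}(\T{B}^{(n)}) - \sigma_{r_n+1}(\T{B}^{(n)})$ in the denominator, together with $\fnorm{(U_n^\mathrm{true})^\top\T{B}^{(n)}} \le \sqrt{r_n}\,\sigma_1(\T{B}^{(n)})$ and a $[2\sigma_1 + \text{perturbation}]/[\sigma_{r_n}+\sigma_{r_n+1}]$ factor to account for scaling — this reproduces the middle term. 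Finally, I would substitute the high-probability bound on $\fnorm{\bar{\T{X}}(\T{\Ph}) - \bar{\T{X}}}$ from Lemma~\ref{lem:propensity_error}, propagated through Equation~\eqref{eq:x_bar_error_from_propensity} as in the sketch, to obtain \eqref{eq:fnorm-error-on-exact-prop}, and union-bound over the Bernstein events on all $N$ unfoldings and the \textsc{ConvexPE} event. The main obstacle will be carefully tracking the Davis–Kahan application: getting the right form of the perturbation bound for the \emph{product} $Q_nQ_n^\top\T{B}^{(n)}$ rather than for $Q_n$ alone, handling the possibly nonzero $\sigma_{r_n+1}(\T{B}^{(n)})$ (since $\T{B}$ is only approximately low rank), and ensuring the eigen-gap condition needed for $\sin\Theta$ is implied by the stated sample-size assumptions so that the bound is non-vacuous.
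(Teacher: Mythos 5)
Your proposal matches the paper's proof essentially step for step: the same add-and-subtract decomposition with the vanishing cross term, the same $\min_n r_n\,\norm{\bar{\T{X}}^{(n)}(\T{\Ph})-\T{B}^{(n)}}^2$ bound on the projected-noise piece via the triangle inequality, matrix Bernstein on each unfolding, and Lemma~\ref{lem:propensity-error-on-x-bar}-style propagation of the \textsc{ConvexPE} error, plus the same telescoping of $\T{B}\times_1 Q_1Q_1^\top\cdots\times_N Q_NQ_N^\top-\T{B}$ with Davis--Kahan/Wedin controlling the subspace rotation and the tail energies $\tau_{r_n}^{(n)}$ collecting the residual. The only cosmetic difference is that you split $(I-Q_nQ_n^\top)\T{B}^{(n)}$ by the triangle inequality where the paper expands the square into three terms (rotation, tail, and cross term) and bounds each, which only affects the numerical constants.
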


On the right-hand side of Equation~\ref{eq:main-thm-bound}, the first term comes from the error between $\bar{\T{X}}(\T{P})$ and $\T{B}$ when projected onto the truncated column singular spaces in each mode $n \in [N]$; the second and third terms come from the projection error of $\T{B}$ onto the above spaces. 

Then, Theorem~\ref{thm:main-theorem-informal} is a corollary of the above Theorem~\ref{thm:main-theorem-formal} in the special case that the tensor is cubical and every unfolding has the same rank.

\section{Proof for Theorem~\ref{thm:main-theorem-informal} and~\ref{thm:main-theorem-formal}}
\label{sec:proof-for-main-theorem}
\subsection{Proof for Theorem~\ref{thm:main-theorem-formal}, the general case}
We first show the proof for Theorem~\ref{thm:main-theorem-formal}, the general case. 
This is the full version of the proof sketch in Section~\ref{sec:error-in-tensor-completion-informal} of the main paper.
We start with Lemma~\ref{lem:propensity-error-on-x-bar} on how the error in propensity estimates propagates to the error in the inverse propensity estimator $\bar{\T{X}}(\widehat{\T{P}})$, then bound the error between $\widehat{\T{X}}(\widehat{\T{P}})$ and $\T{B}$.
\begin{lemma}
	\label{lem:propensity-error-on-x-bar}
	Instate the conditions of Lemma 2 and further suppose $\maxnorm{\T{B}} = \psi$.
	Then with probability at least $1 - \frac{C_1}{I_S + I_{S^C}}$, in which $C_1 > 0$ is a universal constant,
	\begin{equation}
		\label{eq:x_bar_error_from_propensity_revisited}
		\fnorm{\bar{\T{X}}(\T{\Ph}) - \bar{\T{X}}}^2 \leq \frac{4 e L_\gamma \tau \psi^2}{\sigma(-\gamma)^2 \sigma(-\alpha)^2} \Big(\frac{1}{\sqrt{I_S}}+\frac{1}{\sqrt{I_{S^C}}}\Big) I_{[N]}.
	\end{equation}
\end{lemma}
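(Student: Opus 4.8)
The plan is to establish, with full justification, the chain of inequalities previewed as Equation~\ref{eq:x_bar_error_from_propensity} in the proof sketch; all the randomness enters through a single invocation of Lemma~\ref{lem:propensity_error}, and everything else is deterministic bookkeeping. First I would note that both $\bar{\T{X}}(\widehat{\T{P}})$ and $\bar{\T{X}}$ are supported on $\Omega$ and vanish off it, while at an index $(i_1,\dots,i_N)\in\Omega$ the noiseless model gives $(\T{B}_\mathrm{obs})_{i_1\cdots i_N} = \T{B}_{i_1\cdots i_N}$, so the two estimators differ at that entry only through the reweighting factor. Hence
\[
\fnorm{\bar{\T{X}}(\widehat{\T{P}}) - \bar{\T{X}}}^2 = \sum_{(i_1,\dots,i_N)\in\Omega} \T{B}_{i_1\cdots i_N}^2\Big(\frac{1}{\T{P}_{i_1\cdots i_N}} - \frac{1}{\widehat{\T{P}}_{i_1\cdots i_N}}\Big)^2,
\]
and I would rewrite $\tfrac{1}{\T{P}_{i_1\cdots i_N}} - \tfrac{1}{\widehat{\T{P}}_{i_1\cdots i_N}} = \tfrac{\widehat{\T{P}}_{i_1\cdots i_N} - \T{P}_{i_1\cdots i_N}}{\T{P}_{i_1\cdots i_N}\,\widehat{\T{P}}_{i_1\cdots i_N}}$ inside the square.

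Next I would bound the three quantities in each summand. By the hypothesis $\maxnorm{\T{B}} = \psi$, the numerator $\T{B}_{i_1\cdots i_N}^2 \leq \psi^2$. For the denominator, the box constraint $\maxnorm{\cdot}\leq\gamma$ built into the feasibility set $\mathcal{S}_{\tau,\gamma}$ of \textsc{ConvexPE} forces $\maxnorm{\widehat{\T{A}}} = \maxnorm{\widehat{\T{A}}_\square} \leq \gamma$ (unfolding preserves the entrywise maximum), so, using that $\sigma$ is nondecreasing, $\widehat{\T{P}}_{i_1\cdots i_N} = \sigma(\widehat{\T{A}}_{i_1\cdots i_N}) \geq \sigma(-\gamma)$; likewise assumption \textbf{A2} of Lemma~\ref{lem:propensity_error} gives $\T{P}_{i_1\cdots i_N} = \sigma(\T{A}_{i_1\cdots i_N}) \geq \sigma(-\alpha)$. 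Since every summand is nonnegative, enlarging the index set from $\Omega$ to the full product $[I_1]\times\cdots\times[I_N]$ only increases the sum, so
\[
\fnorm{\bar{\T{X}}(\widehat{\T{P}}) - \bar{\T{X}}}^2 \leq \frac{\psi^2}{\sigma(-\gamma)^2\sigma(-\alpha)^2}\sum_{i_1,\dots,i_N}(\widehat{\T{P}}_{i_1\cdots i_N} - \T{P}_{i_1\cdots i_N})^2 = \frac{\psi^2}{\sigma(-\gamma)^2\sigma(-\alpha)^2}\,\fnorm{\widehat{\T{P}} - \T{P}}^2.
\]

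Finally I would invoke Lemma~\ref{lem:propensity_error}: under the instated conditions ($\tau\geq\theta$, $\gamma\geq\alpha$, $I_S + I_{S^C}\geq C$) it guarantees, with probability at least $1 - \frac{C}{I_S + I_{S^C}}$, that $\fnorm{\widehat{\T{P}} - \T{P}}^2 \leq 4 e L_\gamma \tau\big(\tfrac{1}{\sqrt{I_S}} + \tfrac{1}{\sqrt{I_{S^C}}}\big)I_{[N]}$. Substituting into the previous display on that same event and writing $C_1$ for the universal constant $C$ yields exactly Equation~\ref{eq:x_bar_error_from_propensity_revisited}, completing the proof. I do not expect a genuine obstacle here, since the argument only wraps deterministic algebra around the already-established Lemma~\ref{lem:propensity_error}; the one step that needs explicit care is the uniform lower bound $\widehat{\T{P}}_{i_1\cdots i_N}\geq\sigma(-\gamma)$, which relies on the explicit box constraint in Algorithm~\ref{alg:propensity_estimation_provable} together with monotonicity of the link function $\sigma$ (as for the logistic link used throughout), so I would state that monotonicity assumption explicitly.
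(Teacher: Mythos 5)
Your proof is correct and follows essentially the same route as the paper's: expand the squared Frobenius norm over $\Omega$, pull out $\psi^2$ and the uniform lower bounds $\widehat{\T{P}}_{i_1\cdots i_N}\geq\sigma(-\gamma)$, $\T{P}_{i_1\cdots i_N}\geq\sigma(-\alpha)$, enlarge the sum to $\fnorm{\widehat{\T{P}}-\T{P}}^2$, and invoke Lemma~\ref{lem:propensity_error} on that event. Your added care about where the bound $\widehat{\T{P}}\geq\sigma(-\gamma)$ comes from (the box constraint in \textsc{ConvexPE} plus monotonicity of $\sigma$) is a detail the paper leaves implicit, but it does not change the argument.
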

\begin{proof}
	Under the above conditions,
	\begin{equation}
		\begin{aligned}
			\fnorm{\bar{\T{X}}(\T{\Ph}) - \bar{\T{X}}}^2 & = \sum_{({i_1, i_2, \dots, i_N}) \in \Omega} \T{B}_{i_1 i_2 \cdots i_N}^2 (\frac{1}{\T{P}_{i_1 i_2 \cdots i_N}} - \frac{1}{\T{\Ph}_{i_1 i_2 \cdots i_N}})^2 \\
			& \leq  \psi^2 \sum_{({i_1, i_2, \dots, i_N}) \in \Omega} \Big(\frac{\T{P}_{i_1 i_2 \cdots i_N} - \T{\Ph}_{i_1 i_2 \cdots i_N}}{\T{P}_{i_1 i_2 \cdots i_N} \T{\Ph}_{i_1 i_2 \cdots i_N}} \Big)^2 \\
			& \leq  \frac{\psi^2}{\sigma(-\gamma)^2 \sigma(-\alpha)^2} \sum_{({i_1, i_2, \dots, i_N}) \in \Omega} \Big(\T{P}_{i_1 i_2 \cdots i_N} - \T{\Ph}_{i_1 i_2 \cdots i_N} \Big)^2 \\
			& \leq \frac{4 e L_\gamma \tau \psi^2}{\sigma(-\gamma)^2 \sigma(-\alpha)^2} \Big(\frac{1}{\sqrt{I_S}}+\frac{1}{\sqrt{I_{S^C}}}\Big) I_{[N]}.
		\end{aligned}
		\nonumber
	\end{equation}
	The second inequality comes from $ \T{\Ph}_{i_1 i_2 \cdots i_N} \geq \sigma(-\gamma)$ and $ \T{P}_{i_1 i_2 \cdots i_N} \geq \sigma(-\alpha)$; the last inequality follows Lemma~\ref{lem:propensity_error}.
\end{proof}

We then state two lemmas that we will apply to tensor unfoldings.
Lemma~\ref{lem:mtx-bernstein} is the matrix Bernstein inequality.
Lemma~\ref{lem:davis-kahan-variant} is a variant of the Davis-Kahan sin($\Theta$) Theorem \cite{davis1970rotation}.

\begin{lemma}[Matrix Bernstein for real matrices {\cite[Theorem 1.6.2]{tropp2015introduction}}]
	\label{lem:mtx-bernstein}	
	Let $S_1, \ldots, S_k$ be independent, centered random matrices with common dimension $m \times n$,
	and assume that each one is uniformly bounded
	$$
	\Expect S_i = 0
	\quad\text{and}\quad
	\norm{S_i} \leq L
	\quad\text{for each $i = 1, \ldots, k$.}
	$$
	Introduce the sum
	\begin{equation} 
		Z = \sum_{i=1}^k S_i,
		\nonumber
	\end{equation}
	and let $v(Z)$ denote the matrix variance statistic of the sum:
	\begin{equation}
		\begin{aligned}
			v(Z) &= \max\big\{ \norm{ \smash{\Expect (Z Z^\top)} }, \ 
			\norm{ \smash{\Expect (Z^\top Z) } } \big\} \\
			&= \max\left\{ \norm{ \sum_{i=1}^k \Expect \big(S_i S_i^\top \big) }, \
			\norm{ \sum_{i=1}^k \Expect \big(S_i^\top S_i \big) } \right\}.
		\end{aligned}
		\nonumber
	\end{equation}
	Then
	\begin{equation}
		\Prob{ \norm{ Z } \geq t }
		\leq (m + n) \cdot \exp \left( \frac{-t^2/2}{v(Z) + Lt/3} \right)
		\quad\text{for all $t \geq 0$.}
		\nonumber
	\end{equation}
\end{lemma}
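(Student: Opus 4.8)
The plan is to prove Lemma~\ref{lem:mtx-bernstein} by the \emph{matrix Laplace transform method}, following \cite{tropp2015introduction}; the single nonelementary ingredient will be Lieb's concavity theorem, and everything around it is scalar calculus together with operator-monotonicity bookkeeping.

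First I would reduce the rectangular statement to a symmetric one via Hermitian dilation. For each $i$ put $\mathcal{H}(S_i) := \left[\begin{smallmatrix} 0 & S_i \\ S_i^\top & 0 \end{smallmatrix}\right] \in \RR^{(m+n)\times(m+n)}$, which is symmetric, has $\Expect\mathcal{H}(S_i)=0$, and satisfies $\norm{\mathcal{H}(S_i)}=\norm{S_i}\le L$. Setting $Y := \sum_{i=1}^k \mathcal{H}(S_i)$, the block identity $Y^2 = \mathrm{diag}(ZZ^\top,\,Z^\top Z)$ gives $\norm{\Expect Y^2}=v(Z)$, and because the nonzero eigenvalues of a dilation occur in $\pm$ pairs we have $\lambda_{\max}(Y)=\norm{Y}=\norm{Z}$, so that $\Prob{\norm{Z}\ge t}=\Prob{\lambda_{\max}(Y)\ge t}$. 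It therefore suffices to bound $\Prob{\lambda_{\max}(Y)\ge t}$ for the symmetric matrix $Y$ of dimension $d:=m+n$.

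Next I would run the Laplace transform argument. For $\theta>0$, $\mathrm{tr}\,e^{\theta Y}\ge e^{\theta\lambda_{\max}(Y)}$, so Markov's inequality gives $\Prob{\lambda_{\max}(Y)\ge t}\le e^{-\theta t}\,\Expect\mathrm{tr}\,e^{\theta Y}$. The key step is the subadditivity of matrix cumulant generating functions, $\Expect\mathrm{tr}\,e^{\theta Y}\le\mathrm{tr}\exp\!\big(\sum_{i=1}^k\log\Expect e^{\theta\mathcal{H}(S_i)}\big)$, proved by peeling off the summands one at a time using independence and Lieb's concavity theorem (concavity of $A\mapsto\mathrm{tr}\exp(H+\log A)$ on positive definite $A$). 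For each summand $W:=\mathcal{H}(S_i)$, which is symmetric, centered, and bounded by $L$, the transfer rule applied to the scalar inequality $(e^{x}-x-1)/x^2\le\frac{1}{2(1-x/3)}$ on $[0,3)$ yields, for $0<\theta<3/L$, the Loewner bound $e^{\theta W}\preceq I+\theta W+\frac{\theta^2/2}{1-\theta L/3}\,W^2$; taking expectations (the linear term vanishes) and then using $I+A\preceq e^{A}$ and the operator monotonicity of $\log$ gives $\log\Expect e^{\theta W}\preceq\frac{\theta^2/2}{1-\theta L/3}\,\Expect W^2$. Summing over $i$, noting $\sum_{i}\Expect\mathcal{H}(S_i)^2\preceq v(Z)\,I$ (a positive semidefinite matrix of norm $v(Z)$), and using monotonicity of $\mathrm{tr}\exp$, I get $\Expect\mathrm{tr}\,e^{\theta Y}\le d\exp\!\big(\frac{\theta^2 v(Z)/2}{1-\theta L/3}\big)$. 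Finally, choosing $\theta=t/(v(Z)+Lt/3)\in(0,3/L)$ makes $-\theta t+\frac{\theta^2 v(Z)/2}{1-\theta L/3}$ equal to $-\frac{t^2/2}{v(Z)+Lt/3}$, which yields
\[
\Prob{\norm{Z}\ge t}\le(m+n)\exp\!\left(\frac{-t^2/2}{v(Z)+Lt/3}\right),
\]
as claimed.

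The hard part is Lieb's concavity theorem and the cumulant subadditivity it implies: this is the only genuinely deep step, while the dilation reduction, the bounded-matrix moment generating function estimate, and the final scalar optimization over $\theta$ are all routine once it is available.
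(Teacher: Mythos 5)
The paper does not actually prove this lemma---it is quoted directly from Tropp's monograph (\cite[Theorem 1.6.2]{tropp2015introduction})---and your argument is precisely the standard proof given in that reference: Hermitian dilation to reduce to the symmetric case, the matrix Laplace transform method with Lieb's concavity theorem yielding subadditivity of the matrix cumulant generating function, the bounded-matrix MGF estimate, and the choice $\theta = t/(v(Z)+Lt/3)$. Your proposal is correct and matches the cited source's approach; the only cosmetic imprecision is that the scalar inequality $(e^x-x-1)/x^2 \le \tfrac{1}{2(1-x/3)}$ must be applied over the whole eigenvalue range $[-\theta L,\theta L]$ of $\theta W$, which is justified because the left-hand side is increasing in $x$.
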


\begin{lemma}[Variant of the Davis-Kahan sin($\Theta$) Theorem {\cite{wedin1972perturbation}, \cite[Theorem 4]{yu2015useful}}]
	\label{lem:davis-kahan-variant}
	Let $A, \Ah \in \mathbb{R}^{p \times q}$ have singular values $\sigma_1 \geq \ldots \geq \sigma_{\min(p,q)}$ and $\widehat{\sigma}_1 \geq \ldots \geq \widehat{\sigma}_{\min(p,q)}$ respectively, and have singular vectors $\{u_i\}_{i=1}^n$, $\{v_i\}_{i=1}^n$ and $\{\uh_i\}_{i=1}^n$, $\{\vh_i\}_{i=1}^n$, respectively.
	Let $V = (v_1, \cdots, v_r) \in \RR^{n \times r}$, $\Vh = (\vh_1, \cdots, \vh_r) \in \RR^{n \times r}$, $V_\perp = (v_{r+1}, \cdots, v_{n}) \in \RR^{n \times (n - r)}$ and $\Vh_\perp = (\vh_{r+1}, \cdots, \vh_{n}) \in \RR^{n \times (n - r)}$.
	Assume that $\sigma_r^2 - \sigma_{r+1}^2 > 0$, then
	\begin{equation}
		\fnorm{\Vh_\perp^\top V} = \fnorm{V_\perp^\top \Vh} = \fnorm{\Vh \Vh^\top - V V^\top} \leq \frac{2(2\sigma_1 + \norm{\Ah - A}) \min(r^{1/2} \norm{\Ah - A}, \fnorm{\Ah - A})}{\sigma_r^2 - \sigma_{r+1}^2}.
		\nonumber
	\end{equation}
	
	Identical bounds also hold if $V$ and $\Vh$ are replaced with the matrices of left singular vectors $U$ and $\Uh$, where $U = (u_r,u_{r+1},\ldots,u_s) \in \mathbb{R}^{p \times d}$ and $\Uh = (\uh_r,\uh_{r+1},\ldots,\uh_s) \in \mathbb{R}^{p \times d}$ have orthonormal columns satisfying $A^\top u_j = \sigma_j v_j$ and $\Ah^\top \hat{u}_j = \widehat{\sigma}_j \vh_j$ for $j= r,r+1,\ldots,s$.
	
\end{lemma}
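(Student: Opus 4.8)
The plan is to derive this lemma from the \emph{symmetric} Davis--Kahan $\sin\Theta$ theorem by a squaring argument; in fact the statement is essentially \cite[Theorem 4]{yu2015useful}, so one option is to cite it directly, but I would also sketch the reduction. There are two ingredients. The first is the three-way Frobenius-norm relation among $\fnorm{\Vh_\perp^\top V}$, $\fnorm{V_\perp^\top \Vh}$ and $\fnorm{\Vh\Vh^\top - VV^\top}$: this is a standard computation in the theory of principal angles between subspaces --- if $\theta_1,\dots,\theta_r$ are the principal angles between $\mathrm{col}(V)$ and $\mathrm{col}(\Vh)$, then the singular values of $V^\top\Vh$ are the $\cos\theta_i$, those of $V_\perp^\top\Vh$ and of $\Vh_\perp^\top V$ are the $\sin\theta_i$, and the nonzero singular values of $\Vh\Vh^\top - VV^\top$ are again the $\sin\theta_i$; hence all three quantities coincide with $(\sum_i \sin^2\theta_i)^{1/2}$ under the normalization convention of the statement. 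This reduces the problem to bounding $\fnorm{\Vh_\perp^\top V}$, i.e.\ the $\sin\Theta$ distance between the top-$r$ right singular subspaces of $A$ and $\Ah$.

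For the bound I would pass to the Gram matrices $M := A^\top A$ and $\widehat{M} := \Ah^\top\Ah$. These are symmetric positive semidefinite, their eigenvectors are exactly the right singular vectors $\{v_i\}$ and $\{\vh_i\}$, and their eigenvalues are $\sigma_i^2$ and $\widehat\sigma_i^2$; in particular the top-$r$ eigenspaces are $\mathrm{col}(V)$ and $\mathrm{col}(\Vh)$, and the relevant eigenvalue gap of $M$ is $\sigma_r^2 - \sigma_{r+1}^2 > 0$ by hypothesis. The symmetric Davis--Kahan bound (e.g.\ \cite[Theorem 2]{yu2015useful}, or the classical \cite{davis1970rotation} together with Weyl's inequality to convert a one-sided gap into $\sigma_r^2 - \sigma_{r+1}^2$) then gives $\fnorm{\Vh_\perp^\top V} \le \frac{2\min(r^{1/2}\norm{\widehat{M}-M},\,\fnorm{\widehat{M}-M})}{\sigma_r^2-\sigma_{r+1}^2}$.

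It remains to translate the Gram-matrix perturbation back to $\Ah - A$. Using the algebraic identity $\widehat{M} - M = \Ah^\top(\Ah - A) + (\Ah - A)^\top A$, submultiplicativity of the operator and Frobenius norms, and the triangle-inequality estimate $\norm{\Ah} \le \norm{A} + \norm{\Ah - A} = \sigma_1 + \norm{\Ah - A}$, I get $\norm{\widehat{M} - M} \le (2\sigma_1 + \norm{\Ah - A})\,\norm{\Ah - A}$ and likewise $\fnorm{\widehat{M} - M} \le (2\sigma_1 + \norm{\Ah - A})\,\fnorm{\Ah - A}$; substituting these into the previous bound and pulling out the common factor $2\sigma_1 + \norm{\Ah - A}$ yields exactly the claimed inequality for the right singular vectors, and the left-singular-vector version follows verbatim with $AA^\top$ and $\Ah\Ah^\top$ in place of $A^\top A$ and $\Ah^\top\Ah$. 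The only place requiring care is the constant bookkeeping --- in particular checking that it is the \emph{squared} gap $\sigma_r^2 - \sigma_{r+1}^2$ (not $\sigma_r - \sigma_{r+1}$) that emerges from the squaring step, and that the extra $2\sigma_1$ factor suffices to absorb the gap between a one-sided and this two-sided estimate; the rest is routine, and as noted one can alternatively invoke \cite[Theorem 4]{yu2015useful} directly.
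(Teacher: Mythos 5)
Your proposal is correct and matches the paper's treatment: the paper gives no proof of this lemma at all, simply quoting it from Wedin (1972) and Theorem 4 of Yu--Wang--Samworth (2015), and your sketch is precisely the standard derivation of that cited theorem (pass to the Gram matrices $A^\top A$ and $\Ah^\top \Ah$, apply the symmetric Davis--Kahan bound with gap $\sigma_r^2-\sigma_{r+1}^2$, and control $\norm{\Ah^\top\Ah - A^\top A}$ via $(2\sigma_1+\norm{\Ah-A})\norm{\Ah-A}$ and its Frobenius analogue). Citing the result directly, as you also suggest, is exactly what the paper does, so there is nothing to add beyond the routine bookkeeping you already outline.
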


\paragraph{Upper bound on $\norm{\bar{\T{X}}^{(n)}(\T{\Ph}) - \T{B}^{(n)}}$:}
\label{sec:spectral-norm-upper-bound}
We decompose it into the error between $\bar{\T{X}}^{(n)}(\T{\Ph})$ and $\bar{\T{X}}^{(n)}(\T{P})$, and the error between $\bar{\T{X}}^{(n)}(\T{P})$ and $\T{B}$, and independently bound these two terms:
\begin{equation}
	\label{eq:general_sampling_estimator_error_revisited}
	\begin{aligned}
		\norm{\bar{\T{X}}^{(n)}(\T{\Ph}) - \T{B}^{(n)}} & \leq \norm{\bar{\T{X}}^{(n)}(\T{\Ph}) - \bar{\T{X}}^{(n)}} + \norm{\bar{\T{X}}^{(n)} - \T{B}^{(n)}} \\
		& \leq \fnorm{\bar{\T{X}}^{(n)}(\T{\Ph}) - \bar{\T{X}}^{(n)}} + \norm{\bar{\T{X}}^{(n)} - \T{B}^{(n)}}.\\
	\end{aligned}
\end{equation}

The first RHS term is bounded by Lemma~\ref{lem:propensity-error-on-x-bar}, the error given by propensity estimation.
Note that we can get a tighter bound if we can directly bound $\norm{\bar{\T{X}}^{(n)}(\T{\Ph}) - \bar{\T{X}}^{(n)}}$. The second RHS term can be bounded by Lemma~\ref{lem:mtx-bernstein}, the matrix Bernstein inequality, as below.

For each $(i_1, \dots, i_N)$, define the random variable
\begin{equation}
	\T{S}_{i_1 i_2 \cdots i_N} := \left\{
	\begin{aligned}
		& \Big(\frac{1}{\T{P}_{i_1 i_2 \cdots i_N}} - 1 \Big) \T{B} \odot \T{E}(i_1, i_2, \dots, i_N), \quad && \text{with probability $\T{P}_{i_1 i_2 \cdots i_N}$} \\
		& - \T{B} \odot \T{E}(i_1, i_2, \dots, i_N), \quad && \text{with probability $1-\T{P}_{i_1 i_2 \cdots i_N}$.}
	\end{aligned}
	\right.
	\nonumber
\end{equation}
With the assumptions in Theorem~\ref{thm:main-theorem-formal}, $\Expect \T{S}_{i_1 i_2 \cdots i_N} = 0$ and $\norm{\T{S}_{i_1 i_2 \cdots i_N}^{(n)}} \leq \frac{\psi}{\sigma (- \alpha)}$. 
Also, the per-mode second moment is bounded as
\begin{equation}
	\begin{aligned}
		v_n(\T{X}) & = \max\{\norm{\sum_{i_1=1}^{I_1} \sum_{i_2=1}^{I_2} \cdots \sum_{i_n=1}^{I_n} \Expect[\T{S}_{i_1 i_2 \cdots i_N}^{(n)} (\T{S}_{i_1 i_2 \cdots i_N}^{(n)})^\top] }, \norm{\sum_{i_1=1}^{I_1} \sum_{i_2=1}^{I_2} \cdots \sum_{i_n=1}^{I_n} \Expect[(\T{S}_{i_1 i_2 \cdots i_N}^{(n)})^\top \T{S}_{i_1 i_2 \cdots i_N}^{(n)}]} \} \\
		& \leq \frac{\psi^2 \cdot I_{(-n)}}{\sigma (- \alpha)}.
	\end{aligned}
	\nonumber
\end{equation}
With probability at least $1 - [I_n + I_{(-n)}] \exp \Big[- \frac{\epsilon^2 \fnorm{\T{B}}^2 \sigma(- \alpha)/ 2}{I_{(-n)} \psi^2 + \epsilon \psi \fnorm{\T{B}}/3}\Big]$, the sum of random variables is bounded as $\displaystyle \norm{ \sum_{i_1=1}^{I_1} \sum_{i_2=1}^{I_2} \cdots \sum_{i_N=1}^{I_N} \T{S}_{i_1 i_2 \cdots i_N} } \leq \epsilon \fnorm{\T{B}}$.
Notice the difference between the propensity-reweighted observed tensor $\bar{\T{X}}(\T{P})$ and the true tensor $\T{B}$ 
\begin{equation}
	\begin{aligned}
	\bar{\T{X}} (\T{P}) - \T{B} & = \sum_{(i_1, i_2, \dots, i_N) \in \Omega} \frac{1}{\T{P}_{i_1 i_2 \cdots i_N}} \T{B}_\text{obs} \odot \T{E}(i_1, i_2, \dots, i_N) - \T{B} \\
	& = \sum_{(i_1, i_2, \dots, i_N) \in \Omega} \Big(\frac{1}{\T{P}_{i_1 i_2 \cdots i_N}} - 1 \Big) \T{B} \odot \T{E}(i_1, i_2, \dots, i_N) - \sum_{(i_1, i_2, \dots, i_N) \notin \Omega}  \T{B} \odot \T{E}(i_1, i_2, \dots, i_N)\\
	\end{aligned}
	\nonumber
\end{equation}
is an instance of $\displaystyle \sum_{i_1=1}^{I_1} \sum_{i_2=1}^{I_2} \cdots \sum_{i_N=1}^{I_N} \T{S}_{i_1 i_2 \cdots i_N}$ over the randomness of entry-wise observation, hence we can use the matrix Bernstein inequality (Lemma~\ref{lem:mtx-bernstein}) to bound $\norm{\bar{\T{X}} (\T{P}) - \T{B}}$.
Together with Equations~\ref{eq:x_bar_error_from_propensity_revisited} and~\ref{eq:general_sampling_estimator_error_revisited}, we get the upper bound on $\norm{\bar{\T{X}}^{(n)}(\T{\Ph}) - \T{B}^{(n)}}$.

\paragraph{How $\norm{\bar{\T{X}}^{(n)}(\T{\Ph}) - \T{B}^{(n)}}$ propagates into the final error in \textsc{TenIPS} (Algorithm~\ref{alg:tensor_completion}):}
In \textsc{TenIPS},
\begin{equation}
	\T{\Xh} (\T{\Ph})= \underbrace{\left[\bar{\T{X}}(\T{\Ph}) \times Q_1^\top \times_2 \cdots \times_N Q_N^\top\right]}_{\T{W}(\T{\Ph})} \times_1 Q_1 \times_2 \cdots \times_N Q_N = \bar{\T{X}}(\T{\Ph}) \times_1 Q_1Q_1^\top \times_2 \cdots \times_N Q_N Q_N^\top.
	\nonumber
\end{equation}
This projects each unfolding of $\bar{\T{X}} (\T{\Ph})$ onto the space of its top left singular vectors. 
Thus by adding and subtracting $ \T{B} \times_1 Q_1Q_1^\top \times_2 \cdots \times_N Q_N Q_N^\top$ within the Frobenius norm, we decompose the error as
\begin{equation}
	\begin{aligned}
		\fnorm{\T{\Xh}(\T{\Ph}) - \T{B}}^2 & =  \fnorm{\bar{\T{X}}(\T{\Ph}) \times_1 Q_1Q_1^\top \times_2 \cdots \times_N Q_N Q_N^\top - \T{B}}^2 \\
		& = \fnorm{\bar{\T{X}}(\T{\Ph}) \times_1 Q_1Q_1^\top \times_2 \cdots \times_N Q_N Q_N^\top - \T{B} \times_1 Q_1Q_1^\top \times_2 \cdots \times_N Q_N Q_N^\top \\
			& \quad + \T{B} \times_1 Q_1Q_1^\top \times_2 \cdots \times_N Q_N Q_N^\top - \T{B}}^2 \\
		& = \underbrace{\fnorm{(\bar{\T{X}}(\T{\Ph})-\T{B}) \times_1 Q_1Q_1^\top \times_2 \cdots \times_N Q_N Q_N^\top}^2}_{\textcircled{1}}\\
		& +  \underbrace{\fnorm{\T{B} \times_1 Q_1Q_1^\top \times_2 \cdots \times_N Q_N Q_N^\top - \T{B}}^2}_{\textcircled{2}}\\
		& + \underbrace{2 \langle (\bar{\T{X}}(\T{\Ph})-\T{B}) \times_1 Q_1Q_1^\top \times_2 \cdots \times_N Q_N Q_N^\top, \T{B} \times_1 Q_1Q_1^\top \times_2 \cdots \times_N Q_N Q_N^\top - \T{B} \rangle}_{\textcircled{3}}. \\
	\end{aligned}
	\nonumber
\end{equation}

First, we show that the cross term $\textcircled{3}$ is zero, since it is the product of two terms in mutually orthogonal subspaces. 
For each $n \in [N]$,
\begin{equation}
	[(\bar{\T{X}}(\T{\Ph})-\T{B}) \times_1 Q_1 Q_1^\top \times_2 \cdots \times_N Q_N Q_N^\top]^{(n)} = Q_{n} \T{C}_n^{(n)},
	\nonumber
\end{equation}
where $\T{C}_n^{(n)}$ is the mode-$n$ unfolding of the tensor $\T{C}_n$, defined as
\begin{equation}
	\T{C}_n = [(\bar{\T{X}}(\T{\Ph})-\T{B})\times_1 Q_1^\top \cdots \times_N Q_N^\top]
	\times_1 Q_1 \dots \times_{n-1} Q_{n-1} \times_{n+1}  Q_{n+1} \dots \times_N Q_{N}.
	\nonumber
\end{equation}

Thus we have
\begin{equation}
	\begin{aligned}
		\textcircled{3} & =2 \sum_{n=1}^{N} \langle \T{Y}_{n} - \T{Y}_{n-1}, (\bar{\T{X}}(\T{\Ph})-\T{B}) \times_1 Q_1Q_1^\top \times_2 \cdots \times_N Q_N Q_N^\top\rangle \\
		& = 2\langle (Q_{n} Q^\top_{n} - I) \T{Y}_{n-1}^{(n)}, Q_{n} \T{C}_n^{(n)}\rangle \\
		& = 2 \mathrm{tr}(\T{Y}_{n-1}^{(n)} (Q_{n} Q^\top_{n} - I)Q_{n} \T{C}_n^{(n)} )=0 .
	\end{aligned}
	\nonumber
\end{equation}

Next, for Terms $\textcircled{1}$ and $\textcircled{2}$, we introduce more notation before we analyze the error.
Define $\T{Y}_0 = \T{B}$, and for each $n \in [N]$ let
\begin{equation}
	\label{eq:definition_Y_n}
	\T{Y}_n = \T{B} \times_1 Q_1 Q_1^\top \times_2 \cdots \times_n Q_n Q_n^\top.
	\nonumber
\end{equation}

Thus $\T{B} \times_1 Q_1Q_1^\top \times_2 \cdots \times_N Q_N Q_N^\top - \T{B} = \T{Y}_N - \T{Y}_0 = \sum_{n=1}^{N} (\T{Y}_n - \T{Y}_{n-1}).$
Each $n \in [N]$ in the sum satisfies
\begin{equation}
	\T{Y}_{n} - \T{Y}_{n-1} = \T{Y}_{n-1} \times_{n} (Q_{n} Q^\top_{n} - I).
	\nonumber
\end{equation}
This allows us to analyze each mode individually.

For Term $\textcircled{1}$, for any $n \in [N]$, we have
\begin{equation}
	\begin{aligned}
		\textcircled{1} & \leq \min_{n \in [N]} \Big\{\fnorm{Q_n Q_n^\top (\bar{\T{X}}(\T{\Ph})^{(n)} - \T{B}^{(n)})}^2 \Big\}\\
		& \leq \min_{n \in [N]} \Big\{r_n \cdot \norm{\bar{\T{X}}(\T{\Ph})^{(n)} - \T{B}^{(n)}}^2 \Big\},\\
	\end{aligned}
	\nonumber
\end{equation}
the RHS of which can be bounded from Section~\ref{sec:spectral-norm-upper-bound}.

As for Term $\textcircled{2}$, it can be bounded using a technique similar to \cite[Lemma B.1]{sun2019low}.
For each $n \in [N]$,
\begin{equation}
	\begin{aligned}
		\fnorm{\T{Y}_{n} - \T{Y}_{n-1}}^2 &= \fnorm{\T{B}\times_n (I - Q_{n} Q_{n}^\top)\times_{1} Q_{1}Q_{1}^\top \dots \times_n Q_{n-1} Q_{n-1}^\top}^2 \\
		&\leq \fnorm{\T{B}\times_{n} (I - Q_{n} Q_{n}^\top)}^2 \\
		& = \fnorm{(I - Q_{n}Q_{n}^\top)\T{B}^{(n)}}^2 \\
		& = \fnorm{(U_n U_n^\top - Q_{n}Q_{n}^\top)\T{B}^{(n)} + (U_n)_\perp (U_n)_\perp^\top \T{B}^{(n)}}^2 \\
		& =  \underbrace{\fnorm{(U_n U_n^\top - Q_n Q_n^\top) \T{B}^{(n)}}^2}_{\textcircled{4}} + \underbrace{\fnorm{(U_n)_\perp (U_n)_\perp^\top \T{B}^{(n)}}^2}_{\textcircled{5}} + \underbrace{2 \mathrm{tr} \big((\T{B}^{(n)})^\top Q_n Q_n^\top (U_n)_\perp (U_n)_\perp^\top \T{B}^{(n)}\big)}_{\textcircled{6}}, \\
	\end{aligned}
	\nonumber
\end{equation}
in which $\textcircled{5}$ and $\textcircled{6}$ vanish when $r_n^\mathrm{true} \leq r_n$, since $(U_n)_\perp = 0$.

In the general case:
\begin{itemize}
	\item The error between projections of $\T{B}^{(n)}$ onto $U_n$ and $Q_n$ is
	\begin{equation}
		\begin{aligned}
			\textcircled{4}  & \leq \sigma_1 (\T{B}^{(n)})^2 \fnorm{U_n U_n^\top - Q_n Q_n^\top}^2 \\
			& \leq  4 \sigma_1 (B^{(n)})^2 r_n \cdot \frac{[2 \sigma_1 (\T{B}^{(n)}) + \norm{\bar{\T{X}}^{(n)}(\T{\Ph}) - \T{B}^{(n)}}]^2 \cdot \norm{\bar{\T{X}}^{(n)}(\T{\Ph}) - \T{B}^{(n)}}^2}{[\sigma_{r_n}^2 (\T{B}^{(n)}) - \sigma_{r_n+1}^2 (\T{B}^{(n)})]^2}, \\
		\end{aligned}
		\nonumber
	\end{equation}
	in which the last inequality comes from Lemma~\ref{lem:davis-kahan-variant}.
	
	\item The residual
	$\textcircled{5} = \sum_{i=r_n + 1}^{I_n} \sigma_i^2 (\T{B}^{(n)}) = (\tau_{r_n}^{(n)})^2$ is the $r_n$-th tail energy for $\T{B}^{(n)}$.	
	
	\item The inner product of projections is
	\begin{equation}
		\begin{aligned}
			\textcircled{6} & \leq 2 \norm{(\T{B}^{(n)})^\top \T{B}^{(n)}}_2 \cdot \mathrm{tr} \Big[[Q_n^\top (U_n)_\perp]^\top Q_n^\top (U_n)_\perp \Big] \\
			& \leq 2 \sigma_1 (\T{B}^{(n)})^2 \cdot \fnorm{Q_n^\top (U_n)_\perp}^2 \\
			& \leq 2 \sigma_1 (\T{B}^{(n)})^2 \cdot \Big\{\frac{2[2 \sigma_1 (\T{B}^{(n)}) + \norm{\bar{\T{X}}^{(n)}(\T{\Ph}) - \T{B}^{(n)}}] \min(r_n^{1/2} \norm{\bar{\T{X}}^{(n)}(\T{\Ph}) - \T{B}^{(n)}}, \fnorm{\bar{\T{X}}^{(n)}(\T{\Ph}) - \T{B}^{(n)}})}{\sigma_{r_n}^2 (\T{B}^{(n)}) - \sigma_{r_n+1}^2 (\T{B}^{(n)})} \Big\}^2\\
			& \leq 8 \sigma_1 (\T{B}^{(n)})^2 r_n \cdot \frac{[2 \sigma_1 (\T{B}^{(n)}) + \norm{\bar{\T{X}}^{(n)}(\T{\Ph}) - \T{B}^{(n)}}]^2 \cdot \norm{\bar{\T{X}}^{(n)}(\T{\Ph}) - \T{B}^{(n)}}^2}{[\sigma_{r_n}^2 (\T{B}^{(n)}) - \sigma_{r_n+1}^2 (\T{B}^{(n)})]^2},\\
		\end{aligned}
		\nonumber
	\end{equation}
	in which the first inequality comes from $\mathrm{tr}(AB) \leq \lambda_1(A) \mathrm{tr}(B)$ for positive semidefinite matrices $A$, $B$, and the second from last inequality comes from Lemma~\ref{lem:davis-kahan-variant}.
\end{itemize}

Together, the above conclude the proof for Theorem~\ref{thm:main-theorem-formal}.

\subsection{Proof for Theorem~\ref{thm:main-theorem-informal}, the special case}

Recall the high-probability upper bound of Theorem~\ref{thm:main-theorem-formal}, Equation~\ref{eq:main-thm-bound} is
\begin{equation}
	\begin{aligned}
		\frac{\fnorm{\widehat{\T{X}}(\widehat{\T{P}}) - \T{B}}^2}{\fnorm{\T{B}}^2} \leq & \min_{n \in [N]} \Bigg\{r_n \cdot \Big[\frac{\fnorm{\bar{\T{X}}(\T{\Ph}) - \bar{\T{X}}}}{\fnorm{\T{B}}} + \epsilon \Big]^2 \Bigg\} \\
		& + \sum_{n=1}^N \frac{12 r_n \sigma_1 (\T{B}^{(n)})^2}{\fnorm{\T{B}}^2}  \cdot \Bigg\{\frac{[2 \sigma_1 (\T{B}^{(n)}) + \fnorm{\bar{\T{X}}(\T{\Ph}) - \bar{\T{X}}} + \epsilon \fnorm{\T{B}}]^2}{[\sigma_{r_n} (\T{B}^{(n)}) + \sigma_{r_n+1} (\T{B}^{(n)})]^2} \cdot \frac{[\fnorm{\bar{\T{X}}(\T{\Ph}) - \bar{\T{X}}} + \epsilon \fnorm{\T{B}}]^2}{[\sigma_{r_n} (\T{B}^{(n)}) - \sigma_{r_n+1} (\T{B}^{(n)})]^2} \Bigg\}\\
		& + \frac{1}{\fnorm{\T{B}}^2} \sum_{n=1}^{N} (\tau_{r_n}^{(n)})^2. \\
	\end{aligned}
	\nonumber
\end{equation}

We denote $f(n) \sim g(n)$ if there exist universal constants $C_1, C_2$ and $N_0$ such that $C_1 g(n) \leq f(n) \leq C_2 g(n)$ for each $n > N_0$.

For an order-$N$ cubical tensor $\T{B}$ with size $I_1 = \cdots = I_N = I$, multilinear rank $r_1^\mathrm{true} = \cdots = r_N^\mathrm{true} = r < I$, and target multilinear rank $(r, r, \ldots, r)$, we choose $\epsilon \sim \sqrt{\frac{N \log I}{I}}$.
In this scenario:
\begin{itemize}
	\item From Lemma~\ref{lem:propensity-error-on-x-bar}, we have
	\begin{equation}
		\frac{\fnorm{\bar{\T{X}}(\T{\Ph}) - \bar{\T{X}}}}{\fnorm{\T{B}}} \leq \frac{\alpha_\mathrm{sp}}{\sigma(-\gamma) \sigma(-\alpha)} \sqrt{4 e L_\gamma \tau \Big(\frac{1}{\sqrt{I_\square}}+\frac{1}{\sqrt{I_{\square^C}}}\Big)} \sim I^{-N/8} = O(\epsilon).
		\nonumber
	\end{equation}
	\item When $I \geq r N \log I$, $\epsilon \fnorm{\T{B}^{(n)}} = O(\frac{1}{\sqrt{r}} \fnorm{\T{B}^{(n)}}) = O(\sigma_1 (\T{B}^{(n)}))$ for every $n \in [N]$. 
	\item For every $n \in [N]$, the tail singular values $\sigma_j (\T{B}^{(n)}) = 0$ for $j = r+1, \ldots, I$.
\end{itemize}

Thus in the upper bound of Theorem~\ref{thm:main-theorem-formal}, Equation~\ref{eq:main-thm-bound} above:
\begin{itemize}
	\item The first term
	\begin{equation}
		\begin{aligned}
			\min_{n \in [N]} \Bigg\{r_n \cdot \Big[\frac{\fnorm{\bar{\T{X}}(\T{\Ph}) - \bar{\T{X}}}}{\fnorm{\T{B}}} + \epsilon \Big]^2 \Bigg\} = O(4 r \epsilon^2).
		\end{aligned}
		\nonumber
	\end{equation}
	\item In the proof of Theorem~\ref{thm:main-theorem-formal}, Term $\textcircled{5}$ and $\textcircled{6}$ vanish when $r_n^\mathrm{true} \leq r_n$, since $(U_n)_\perp = 0$. 
	Together with $\frac{\sigma_{1} (\T{B}^{(n)})}{\sigma_r (\T{B}^{(n)})} \leq \kappa$ for every $n \in [N]$, the second term in the upper bound of Equation~\ref{eq:main-thm-bound}
	\begin{equation}
		\begin{aligned}
			& \sum_{n=1}^N \frac{4 r_n \sigma_1 (\T{B}^{(n)})^2}{\fnorm{\T{B}}^2}  \cdot \Bigg\{\frac{[2 \sigma_1 (\T{B}^{(n)}) + \fnorm{\bar{\T{X}}(\T{\Ph}) - \bar{\T{X}}} + \epsilon \fnorm{\T{B}}]^2}{[\sigma_{r_n} (\T{B}^{(n)}) + \sigma_{r_n+1} (\T{B}^{(n)})]^2} \cdot \frac{[\fnorm{\bar{\T{X}}(\T{\Ph}) - \bar{\T{X}}} + \epsilon \fnorm{\T{B}}]^2}{[\sigma_{r_n} (\T{B}^{(n)}) - \sigma_{r_n+1} (\T{B}^{(n)})]^2} \Bigg\} \\
			& \leq \sum_{n=1}^N \frac{4 r \sigma_1 (\T{B}^{(n)})^2}{\fnorm{\T{B}}^2} \cdot \Bigg\{\frac{[4 \sigma_1 (\T{B}^{(n)})]^2}{\sigma_{r_n}^2 (\T{B}^{(n)})} \cdot \frac{(2 \epsilon \fnorm{\T{B}})^2}{\sigma_{r_n}^2 (\T{B}^{(n)})} \Bigg\} \\
			& \leq 256 N r \kappa^4 \epsilon^2.
		\end{aligned}
		\nonumber
	\end{equation}
	\item The third term $ \frac{1}{\fnorm{\T{B}}^2} \sum_{n=1}^{N} (\tau_{r_n}^{(n)})^2 = 0$.
\end{itemize}

Together, we have the simplified high-probability upper bound
\begin{equation}
	\begin{aligned}
		& \frac{\fnorm{\widehat{\T{X}}(\widehat{\T{P}}) - \T{B}}}{\fnorm{\T{B}}} \leq \epsilon \sqrt{4 r + 256 N r \kappa^4 } = O\Big( N \sqrt{\frac{r \log I}{I}} \Big).
	\end{aligned}
	\label{eq:cor-of-main-thm-bound}
	\nonumber
\end{equation}

As for the probability lower bound $\displaystyle 1 - \frac{C_1}{I_\square + I_{\square^C}} - \sum_{n=1}^N [I_n + I_{(-n)}] \exp \Big[- \frac{\epsilon^2 \fnorm{\T{B}}^2 \sigma(- \alpha)/ 2}{I_{(-n)} \psi^2 + \epsilon \psi \fnorm{\T{B}}/3}\Big]$:
\begin{itemize}
	\item With the universal constant $C_1 > 0$, we have $\frac{C_1}{I_\square + I_\square^C} = O(I^{-1})$.
	\item The sum of probabilities from the matrix Bernstein inequality
	\begin{equation}
		\begin{aligned}
			\sum_{n=1}^N [I_n + I_{(-n)}] \exp \Big[- \frac{\epsilon^2 \fnorm{\T{B}}^2 \sigma(- \alpha)/ 2}{I_{(-n)} \psi^2 + \epsilon \psi \fnorm{\T{B}}/3}\Big] 
			& = O(N I^{N-1} \cdot \exp \Big[- \frac{\epsilon^2 \fnorm{\T{B}}^2}{I^{N-1}}\Big]) \\
			& = O(N I^{N-1} \cdot \exp(-2 \epsilon^2 I)) \\
			& = O(N I^{N-1} \cdot I^{-2 N}) \\
			& = O(I^{-1}).
		\end{aligned}
		\nonumber
	\end{equation}
\end{itemize}
Thus the probability is at least $1 - I^{-1}$.
This concludes the proof for Theorem~\ref{thm:main-theorem-informal}.

\section{Gradient computation for \textsc{NonconvexPE} (Algorithm~\ref{alg:propensity_estimation_alt})}
\label{sec:gradients}
For any $y \in \RR$ and $X \in \RR^{m \times n}$, we define the scalar-to-matrix derivative $\partial y / \partial X$ as a matrix of the same size as $X$, with the $(i, j)$-th entry $[\partial y / \partial X]_{ij} = \partial y / \partial X_{ij}$ for every $i \in [m]$, $j \in [n]$.

Recall that in \textsc{NonconvexPE}, we are using the gradient descent algorithm to minimize
\begin{equation}
	\begin{aligned}
		f (\T{G}^{\T{A}}, \{U_n^{\T{A}}\}_{n \in [N]}) = \sum_{i_1}^{I_1}  \cdots \sum_{i_N}^{I_N} & -\Upomega_{i_1 \cdots i_N} \log \sigma[(\T{G}^{\T{A}} \times_1 U_1^{\T{A}} \times_2 \cdots \times_N U_N^{\T{A}})_{i_1 \cdots i_N}] \\
		& -(1-\Upomega_{i_1 \cdots i_N})\log\{1-\sigma[(\T{G}^{\T{A}} \times_1 U_1^{\T{A}} \times_2 \cdots \times_N U_N^{\T{A}})_{i_1 \cdots i_N}]\},
	\end{aligned}
	\label{eq:objective-function-for-gd}
\end{equation}
in which $\sigma$ is the link function.
Denote $\widehat{\T{A}} := \T{G}^{\T{A}} \times_1 U_1^{\T{A}} \times_2 \cdots \times_N U_N^{\T{A}}$.
When we use the logistic link function $\sigma(x) = 1/(1 + e^{-x})$, $f$ is the sum of entry-wise logistic losses between the true binary mask tensor $\Upomega$ and the observation probability tensor $\sigma(\widehat{\T{A}})$.

We first show the gradient of the logistic loss, and we omit the calculations. 
\begin{lemma}(Gradient of the logistic loss)
	\label{lem:gradient-of-log-loss}
	For the logistic loss $\ell(x, y) = -y \log \sigma(x) - (1 - y) \log (1 - \sigma(x))$, we have $\partial \ell / \partial x = \sigma(x) - y$.
\end{lemma}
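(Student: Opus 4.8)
The plan is a direct differentiation using the chain rule. First I would record the defining derivative identity for the logistic link, $\sigma'(x) = \sigma(x)(1-\sigma(x))$, which is the only structural fact needed. From this, $\frac{d}{dx}\log\sigma(x) = \sigma'(x)/\sigma(x) = 1-\sigma(x)$ and $\frac{d}{dx}\log(1-\sigma(x)) = -\sigma'(x)/(1-\sigma(x)) = -\sigma(x)$.

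Next I would substitute these into $\ell(x,y) = -y\log\sigma(x) - (1-y)\log(1-\sigma(x))$, treating $y$ as a constant, to obtain $\partial\ell/\partial x = -y(1-\sigma(x)) + (1-y)\sigma(x)$. Expanding and cancelling the two $y\sigma(x)$ terms of opposite sign leaves $\sigma(x) - y$, which is exactly the claimed expression. The whole argument is one short display.

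The only point worth flagging is that the clean closed form $\sigma(x)-y$ is special to the logistic $\sigma$: for a general differentiable link one only gets $\partial\ell/\partial x = \sigma'(x)\,(\sigma(x)-y)/(\sigma(x)(1-\sigma(x)))$, and the stated identity holds precisely because the logistic function makes that prefactor equal to $1$. Since Algorithm~\ref{alg:propensity_estimation_alt} and the surrounding discussion fix $\sigma$ to be logistic, I would phrase the lemma and its one-line proof in that setting. There is no genuine obstacle here; the computation is elementary and the only care needed is sign bookkeeping, which is why the paper omits it.
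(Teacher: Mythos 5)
Your computation is correct and is exactly the elementary chain-rule calculation the paper has in mind (the paper explicitly omits it), relying on the logistic identity $\sigma'(x)=\sigma(x)(1-\sigma(x))$ to cancel the prefactor. Your remark that the clean form $\sigma(x)-y$ is specific to the logistic link is also accurate and consistent with how the lemma is used in Appendix~\ref{sec:gradients}.
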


We then show Lemma~\ref{lem:gradient-of-scalar-to-matrix} for the chain rule of gradients of real-valued functions over matrices.
\begin{lemma}(Chain rule of scalar-to-matrix derivatives)
	\label{lem:gradient-of-scalar-to-matrix}
	Let $A$ be a matrix of size $m \times n$, and $g: \RR \rightarrow \RR$ be a continuously differentiable function.
	Define the real-valued function $\tilde{G}: \RR^{m \times n} \rightarrow \RR$ as 
	\begin{equation}
		\tilde{G}(A) = \sum_{i=1}^m \sum_{j=1}^n g(A_{ij}).
		\nonumber
	\end{equation}
	Then:
	\begin{enumerate}
		\item If $X, Y$ are matrices of size $m \times p$ and $p \times n$, respectively, and $A = X Y$, then 
		\begin{equation}
			\frac{\partial \tilde{G}(A)}{\partial X} = \frac{\partial \tilde{G}(A)}{\partial A} Y^\top.
			\nonumber
		\end{equation}
		\item If $X, Y, Z$ are matrices of size $m \times p$, $p \times q$ and $q \times n$, respectively, and $A = X Y Z$, then 
		\begin{equation}
			\frac{\partial \tilde{G}(A)}{\partial Y} = X^\top \frac{\partial \tilde{G}(A)}{\partial A} Z^\top.
			\nonumber
		\end{equation} 
	\end{enumerate}
\end{lemma}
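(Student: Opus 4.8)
The plan is to prove both identities by a direct entrywise application of the chain rule. First I would record the one elementary fact that underlies everything: since $\tilde{G}(A) = \sum_{i,j} g(A_{ij})$ is a finite sum, differentiation commutes with it, so the scalar-to-matrix derivative $\partial \tilde{G}(A)/\partial A$ is the $m\times n$ matrix with entries $[\partial \tilde{G}(A)/\partial A]_{ij} = g'(A_{ij})$. Continuous differentiability of $g$ is used only to guarantee that $g'$ exists and that the chain rule applies.

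\textbf{Part 1.} Here $A = XY$, so $A_{ij} = \sum_{k=1}^p X_{ik} Y_{kj}$ and hence $\partial A_{ij}/\partial X_{ab} = \mathbf{1}(i = a)\, Y_{bj}$. Applying the chain rule through the entries of $A$, for every $a \in [m]$ and $b \in [p]$,
\[
\frac{\partial \tilde{G}(A)}{\partial X_{ab}} \;=\; \sum_{i=1}^m \sum_{j=1}^n \frac{\partial \tilde{G}(A)}{\partial A_{ij}}\,\frac{\partial A_{ij}}{\partial X_{ab}} \;=\; \sum_{j=1}^n g'(A_{aj})\,Y_{bj} \;=\; \Bigl[\frac{\partial \tilde{G}(A)}{\partial A}\,Y^\top\Bigr]_{ab},
\]
and collecting these entries into a matrix gives $\partial \tilde{G}(A)/\partial X = (\partial \tilde{G}(A)/\partial A)\,Y^\top$.

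\textbf{Part 2.} I would run the same argument one level deeper. With $A = XYZ$ one has $A_{ij} = \sum_{k=1}^p \sum_{l=1}^q X_{ik} Y_{kl} Z_{lj}$, so $\partial A_{ij}/\partial Y_{ab} = X_{ia}\, Z_{bj}$, and therefore, for every $a \in [p]$ and $b \in [q]$,
\[
\frac{\partial \tilde{G}(A)}{\partial Y_{ab}} \;=\; \sum_{i=1}^m \sum_{j=1}^n g'(A_{ij})\, X_{ia}\, Z_{bj} \;=\; \Bigl[X^\top\,\frac{\partial \tilde{G}(A)}{\partial A}\,Z^\top\Bigr]_{ab},
\]
which is the claimed identity. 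Alternatively, Part 2 can be obtained from Part 1 (together with its left-multiplication analogue, got by applying Part 1 to $A^\top = Y^\top X^\top$) via the associativity $A = (XY)Z$; I would nonetheless present the direct computation above, since it is shorter and avoids the extra bookkeeping of the transpose reduction.

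There is no substantive obstacle: the whole proof is an exercise in the chain rule. The only thing requiring care is the index bookkeeping — keeping track of which index of the derivative is the free one, so that the Kronecker-delta factors collapse the double sum to exactly the matrix products $(\partial\tilde{G}/\partial A)\,Y^\top$ and $X^\top(\partial\tilde{G}/\partial A)\,Z^\top$ (with the transposes, and in this order) rather than to some transposed or reordered variant, and making sure the convention $[\partial y/\partial X]_{ij} = \partial y/\partial X_{ij}$ fixed at the start of the appendix is used consistently.
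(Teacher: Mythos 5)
Your proposal is correct and follows essentially the same route as the paper's proof: an entrywise chain-rule computation through the entries of $A$, using $\partial A_{ij}/\partial X_{ab}=\mathbf{1}(i=a)Y_{bj}$ in Part 1 and $\partial A_{ij}/\partial Y_{ab}=X_{ia}Z_{bj}$ in Part 2, then recognizing the resulting sums as the entries of $\frac{\partial \tilde{G}(A)}{\partial A}Y^\top$ and $X^\top\frac{\partial \tilde{G}(A)}{\partial A}Z^\top$. The only cosmetic difference is that you make the identification $[\partial\tilde{G}(A)/\partial A]_{ij}=g'(A_{ij})$ explicit, which the paper leaves implicit.
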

\begin{proof}
	We show our proof in a similar fashion as \cite[Lemma 2]{hong2020generalized}. 
	In Case 1, 
	\begin{equation}
		\frac{\partial A_{kl}}{\partial X_{ij}} = \left\{
		\begin{aligned}
			& Y_{jl}, \quad&& \mathrm{if}\; k = i \\
			& 0, \quad && \mathrm{if }\; k \neq i \\
		\end{aligned}
		\right.
		\nonumber
	\end{equation}
	for every $k, i \in [m]$, $l \in [n]$, $j \in [p]$. 
	Thus
	\begin{equation}
		\begin{aligned}
			\frac{\partial \tilde{G}(A)}{\partial X_{ij}} & = \sum_{k=1}^m \sum_{l=1}^n \frac{\partial \tilde{G}(A)}{\partial A_{kl}} \frac{\partial A_{kl}}{\partial X_{ij}} \\
			& = \sum_{l=1}^n \frac{\partial \tilde{G}(A)}{\partial A_{il}} Y_{jl} = \Big(\frac{\partial \tilde{G}(A)}{\partial A} Y^\top \Big)_{ij}.
		\end{aligned}
		\nonumber
	\end{equation}
	In Case 2, since $\displaystyle A_{kl} = \sum_{i=1}^m \sum_{j=1}^n X_{ki} Y_{ij} Z_{jl}$, we have $\displaystyle \frac{\partial A_{kl}}{\partial Y_{ij}} = X_{ki} Z_{jl}$. Thus
	\begin{equation}
		\begin{aligned}
			\frac{\partial \tilde{G}(A)}{\partial Y_{ij}} & =\sum_{k=1}^m \sum_{l=1}^n \frac{\partial \tilde{G}(A)}{\partial A_{kl}} \frac{\partial A_{kl}}{\partial Y_{ij}} \\
			& = \sum_{k=1}^m \sum_{l=1}^n X_{ki} \frac{\partial \tilde{G}(A)}{\partial A_{kl}} Z_{jl} \\
			& =\sum_{k=1}^m \sum_{l=1}^n (X^\top)_{ik} \frac{\partial \tilde{G}(A)}{\partial A_{kl}} (Z^\top)_{lj} \\
			& = \Big(X^\top \frac{\partial \tilde{G}(A)}{\partial A} Z^\top \Big)_{ij}.
		\end{aligned}
		\nonumber
	\end{equation}
	These conclude the proof for Lemma~\ref{lem:gradient-of-scalar-to-matrix} based on the definition of scalar-to-matrix derivatives.
\end{proof}

Finally, we show the gradients $\{\partial f / \partial U_n\}_{n \in [N]}$ and $\partial f / \partial \T{G}$ in Theorem~\ref{thm:gradient-of-f}.
\begin{theorem}(Gradients of the objective function in \textsc{NonconvexPE})
	\label{thm:gradient-of-f}
	For each $n \in [N]$, with 
	\begin{equation}
		\begin{aligned}
			f (\T{G}^{\T{A}}, \{U_n^{\T{A}}\}_{n \in [N]}) = \sum_{i_1}^{I_1} \sum_{i_2}^{I_2} \cdots \sum_{i_N}^{I_N} & -\Upomega_{i_1 \cdots i_N} \log \sigma[(\T{G}^{\T{A}} \times_1 U_1^{\T{A}} \times_2 \cdots \times_N U_N^{\T{A}})_{i_1 \cdots i_N}] \\
			& -(1-\Upomega_{i_1 \cdots i_N})\log\{1-\sigma[(\T{G}^{\T{A}} \times_1 U_1^{\T{A}} \times_2 \cdots \times_N U_N^{\T{A}})_{i_1 \cdots i_N}]\},
		\end{aligned}
		\nonumber
	\end{equation}
	and $\widehat{\T{A}} = \T{G}^{\T{A}} \times_1 U_1^{\T{A}} \times_2 \cdots \times_N U_N^{\T{A}}$, we have:
	\begin{enumerate}
		\item The gradient with respect to the factor matrix $U_n$
		\begin{equation}
			\begin{aligned}
				\frac{\partial f}{\partial U_n^{\T{A}}} = \frac{\partial f}{\partial \widehat{\T{A}}^{(n)}} \cdot \big(U_{n+1}^{\T{A}} \otimes U_{n+2}^{\T{A}} \otimes \cdots \otimes U_{N}^{\T{A}} \otimes U_1^{\T{A}} \otimes U_2^{\T{A}} \otimes \cdots \otimes U_{n-1}^{\T{A}} \big) \cdot [(\T{G}^{\T{A}})^{(n)}]^\top.
				\nonumber
			\end{aligned}
		\end{equation}
		\item The gradient with respect to the unfolded core tensor $(\T{G}^{\T{A}})^{(n)}$
		\begin{equation}
			\begin{aligned}
				\frac{\partial f}{\partial (\T{G}^{\T{A}})^{(n)}} = (U_n^{\T{A}})^\top \cdot \frac{\partial f}{\partial \widehat{\T{A}}^{(n)}} \cdot \big(U_{n+1}^{\T{A}} \otimes U_{n+2}^{\T{A}} \otimes \cdots \otimes U_{N}^{\T{A}} \otimes U_1^{\T{A}} \otimes U_2^{\T{A}} \otimes \cdots \otimes U_{n-1}^{\T{A}} \big).
			\end{aligned}
			\nonumber
		\end{equation}
	\end{enumerate}
\end{theorem}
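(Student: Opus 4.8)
The plan is to reduce the computation to two ingredients already available in the excerpt: the entrywise form of the logistic-loss gradient (Lemma~\ref{lem:gradient-of-log-loss}) and the chain rule for scalar-to-matrix derivatives through a single matrix product (Lemma~\ref{lem:gradient-of-scalar-to-matrix}). First I would write the objective as a sum of entrywise losses: with $\ell(x,y) := -y\log\sigma(x) - (1-y)\log(1-\sigma(x))$ we have $f(\T{G}^{\T{A}},\{U_n^{\T{A}}\}_{n\in[N]}) = \sum_{i_1,\dots,i_N}\ell\bigl(\widehat{\T{A}}_{i_1\cdots i_N},\Upomega_{i_1\cdots i_N}\bigr)$ where $\widehat{\T{A}} = \T{G}^{\T{A}}\times_1 U_1^{\T{A}}\times_2\cdots\times_N U_N^{\T{A}}$. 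By Lemma~\ref{lem:gradient-of-log-loss}, $\partial\ell/\partial x = \sigma(x)-y$, so the tensor $\partial f/\partial\widehat{\T{A}}$ has $(i_1,\dots,i_N)$-th entry $\sigma(\widehat{\T{A}}_{i_1\cdots i_N})-\Upomega_{i_1\cdots i_N}$, and $\partial f/\partial\widehat{\T{A}}^{(n)}$ is by definition its mode-$n$ unfolding. In particular, viewed as a function of the matrix $\widehat{\T{A}}^{(n)}$, $f$ has exactly the form of the map $\tilde G$ in Lemma~\ref{lem:gradient-of-scalar-to-matrix} (a fixed scalar function applied entrywise and summed).

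Next I would establish the key unfolding identity for the Tucker model:
\[
\widehat{\T{A}}^{(n)} \;=\; U_n^{\T{A}}\,(\T{G}^{\T{A}})^{(n)}\,\bigl(U_{n+1}^{\T{A}}\otimes U_{n+2}^{\T{A}}\otimes\cdots\otimes U_N^{\T{A}}\otimes U_1^{\T{A}}\otimes\cdots\otimes U_{n-1}^{\T{A}}\bigr)^{\!\top},
\]
which is the mode-$n$ analogue of Equation~\ref{eq:tucker_unfolding_original}, with the cyclic ordering of the Kronecker factors fixed by the paper's convention for $\T{X}^{(n)}$ (the matrix whose columns are the mode-$n$ fibers). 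This follows either by tracking indices through the definition of the $n$-mode product and the \texttt{reshape}/column-ordering convention, or by applying Lemma~\ref{lem:tucker_unfolding} to the tensor obtained by cyclically permuting the modes so that mode $n$ comes first. Setting $K := U_{n+1}^{\T{A}}\otimes\cdots\otimes U_N^{\T{A}}\otimes U_1^{\T{A}}\otimes\cdots\otimes U_{n-1}^{\T{A}}$, the display reads $\widehat{\T{A}}^{(n)} = U_n^{\T{A}}\,(\T{G}^{\T{A}})^{(n)}\,K^{\top}$.

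Finally, I would apply Lemma~\ref{lem:gradient-of-scalar-to-matrix} twice, each time holding the other factors fixed (which is precisely what a partial derivative does, so the single-product hypotheses of the lemma apply). For part~1, factor $\widehat{\T{A}}^{(n)} = XY$ with $X=U_n^{\T{A}}$ and $Y=(\T{G}^{\T{A}})^{(n)}K^{\top}$; Case~1 gives $\partial f/\partial U_n^{\T{A}} = \bigl(\partial f/\partial\widehat{\T{A}}^{(n)}\bigr)Y^{\top} = \bigl(\partial f/\partial\widehat{\T{A}}^{(n)}\bigr)K\,[(\T{G}^{\T{A}})^{(n)}]^{\top}$, the claimed expression. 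For part~2, factor $\widehat{\T{A}}^{(n)} = XYZ$ with $X=U_n^{\T{A}}$, $Y=(\T{G}^{\T{A}})^{(n)}$, $Z=K^{\top}$; Case~2 gives $\partial f/\partial (\T{G}^{\T{A}})^{(n)} = X^{\top}\bigl(\partial f/\partial\widehat{\T{A}}^{(n)}\bigr)Z^{\top} = (U_n^{\T{A}})^{\top}\bigl(\partial f/\partial\widehat{\T{A}}^{(n)}\bigr)K$, again as claimed. The main obstacle is the middle step: getting the mode-$n$ unfolding identity, and especially the cyclic Kronecker ordering $U_{n+1}^{\T{A}}\otimes\cdots\otimes U_N^{\T{A}}\otimes U_1^{\T{A}}\otimes\cdots\otimes U_{n-1}^{\T{A}}$, exactly consistent with the paper's definition of $\T{X}^{(n)}$ and its column/\texttt{reshape} conventions; once that identity is pinned down, both gradient formulas drop out of Lemma~\ref{lem:gradient-of-scalar-to-matrix} with no further work beyond observing that differentiating in one factor treats the rest as constants.
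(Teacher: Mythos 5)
Your proposal is correct and follows essentially the same route as the paper's proof: reduce $f$ to an entrywise-summed scalar loss so that $\partial f/\partial\widehat{\T{A}}^{(n)}$ is available (via Lemma~\ref{lem:gradient-of-log-loss}), invoke the mode-$n$ Tucker unfolding identity $\widehat{\T{A}}^{(n)} = U_n^{\T{A}}(\T{G}^{\T{A}})^{(n)}\bigl(U_{n+1}^{\T{A}}\otimes\cdots\otimes U_N^{\T{A}}\otimes U_1^{\T{A}}\otimes\cdots\otimes U_{n-1}^{\T{A}}\bigr)^{\top}$, and then apply the two cases of Lemma~\ref{lem:gradient-of-scalar-to-matrix} with $A=\widehat{\T{A}}^{(n)}$. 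The only difference is cosmetic: the paper simply cites the unfolding identity (with its cyclic Kronecker ordering) from the literature, whereas you sketch how one would verify it, which is fine.
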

\begin{proof}
	With the Tucker decomposition of $\widehat{\T{A}}$, we have $\widehat{\T{A}}^{(n)} = U_n^{\T{A}} \cdot (\T{G}^{\T{A}})^{(n)} \cdot \big(U_{n+1}^{\T{A}} \otimes U_{n+2}^{\T{A}} \otimes \cdots \otimes U_{N}^{\T{A}} \otimes U_1^{\T{A}} \otimes U_2^{\T{A}} \otimes \cdots \otimes U_{n-1}^{\T{A}} \big)^\top$ for the unfolding in each of the $n \in [N]$ \cite{de2000multilinear}.
	Thus we can apply each case of Lemma~\ref{lem:gradient-of-scalar-to-matrix} to the corresponding case here, with $A$ to be $\widehat{\T{A}}^{(n)}$.  
\end{proof} 

With Lemma~\ref{lem:gradient-of-log-loss}, we have $\partial f / \partial \widehat{\T{A}} = \sigma(\widehat{\T{A}}) - \Upomega$ for the logistic link function $\sigma$. 
This can be inserted into Theorem~\ref{thm:gradient-of-f} for the gradients $\{\partial f / \partial U_n\}_{n \in [N]}$ and $\partial f / \partial \T{G}$, but note that Theorem~\ref{thm:gradient-of-f} does not rely on this result.

\section{Sensitivity of propensity estimation algorithms to hyperparameters}
\label{sec:algorithm_sensitivity}
We study the sensitivities of \textsc{ConvexPE} (Algorithm~\ref{alg:propensity_estimation_provable}) and \textsc{NonconvexPE} (Algorithm~\ref{alg:propensity_estimation_alt}) to their respective hyperparameters. 

The most important hyperparameters in \textsc{ConvexPE} are $\tau$ and $\gamma$.
Ideally, we want to set $\tau=\theta$ and $\gamma=\alpha$; this is not possible in practice, though, since we do not know the $\theta$ and $\alpha$ of the true parameter tensor $\T{A}$. 
In the setting of the third experiment in Section~\ref{sec:experiments-synthetic} of the main paper, we study the relationship between relative errors of propensity estimates and the ratios $\tau / \theta$ and $\gamma / \alpha$ in Figure~\ref{fig:prox-prox-hyperparameter-sensitivity}.
We can see that the performance is much more sensitive to $\tau$ than $\gamma$, and a slight deviation of $\tau / \theta$ from 1 results in a much larger propensity estimation error. 

\begin{figure}
	\centering
\begin{subfigure}[t]{.28\linewidth}
	\includegraphics[width=\linewidth]{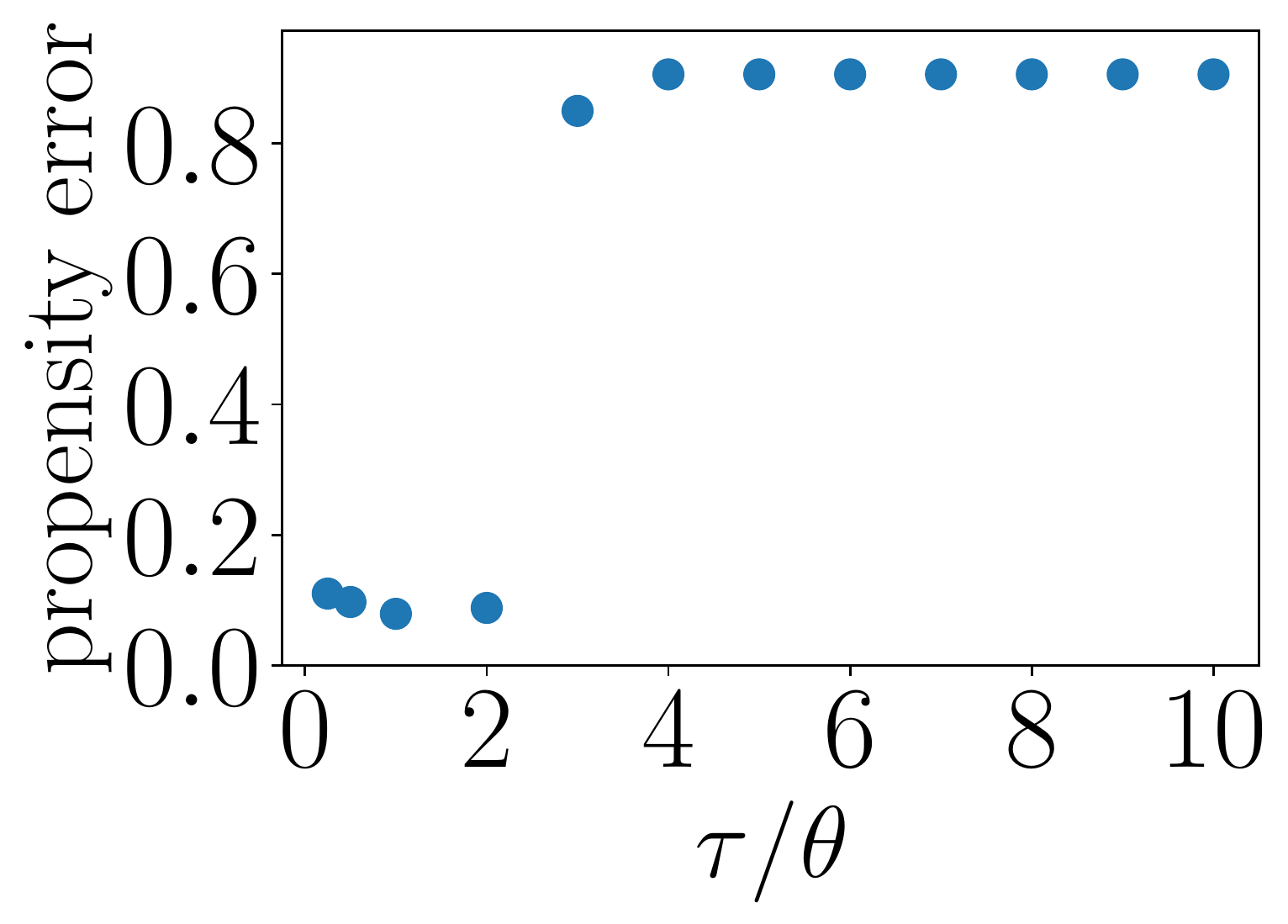}
	\caption{sensitivity to $\tau$ when $\gamma / \alpha=1$}
\label{fig:prox-prox-hyperparameter-sensitivity-tau}	
\end{subfigure}
\hspace{.03\linewidth}
\begin{subfigure}[t]{.28\linewidth}
	\includegraphics[width=\linewidth]{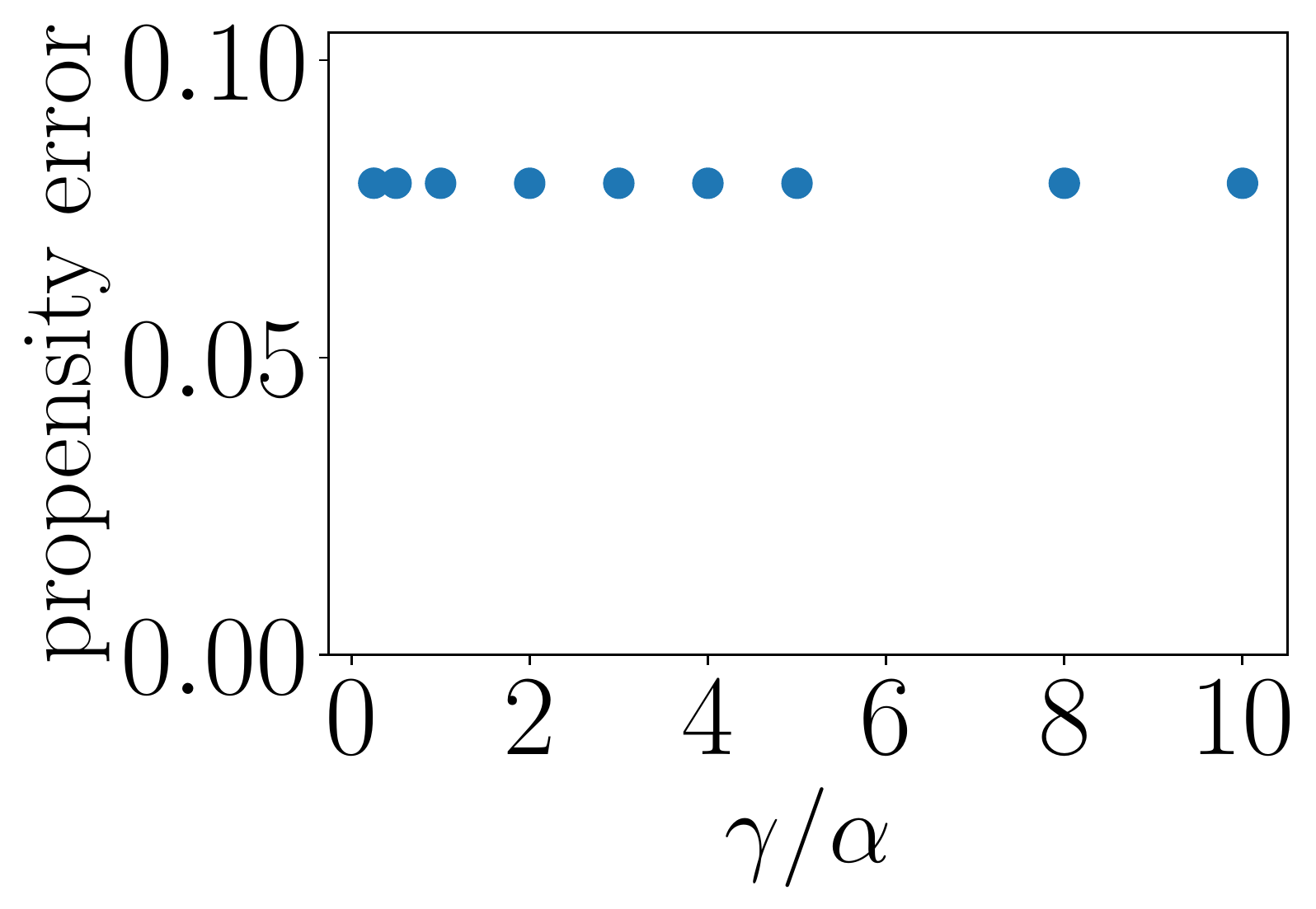}
	\caption{sensitivity to $\gamma$ when $\tau / \theta=1$}
\label{fig:prox-prox-hyperparameter-sensitivity-gamma}	
\end{subfigure}
	\caption{Sensitivity of \textsc{ConvexPE} (Algorithm~\ref{alg:propensity_estimation_provable}) to hyperparameters $\tau$ and $\gamma$.}
	\label{fig:prox-prox-hyperparameter-sensitivity}
\end{figure}

The most important hyperparameter in \textsc{NonconvexPE} is the step size $t$.
We show both the convergence and the change of propensity relative errors of \textsc{NonconvexPE} at several step sizes in Figure~\ref{fig:gd-hyperparameter-sensitivity}.
We can see that the relative errors of propensity estimates steadily decrease at all step sizes at which the gradient descent converges.
Also, the respective rankings of relative losses and propensity errors at different step sizes are the same across all iterations, indicating that the relative loss is a good surrogate metric for us to seek a good propensity estimate. 
Thus practitioners can select the largest step size at which \textsc{NonconvexPE} converges; it is $5 \times 10^{-6}$ in our practice.
This is much easier than the selection of $\tau$ in \textsc{ConvexPE}. 

\begin{figure}
\centering
\begin{minipage}[b]{0.75\linewidth}		
\begin{subfigure}[t]{.45\linewidth}
\includegraphics[width=\linewidth]{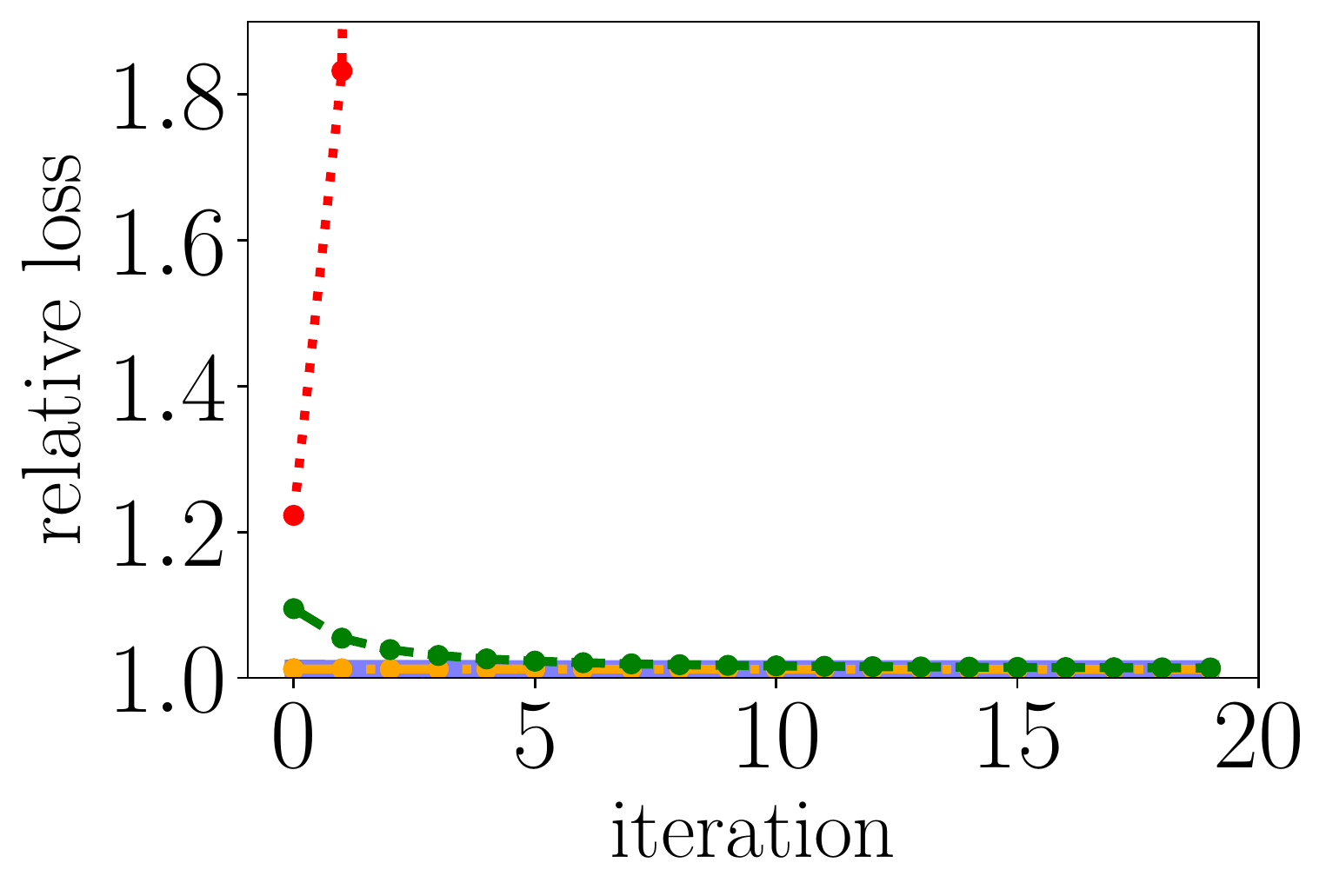}
\caption{change of relative loss}
\label{fig:gd-hyperparameter-sensitivity-loss}
\end{subfigure}
	\hspace{.03\linewidth}	
\begin{subfigure}[t]{.45\linewidth}
\includegraphics[width=\linewidth]{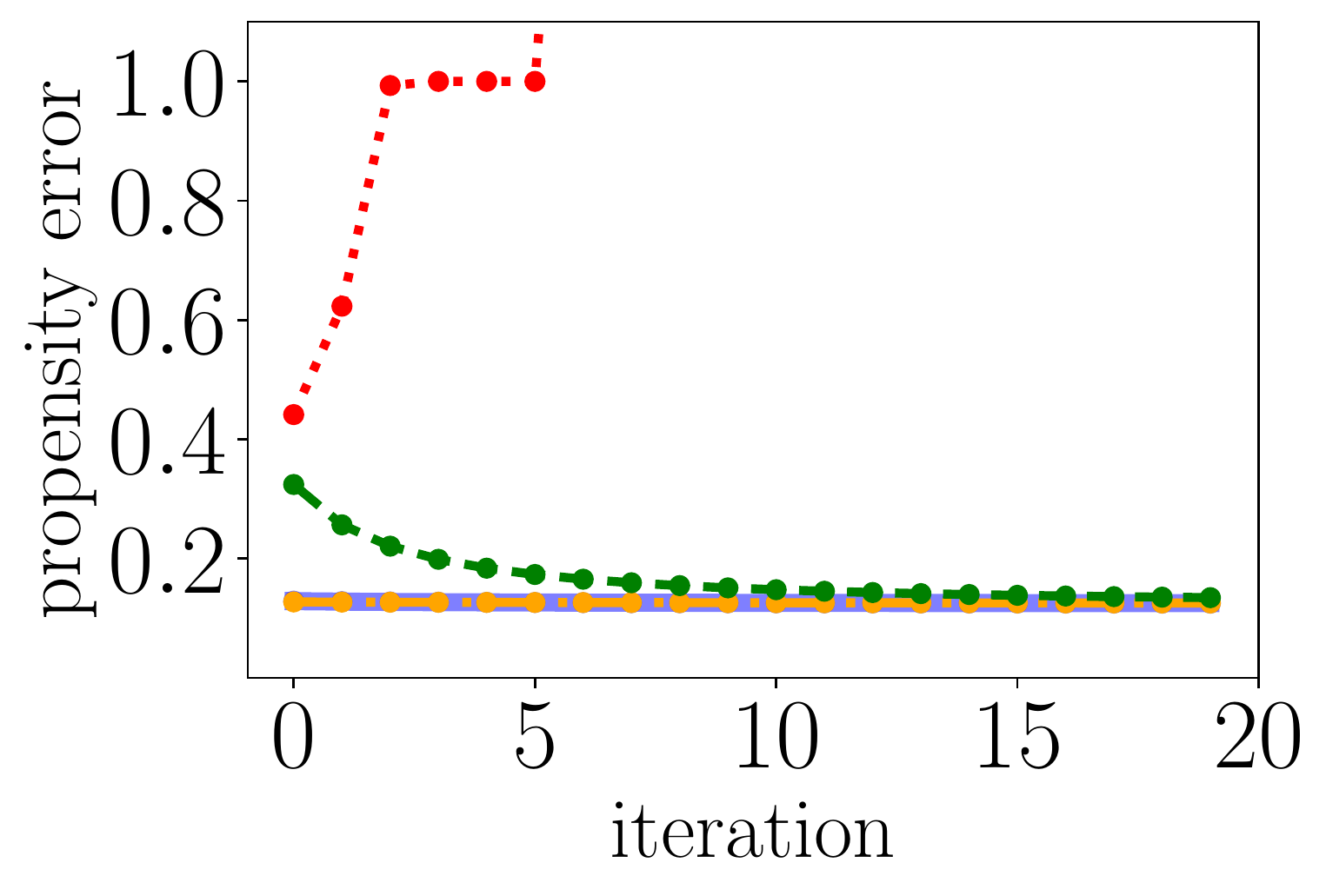}
\caption{change of propensity estimation error}
\label{fig:gd-hyperparameter-sensitivity-error}
\end{subfigure}
\end{minipage}
\begin{minipage}[b]{.16\linewidth}
	\stackunder[1pt]{\includegraphics[width=\linewidth]{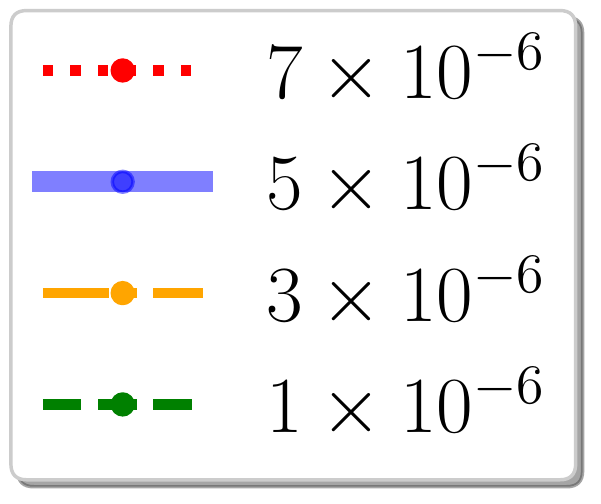}}{}
\end{minipage}
	
	\caption{Sensitivity of \textsc{NonconvexPE} (Algorithm~\ref{alg:propensity_estimation_alt}) to step size $t$.
		Since the objective function is the logistic loss between the mask tensor $\Upomega$ and the parameter tensor $\T{A}$, the relative loss in Figure~\ref{fig:gd-hyperparameter-sensitivity-loss} is the ratio of actual logistic loss to the best logistic loss computed from the true parameter tensor.
		Propensity error in Figure~\ref{fig:gd-hyperparameter-sensitivity-error} is $\fnorm{\widehat{\T{P}} - \T{P}} / \fnorm{\T{P}}$, the same as in the main paper.}
	\label{fig:gd-hyperparameter-sensitivity}
\end{figure}

\end{document}